\documentclass[final, 10pt, DIV=12]{scrartcl}

\RequirePackage[
   natbib=true,
  style=authoryear,
  backend=biber,
  giveninits=true,
  uniquename=init
  ]{biblatex}

\renewcommand{\cite}[1]{\citep{#1}}

\usepackage{enumitem}
\usepackage{algorithm}
\usepackage{algpseudocode}
\usepackage{csquotes}
\usepackage{amsmath}
\usepackage{tikz}
\usetikzlibrary{positioning, matrix}

\usepackage{amssymb}        
\usepackage{amsthm}

\newif\ifarxivversion%
\arxivversiontrue%

\makeatletter
\def\@makechapterhead#1{
  {\parindent \z@ \raggedright \normalfont
   \Huge\bfseries \thechapter\quad #1
   \par\nobreak
   \vskip 20\p@
}}
\def\@makeschapterhead#1{
  {\parindent \z@ \raggedright \normalfont
   \Huge\bfseries #1
   \par\nobreak
   \vskip 20\p@
}}
\makeatother

\ifarxivversion%
   \theoremstyle{plain}
   \newtheorem{lemma}{Lemma}
   \newtheorem{theorem}{Theorem}
   \theoremstyle{definition}
   \newtheorem{definition}{Definition}
\else
   \theoremstyle{acmdefinition}
\fi
\newtheorem*{rem}{Remark}
\newtheorem{claim}{Claim}

\ifcsname DeclareNewFloatType\endcsname
\DeclareNewFloatType{listing}{}
\floatname{listing}{Program}
\fi

\def\ST{\ensuremath{\mathit{TRUE}}}
\def\SI{\ensuremath{\mathit{INIT}}}

\def\FORGET{\ensuremath{\mathit{FORGET}}}
\def\JOIN{\ensuremath{\mathit{JOIN}}}
\def\CTX{\ensuremath{\mathit{CTX}}}
\DeclareMathOperator{\CNF}{\mathit{CNF}}
\def\KT{\ensuremath{\mathit{KT}}}

\def\MSO{\ensuremath{\mathit{MSO}_2}}

\def\Adj{\ensuremath{\mathit{Adj}}}
\def\Edge{\ensuremath{\mathit{Edge}}}
\def\Nbr{\ensuremath{\mathit{Nbr}}}
\def\false{\ensuremath{\mathit{false}}}
\def\true{\ensuremath{\mathit{true}}}

\def\Apply{\ensuremath{\mathit{Apply}}}

\def\vert{\ensuremath{\mathrm{\mathbf{vert}}}}

\DeclareMathOperator{\nodedp}{\mathit{Node\_decision\_procedure}}
\DeclareMathOperator{\getAllCons}{\mathit{Get\_all\_possible\_assignments}}
\DeclareMathOperator{\tw}{\mathit{tw}}
\DeclareMathOperator{\pw}{\mathit{pw}}
\DeclareMathOperator{\node}{\mathit{node}}

\newcommand{\kvo}{\ensuremath{k_{\mathit{vo}}}}
\newcommand{\keo}{\ensuremath{k_{\mathit{eo}}}}
\newcommand{\kvs}{\ensuremath{k_{\mathit{vs}}}}
\newcommand{\kes}{\ensuremath{k_{\mathit{es}}}}
\newcommand{\domPhiG}{\ensuremath{\mathcal{D}_{\varphi, G}}}
\newcommand{\funcPhiG}{\ensuremath{\mathcal{F}_{\varphi, G}}}

\ifx\citet\undefined\else

\DeclareDelimFormat[textcite]{multinamedelim}{\addcomma\space}
\DeclareDelimFormat[textcite]{finalnamedelim}{\space and~}

\fi

\usepackage{orcidlink}
\title{Parameterized Complexity Of Representing Models Of MSO Formulas}

\author{Petr Kučera\,\orcidlink{0000-0002-7512-6260}\thanks{
  Department of Theoretical Computer Science and Mathematical
  Logic, Faculty of Mathematics and Physics, Charles University,
  Prague,
  Czech Republic,
  ORCID:\ 0000{-}0002{-}7512{-}6260,
  kucerap@ktiml.mff.cuni.cz
   }
   \and
Petr Martinek\thanks{
  Department of Theoretical Computer Science and Mathematical
  Logic, Faculty of Mathematics and Physics, Charles University,
  Prague,
  Czech Republic,
  petrmartinek4@seznam.cz
}
}

\begin{document}

\maketitle

\begin{abstract}
   Monadic second order logic (\MSO{}) plays an important role in parameterized
   complexity due to the Courcelle's theorem. This theorem states that the
   problem of checking
   if a given graph has a property specified by a given \MSO{} formula
   can be solved by a parameterized linear time algorithm with respect to the
   treewidth of the graph and the size of the formula.
   We extend this result
   by showing that models of \MSO{} formula with free variables can be
   represented with a decision diagram whose size is parameterized linear in
   the above mentioned parameter. In particular, we show a parameterized linear
   upper bound on the size of a sentential decision diagram (SDD) when treewidth
   is considered and a parameterized linear upper bound on the size of an
   ordered binary decision diagram (OBDD) when considering the pathwidth in the
   parameter. In addition, building on a lower bound on the size of OBDD
   by~\citet{razgon2014obdds}, we show that there is an \MSO{} formula and a class of graphs with
   bounded treewidth which do not admit an OBDD with the size parameterized by
   the treewidth. Our result offers a new perspective on the Courcelle's theorem
   and connects it to the area of knowledge representation.
\end{abstract}


\section{Introduction}

Monadic second order (\MSO{}) logic is a second-order logic in which the
second-order quantification is limited to quantifying over set variables. It is
suitable for formulating graph properties where sets may represent subset of
vertices or edges. Courcelle's theorem states, that graph properties represented
using \MSO{} formulas can be decided for any graph using a tree automaton in
linear time and using a state space that has a constant size, assuming that the
graph has bounded treewidth and that the \MSO{} formula has bounded
size~\cite{COURCELLE199012}. This makes \MSO{} a convenient tool for showing
fixed parameterized tractability of problems that can be modeled with
graphs~\cite{FG06}.

\citet{OKM23} considered the problem of representing the models of a \MSO{}
formula \(\varphi\) on a given graph \(G=(V, E)\) by an ordered binary decision
diagram (OBDD). The idea was to construct OBDDs for the atomic formulas and then
compose them by applying logical connectives and quantification on the OBDDs.
They checked this approach experimentally on concrete formulas. In the light of
Courcelle's theorem, it is natural to ask if the fixed parameter tractability
applies also to this approach and, in particular, if the size of the resulting
OBDD is fixed parameter linear with parameter that includes the treewidth of the
graph and the size of the formula. We show that this is true for the pathwidth,
but for the treewidth, we need to consider a different type of decision
diagrams. Unlike~\cite{OKM23}, we use a top-down approach in our construction
which follows a dynamic algorithm for deciding satisfiability of a \MSO{}
formula, similar to the one presented by~\citet{COURCELLE199012}.

OBDDs were introduced by~\citet{Bryant86} as a tractable
data structure which allows efficient answering of queries such as consistency
checking, model counting and enumeration, and also efficient combination of two
OBDDs with an arbitrary logical connective. Since their introduction, OBDDs have
been widely used in hardware verification~\cite{CM06} and model
checking~\cite{CGP99}.

OBDDs were used for the task of checking the satisfiability of an \MSO{} formula
by~\cite{henriksen1995mona} as part of Mona model checker. They served as a
concise way of representing the transition function of the underlying automaton.
Another \MSO{} solver was introduced in~\cite{LRRS12}. A practical approach for
solving \MSO{} formulas modeling problems such as vertex cover and dominating
set was considered by~\citet{KL09}.
\citet{LMM18} considered using QBF formulas
as an alternative representation for several problems modeled by \MSO{} formula.

Several types of representations generalizing OBDDs have been introduced in the
area of knowledge compilation (\citet{darwiche2002knowledge} give a comprehensive list). The
goal was to obtain a type of representation that is more succinct than OBDDs
(i.e., offers possibly smaller representation of the same function), while
preserving their good properties with respect to query answering and
transformations. Among these representation languages, sentential decision
diagrams (SDD) were introduced by~\citet{darwiche2011sdd}. They have similar
properties as OBDDs with respect to queries and transformations while being
strictly more succinct as shown by~\citet{B16}. One of the main differences
between OBDDs and SDDs is in their structure. While OBDDs follow a fixed linear
order of variables, SDDs have a tree-like structure based on the notion of a
v-tree. As such, SDDs are more suitable for representing models of an \MSO{}
formula on a graph parameterized by its treewidth.

This is also reflected by our results. In particular, we show a fixed parameter
linear upper bound on the size of an SDD representing models of an \MSO{}
formula \(\varphi\) on a graph \(G\) where the parameter is the sum of the
treewidth of \(G\) and the length of \(\varphi\). We show a similar result for
OBDDs where we consider pathwidth as part of the parameter instead of the
treewidth.
In addition, we use the lower
bound of~\citet{razgon2014obdds} to prove that a bound parameterized by
treewidth cannot be provided for OBDDs. The difference in the structure between
OBDDs and SDDs thus cannot be overcome.

Our result is also connected to other parameterized bounds on the sizes of SDDs and OBDDs.
It is known that if the primal graph a conjunctive normal form (CNF) formula has
treewidth bounded by \(w\), then there is an equivalent SDD of size \(O(n2^w)\) where \(n\)
is the number of variables~\cite{darwiche2011sdd}. A similar bound is known for OBDDs and
pathwidth and these bounds can be generalized to circuits as well~\cite{AMS18}.
By~\cite{razgon2014obdds}, size of an OBDD cannot be parameterized by the
treewidth of the primal graph of the input CNF formula.

Finally, let us note that both OBDDs and SDDs represent boolean functions while
variables in an \MSO{} formula \(\varphi\) represent vertices, edges, or sets of
vertices or edges of a given graph. To represent a single \MSO{} variable \(x\),
we thus use several propositional variables to represent the possible
assignments to \(x\). To have these propositional encodings of models in
one-to-one correspondence with the actual models of \(\varphi\), we must filter
out the propositional assignments that do not correspond to a valid model of
\(\varphi\).

Our paper is organized as follows. We start by introducing the notation and
definitions needed to present our results in Section~\ref{sec:background}.
Section~\ref{sec:results} gives a formal statements of our results. We then
describe a dynamic programming procedure for deciding an \MSO{} formula in
Section~\ref{sec:decproc}. We use this procedure in the constructions showing
upper bounds on the sizes of SDDs (Section~\ref{sec:sdd-ub}) and OBDDs
(Section~\ref{sec:obdd-ub}). We give a lower bound on the size of OBDDs when
treewidth is used as part of the parameter in Section~\ref{sec:obdd-lb}.
Section~\ref{sec:notes} contains additional notes on the constructions which
include a concrete upper bound on the size of a decision diagram for a formula in prenex
form. Finally, Section~\ref{sec:conclusion} contains a few closing
remarks and directions for future research.

%


\section{Background}\label{sec:background}

In this chapter, we will introduce the notions and notation we shall use to
present and prove our results. 

\subsection{Graphs And Their Decompositions}

We assume that the reader is familiar with the basics of the graph theory. For
reference, see for instance~\cite{B98}. In this section, we shall recall the
notions of tree and path decompositions and their widths.

A \emph{tree decomposition} of an undirected graph \(G=(V, E)\) is defined as a
triple
\(T=(V_T, E_T, \ell)\) where graph \((V_T, E_T)\) is a tree
and \(\ell:V_T\to 2^V\) is a labeling function which satisfies the following
properties.

\begin{enumerate}[label={(T\arabic*)}]
   \item\label{t2} For every edge $\{v, w\} \in E$ there is a node $x \in V_T$
      for which $\{v,w\} \subseteq \ell(x)$.
   \item\label{t3} For $v \in V$, let $T^{(v)}$ be the subgraph of \(T\) induced
      by the set of nodes $\{x \in V_T \mid v \in
      \ell(x) \}$. Then for each $v \in V(G)$, $T^{(v)}$ is non-empty and
      connected.
\end{enumerate}

The \emph{width} of tree decomposition 
\(T\) is defined as
\(\max_{x\in V_T} |\ell(x)|-1\) (i.e., the maximum size of a label set
subtracted by \(1\) to have treewidth of a tree equal to \(1\)). The \emph{treewidth} of \(G\), denoted as
\(\tw(G)\), is then the minimum possible width of a tree decomposition of \(G\).

If tree \((V_T, E_T)\) is a path (i.e., all nodes have degree at most \(2\)),
then \(T\) is a \emph{path decomposition} of \(G\). The \emph{pathwidth} of
\(G\), denoted as \(\pw(G)\) is the minimum width of any path decomposition
of \(G\).

To simplify the presentation of our algorithm, we shall consider nice
decompositions. A \emph{nice tree decomposition} of \(G\) is a quadruple \(T=(V_T, E_T,
\ell, r)\) where \((V_T, E_T)\) is a rooted binary tree with root \(r\in V_T\)
and \(\ell:V_T\to 2^V\) is a labeling function satisfying the following
properties.

\begin{enumerate}[label={(N\arabic*)}]
   \item\label{n1} $(V_T, E_T, \ell)$ is a tree decomposition of $G$
   \item\label{n2} $\ell(r) = \emptyset$
   \item\label{n3} Each node \(s\in V_T\) has at most two children and in addition:
      \begin{enumerate}[leftmargin=2em, label={(N3.\arabic*)}]
         \item\label{n31} If $s$ has a single child $t$, then the size of
            symmetric difference of $\ell(s)$ and $\ell(t)$ is exactly one.
         \item\label{n32} If $s$ has two children $t$ and $u$, then $\ell(s) = \ell(t) = \ell(u)$.
      \end{enumerate}
\end{enumerate}

If \(T=(V_E, E_T, \ell, r)\) is a nice tree decomposition of graph \(G=(V, E)\),
then each node \(s\in V_T\) has one of the following types.

\begin{description}
   \item[Leaf] node $s$ has zero child nodes.
   \item[\enquote{Introduce \(v\)}] node $s$ has a single
      child node $t$, such that $\ell(s) = \ell(t) \cup \{v\}$ for vertex \(v\in
      V\).
   \item[\enquote{Forget \(v\)}] node \(s\) has a single
      child node $t$, such that $\ell(s) = \ell(t) \setminus \{v\}$ for vertex \(v\in V\).
   \item[Join] node $s$ has two child nodes \(t\) and \(u\), such that
      \(\ell(s)=\ell(t)=\ell(u)\).
\end{description}

Nice tree decomposition was introduced by \citet{kloks1994treewidth} who shows
that if \(G\) has \(n\) vertices, then there is a nice tree decomposition of
\(G\) of width \(\tw(G)\) with \(4n\) nodes. Condition~\ref{n2} was not included
in the definition by~\cite{kloks1994treewidth}, we can satisfy it by adding at
most \(\tw(G)\leq n-1\) forget nodes on top of the root. Thus we obtain the
following lemma.

\begin{lemma}\label{strongnice}
   Let $G=(V, E)$ be an undirected graph on $n>0$ vertices and with treewidth
   of $w$. Then $G$ has a nice tree decomposition $T$ of width $w$, containing at
   most $5n$ nodes.
\end{lemma}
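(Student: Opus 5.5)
The plan is to start from the result of \citet{kloks1994treewidth} and add nodes to enforce condition~\ref{n2}. Given $G$ with $n>0$ vertices and $\tw(G)=w$, that result gives a rooted binary tree decomposition $T'=(V_{T'},E_{T'},\ell',r')$ of width $w$ with at most $4n$ nodes. It satisfies \ref{n1} and \ref{n3}, with every node being a leaf, introduce, forget, or join node. The only missing requirement is $\ell'(r')=\emptyset$.

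To fix the root, enumerate $\ell'(r')=\{v_1,\dots,v_k\}$. Note that $k\le w+1$, and also $k\le n$. Then build a chain of new nodes $s_1,\dots,s_k$ above $r'$: $s_1$ has child $r'$, each $s_{i+1}$ has child $s_i$, and we set $\ell(s_i)=\ell'(r')\setminus\{v_1,\dots,v_i\}$. The new root is $r=s_k$, with $\ell(r)=\emptyset$. If $k=0$, nothing is added.

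We then verify the conditions.
\begin{itemize}
\item Condition~\ref{t2} holds because all old nodes are kept.
\item For \ref{t3}, the set of nodes containing $v$ is the old connected subtree, possibly extended by a contiguous initial segment of the chain $s_1,\dots,s_{i-1}$ when $v=v_i$. This extension is attached at $r'$, which contains $v$, so the set stays connected and nonempty.
\item The width does not increase, since the new labels are subsets of $\ell'(r')$.
\item Each new node has a single child, and its label differs from the child's label in exactly one vertex, so \ref{n31} holds. The old nodes keep their children, so \ref{n32} is unaffected.
\end{itemize}

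For the node count, the tree has at most $4n+k\le 4n+n=5n$ nodes. The only subtle point is the bound $k\le n$, which holds simply because labels are subsets of $V$. The paper's phrasing uses $\tw(G)\le n-1$; with $k\le w+1\le n$ the count stays within $5n$ either way. There is no real obstacle: the main work is citing the $4n$ bound of \citet{kloks1994treewidth} correctly, including its assumption that $n>0$.
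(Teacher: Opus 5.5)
Your proposal is correct and follows the paper's own argument: take the nice tree decomposition with at most $4n$ nodes from \citet{kloks1994treewidth} and add a chain of forget nodes above the root until its label is empty. Your count of $k\le w+1\le n$ added nodes is slightly more careful than the paper's remark of \enquote{at most $\tw(G)$} forget nodes, since a root bag can contain $w+1$ vertices; the bound $5n$ holds either way.
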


Proof of our main result relies on a dynamic programming algorithm which follows
the structure of a nice tree decomposition proceeding from the leaves to the
root. Most work is done when processing the forget nodes using the fact that
each node is forgotten exactly once.

\begin{lemma}\label{lemma:oneforget}
   Let $T=(V_T, E_T, \ell, r)$ be a tree decomposition of an undirected graph 
   $G=(V, E)$ and let $v \in V$ be a vertex.
   Then \(V_T\) contains exactly one node of \enquote{forget $v$} type.
\end{lemma}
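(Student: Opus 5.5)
The statement must be read for a \emph{nice} tree decomposition \(T=(V_T, E_T, \ell, r)\): for an arbitrary tree decomposition there are no node types, so ``forget \(v\)'' has no meaning. I will prove it in that setting. The plan is to study the subtree \(T^{(v)}\) from~\ref{t3}, using the root \(r\) to orient it, and to identify the forget-\(v\) nodes with the parents of its topmost node.

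\textbf{Topmost node.} By~\ref{t3}, \(T^{(v)}\) is non-empty and connected. A connected subgraph of a rooted tree has a unique node closest to the root; call it \(x_v\). Every other node of \(T^{(v)}\) is a descendant of \(x_v\), because the path from any node of \(T^{(v)}\) to \(x_v\) lies inside \(T^{(v)}\). By~\ref{n2}, \(\ell(r)=\emptyset\), so \(r\notin T^{(v)}\). Hence \(x_v\neq r\), and \(x_v\) has a parent \(p\) with \(v\notin\ell(p)\).

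\textbf{Existence.} The node \(p\) cannot be a join node, since by~\ref{n32} that would give \(\ell(p)=\ell(x_v)\ni v\). So \(p\) has the single child \(x_v\). By~\ref{n31}, the symmetric difference of \(\ell(p)\) and \(\ell(x_v)\) has size exactly one. The element \(v\) lies in \(\ell(x_v)\setminus\ell(p)\), so it is the whole symmetric difference, and \(\ell(p)=\ell(x_v)\setminus\{v\}\). Thus \(p\) is a ``forget \(v\)'' node.

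\textbf{Uniqueness.} Let \(s\) be any ``forget \(v\)'' node with child \(t\). Then \(v\in\ell(t)\) and \(v\notin\ell(s)\), so \(t\in T^{(v)}\) while its parent \(s\notin T^{(v)}\). A node of \(T^{(v)}\) other than \(x_v\) has its parent on the path to \(x_v\), which lies inside \(T^{(v)}\). So a node of \(T^{(v)}\) whose parent is outside \(T^{(v)}\) must be \(x_v\). Hence \(t=x_v\) and \(s=p\).

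The only delicate step is the connectivity fact: a connected subtree of a rooted tree has a unique topmost node and lies below it. This follows from the uniqueness of paths in a tree.
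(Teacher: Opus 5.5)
Your proof is correct. The paper states this lemma without any proof, so there is no argument to compare against, and yours fills that gap.

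You are right that the statement only makes sense for a nice tree decomposition. You are also right to rely on \ref{n2}: without $\ell(r)=\emptyset$, a vertex in the root bag would never be forgotten, so existence genuinely depends on it.

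One small step deserves a phrase. In the uniqueness part, $v\in\ell(t)$ for a \enquote{forget $v$} node $s$ with child $t$ is not literally contained in $\ell(s)=\ell(t)\setminus\{v\}$. It follows from \ref{n31}, since otherwise $\ell(s)=\ell(t)$.

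A useful by-product of your argument is that it identifies the \enquote{forget $v$} node as the parent of the topmost node $x_v$ of $T^{(v)}$. It is therefore a proper ancestor of every node whose bag contains $v$. The paper uses exactly this property without justification in several places:
\begin{itemize}
\item in the proof of Lemma~\ref{lemma:oneforgete} (\enquote{both forget nodes lie on this path});
\item in the good-coloring lemma (\enquote{they must be forgotten on the path from $t_v$ to the root});
\item in the correctness proof for adjacency.
\end{itemize}
Your proof supplies that missing justification as well.
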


Formulas in \MSO{} treat edges as elementary objects similar to vertices. When
constructing an SDD for the formula, we will thus need decision variables
associated with the edges as well. Decision variables related to a vertex \(v\)
are introduced when processing  \enquote{forget \(v\)} node. Similarly, decision
variables related to an edge \(\{u, v\}\) will be introduced when
processing the first of the two associated forget nodes --- \enquote{forget
\(u\)} or \enquote{forget \(v\)}. To this end, we define the notion of
forgetting an edge as follows.

\begin{definition}
   Let \(T=(V_T, E_T, \ell, r)\) be a nice tree decomposition of \(G=(V, E)\).
   We say that node \(s\in V_T\) forgets an edge
   \(\{u, v\}\) if \(s\) is a forget node with a single child \(t\) and
   \(\{u, v\}\subseteq\ell(t)\) while \(\{u, v\}\not\subseteq\ell(t)\).
\end{definition}

The uniqueness of forget nodes holds for edges as well.

\begin{lemma}\label{lemma:oneforgete}
   Let $T=(V_T, E_T, \ell, r)$ be a nice tree decomposition of an undirected graph 
   $G=(V, E)$ and let $e \in E$ be an edge. Then there is exactly one node in $T$
   which forgets \(e\).
\end{lemma}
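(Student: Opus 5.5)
Throughout we read the definition as intended, with the second condition on the parent: \(s\) forgets \(\{u,v\}\) when \(\{u,v\}\subseteq\ell(t)\) but \(\{u,v\}\not\subseteq\ell(s)\). The plan is to prove existence by walking from a node containing \(e\) up to the root. Uniqueness will follow from Lemma~\ref{lemma:oneforget} together with the connectivity condition~\ref{t3}.

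\emph{Existence.} Let \(e=\{u,v\}\). By~\ref{t2} there is a node \(x\) with \(\{u,v\}\subseteq\ell(x)\). By~\ref{n2} the root \(r\) has \(\ell(r)=\emptyset\), so \(\{u,v\}\not\subseteq\ell(r)\). On the path from \(x\) up to \(r\) there is therefore a first edge, from a child \(t\) to its parent \(s\), with \(\{u,v\}\subseteq\ell(t)\) and \(\{u,v\}\not\subseteq\ell(s)\). I then rule out the other node types for \(s\) using~\ref{n3}:
\begin{itemize}
\item \(s\) is not a join node, since then \(\ell(s)=\ell(t)\);
\item \(s\) is not an introduce node, since then \(\ell(s)\supseteq\ell(t)\).
\end{itemize}
Hence \(s\) is a forget node, and it forgets \(e\). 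Moreover, \(s\) is a ``forget \(u\)'' or ``forget \(v\)'' node, because the forgotten vertex must lie in \(\{u,v\}\).

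\emph{Uniqueness.} By the argument above, any node forgetting \(e\) is a ``forget \(u\)'' or ``forget \(v\)'' node. By Lemma~\ref{lemma:oneforget} there is exactly one node of each of these two types, so at most two candidates exist. It remains to show that they cannot both forget \(e\).

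The key auxiliary fact is this: if \(s\) is the ``forget \(w\)'' node with child \(t\), then \(t\) is the topmost node of \(T^{(w)}\), i.e.\ \(T^{(w)}\) lies in the subtree rooted at \(t\). This holds because \(T^{(w)}\) is connected by~\ref{t3}, contains \(t\), and does not contain the parent \(s\). A connected subgraph of a rooted tree that contains \(t\) but avoids \(t\)'s parent must stay inside the subtree of \(t\).

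Now suppose the ``forget \(u\)'' node \(s_1\) (child \(t_1\)) and the ``forget \(v\)'' node \(s_2\) (child \(t_2\)) both forget \(e\). Then \(t_1\) contains \(v\), so \(t_1\in T^{(v)}\). By the auxiliary fact, \(t_1\) lies in the subtree of \(t_2\). Symmetrically, \(t_2\) lies in the subtree of \(t_1\), and hence \(t_1=t_2\). It follows that \(s_1=s_2\), since both are the parent of the same node. But then \(\ell(s_1)\) differs from \(\ell(t_1)\) in both \(u\) and \(v\), contradicting~\ref{n31}.

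The only mildly delicate step is the auxiliary fact about the topmost node of \(T^{(w)}\). Everything else is immediate case analysis on node types.
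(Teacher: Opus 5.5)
Your proof is correct, but the uniqueness half takes a different route from the paper.

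\textbf{Existence.} This half is close to the paper's argument. Both walk from a node containing \(u\) and \(v\) up to the root, whose label is empty. Both observe that \(e\) must drop out at a forget node whose forgotten vertex is \(u\) or \(v\). You get this from \ref{n2} and the node types alone. The paper instead locates the two forget nodes on the path via Lemma~\ref{lemma:oneforget}.

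\textbf{Uniqueness.} The paper notes that the \enquote{forget \(u\)} and \enquote{forget \(v\)} nodes both lie on the path from \(z\) to the root. It takes the lower one (say \enquote{forget \(u\)}) as the node forgetting \(e\). It excludes the upper one because no label above \enquote{forget \(u\)} contains \(u\). You instead argue symmetrically: the child of a \enquote{forget \(w\)} node is the topmost node of \(T^{(w)}\). So the children of two candidate nodes lie in each other's subtrees and must coincide.

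\textbf{What each buys.} The paper's argument is shorter and tells you which node forgets \(e\), namely the lower of the two forget nodes. However, it leaves two points implicit: why both forget nodes lie on the path from \(z\), and why no node other than these two can forget \(e\). Your version spells both out and needs no ordering of the candidates.

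One further remark: your auxiliary fact does not presuppose that the \enquote{forget \(w\)} node is unique. Applied to two \enquote{forget \(w\)} nodes, it shows directly that they coincide. So your uniqueness argument could dispense with Lemma~\ref{lemma:oneforget} entirely.
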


\begin{proof}
   Denote the endpoints of $e$ as $u$ and $v$. By Lemma~\ref{lemma:oneforget},
   there is exactly one \enquote{forget $u$} node and one \enquote{forget $v$}
   node. By tree decomposition property~\ref{t2}, there is a node $z$ with label
   containing both $u$ and $v$. Consider the path from $z$ to root. Both
   \enquote{forget $u$} node  and \enquote{forget $v$} node lie on this path. These
   nodes are also distinct by the definition of node types. Consider the sequence
   of the nodes along the branch, starting at $z$. Either \enquote{forget $u$} node
   or \enquote{forget $v$} node will appear first along this path. Without loss of
   generality, let \enquote{forget $u$} be the first node. The node \enquote{forget
   $u$} is the unique node that forgets $e$, because all the other nodes on path from
   this node to the root do not have $u$ in their label.
\end{proof}

\subsection{Monadic Second Order Logic Of Graphs}\label{ssec:mso}

Monadic second order logic (MSO) is a fragment of second order logic, in which
quantification over relational variables is limited to unary relations. Since a
unary relation is defined as a subset of the universe, we refer to unary
predicates simply as sets in the context of MSO\@. Analogically we refer to
quantification over relations as quantification over sets and relational
variables are called set variables.

\MSO{} is the name given to monadic second order logic over the language of
graphs, in which we can quantify both over vertices and edges.
We use the definition of \MSO{} which is based on the one used by
\citet{COURCELLE199012}. To simplify the presentation of our results, we only
use a minimal complete set of connectives (conjunction and negation) and we only
consider existential quantification. The rest of the connectives and universal
quantification can be then derived using the usual semantic of predicate logic.
In addition, instead of predicate \Edge{}, we use \Adj{}, since in the classes
of graphs that we study, both of these predicates have the same expressive
power, while \Adj{} is simpler to handle. See Section~\ref{sec71}
for notes on using our results with alternative definitions of
\MSO{}.

To simplify reasoning in and about this logic, we consider the underlying
universe $U$ to be composed of two disjoint sets, $U_V$ (representing vertices)
and $U_E$ (representing edges). Let us note that we could also consider a single
universe and use additional unary predicates \textit{is\_vertex} and
\textit{is\_edge} to differentiate between vertices and edges. Formulas in the
definition with the sub-universes can be translated into an equivalent formula
in the definition with the single universe under the assumption that $is\_vertex
= U_V$ and $is\_edge = U_E$.

The language of \MSO{} consists of the following symbols:
\begin{description}
   \item[Countably many variable symbols.] Each variable is either an object variable, or a set
      variable. Each variable in \MSO{} also has a \textbf{sort} which
      restricts assignments to this variable. The sort of a variable is either the
      \emph{vertex sort} or the \emph{edge sort}. In total, there are four types of variables:
      \begin{description}
         \item[Vertex object variables] which can have assigned a single element from $U_V$.
         \item[Edge object variables] which can have assigned a single element from $U_E$.
         \item[Vertex set variables] which can have assigned a subset of $U_V$.
         \item[Edge set variables] which can have assigned a subset of $U_E$.
      \end{description}
   \item[Logical symbols] $\neg$ and $\wedge$.
   \item[Equality] ($=$) and \textbf{membership} ($\in$).
   \item[Four types of existential quantifiers] $(\exists x \in U_V)$, $(\exists x \in U_E)$, $(\exists X \subseteq U_V)$, and $(\exists X \subseteq U_E)$.
   \item[Non-logical binary relation] $\Adj$.
\end{description}

The grammar of \MSO{} allows the construction of following formulas:
\begin{itemize}
   \item $\Adj(v, e)$, where $v$ is a vertex object variable and $e$ is an edge object variable.
   \item $(x = y)$, where $x$ and $y$ are object variables of the same sort.
   \item $(x \in Y)$, where $x$ is an object variable and $Y$ is a set variable, both of the same sort.
   \item $\neg \psi$, where $\psi$ is an \MSO{} formula.
   \item $\psi \wedge \rho$, where $\psi$ and $\rho$ are \MSO{} formulas.
   \item $(\exists x \in U_V)\psi$, where $\psi$ is an \MSO{} formula and $x$ is a vertex object variable which is free in $\psi$.
   \item $(\exists x \in U_E)\psi$, where $\psi$ is an \MSO{} formula and $x$ is an edge object variable which is free in $\psi$.
   \item $(\exists X \subseteq U_V)\psi$, where $\psi$ is an \MSO{} formula and $X$ is a vertex set variable which is free in $\psi$.
   \item $(\exists X \subseteq U_E)\psi$, where $\psi$ is an \MSO{} formula and $X$ is an edge set variable which is free in $\psi$.
\end{itemize}

We use the usual semantic of predicate logic to evaluate a
formula.

\begin{definition}[Evaluation of \MSO{} formula]
   Let $\varphi$ be an \MSO{} formula, let $G=(V, E)$ be a graph and let
   $\alpha$ be an assignment, which assigns values to all free variables of
   $\varphi$, by taking values from universes $U_V := V$ and $U_E := E$. We
   evaluate $\varphi$ on graph $G$ for assignment $\alpha$ by recursion on
   structure of $\varphi$ as follows:
   \begin{itemize}
      \item $\varphi \equiv \Adj(v,e)$ is true for $\alpha$ if and only if vertex \(\alpha(v)\) belongs to edge \(\alpha(e)\).
      \item $\varphi \equiv (x = y)$ is true for $\alpha$ if and only if $\alpha(x) = \alpha(y)$.
      \item $\varphi \equiv (x \in Y)$ is true for $\alpha$ if and only if $\alpha(x) \in \alpha(Y)$.
      \item $\varphi \equiv \neg \psi$ is true for $\alpha$ if and only if $\psi$ is false.
      \item $\varphi \equiv \psi \wedge \rho$ is true for $\alpha$ if and only if $\psi$ is true and $\rho$ is true.
      \item $\varphi \equiv (\exists x \in U_V)\psi$ is true for $\alpha$, if
         and only if there exists some $v \in U_V$, such that $\psi$ is true for
         $\alpha \cup \{x := v\}$.
      \item $\varphi \equiv (\exists x \in U_E)\psi$ is true for $\alpha$, if
         and only if there exists some $e \in U_E$, such that $\psi$ is true for
         $\alpha \cup \{x := e\}$.
      \item $\varphi \equiv (\exists X \subseteq U_V)\psi$ is true for $\alpha$,
         if and only if there exists some $W \subseteq U_V$, such that $\psi$ is
         true for $\alpha \cup \{X := W\}$.
      \item $\varphi \equiv (\exists X \subseteq U_E)\psi$ is true for $\alpha$,
         if and only if there exists some $C \subseteq U_E$, such that $\psi$ is
         true for $\alpha \cup \{X := C\}$.
   \end{itemize}
\end{definition}

The size of the formula is defined as the size of its syntax tree while ignoring
parentheses and counting a quantifier as a single symbol. More concretely:

\begin{definition}[Size of a formula]
   For an \MSO{} formula $\varphi$, we define its size $|\varphi|$ recursively by its structure as
   \begin{itemize}
      \item $|\Adj(v,e)| = 3$
      \item $|(x = y)| = 3$
      \item $|(x \in Y)| = 3$
      \item $|\neg \psi| = 1 + |\psi|$
      \item $|\psi \wedge \rho| = 1 + |\psi| + |\rho|$
      \item $|(\exists x \in U_V)\psi| = 1 + |\psi|$, similarly for other variations of quantification.
   \end{itemize}
\end{definition}

One key observation is that the number of all variable symbols that are
part of atomic subformulas in any formula $\varphi$, is upper bounded by
$|\varphi|$. More importantly, the number of unique free variables is also
bounded by $|\varphi|$. 

\subsection{Decision Diagrams}

Our results give parameterized upper bounds on the sizes of decision diagrams
representing solutions to a graph problem specified by a given \MSO{} formula
\(\varphi\).
In this section, we shall describe the two classes of decision diagrams we
consider in our work --- binary decision diagrams and sentential decision
diagrams. These decision diagrams represent boolean
functions defined over propositional variables.
The variables of \(\varphi\), on the other hand, have much bigger domains. A
single variable of \(\varphi\) is represented with a number of propositional
variables which together specify its value. We shall describe this in more
details in Section~\ref{sec:results}.

\subsubsection{Binary Decision Diagrams}

Ordered binary decision diagram is an efficient data structure for representing
boolean functions which was introduced by \citet{Bryant86}. Definitions below
are adapted from their work. We are specifically interested in reduced ordered
binary decision diagrams, as they are both smallest and unique for a given
function.

\begin{definition}[Binary decision diagram]
   A \textbf{binary decision diagram} (BDD) is a rooted directed acyclic
   graph satisfying the following conditions.
   \begin{itemize}
      \item Exactly one node (the \textbf{root node}) has in-degree \(0\).
      \item Inner nodes (called \textbf{decision nodes}) have out-degree \(2\) and
         unbounded in-degree. One of the outgoing edges is labeled with \(0\) and
         the other one is labeled with \(1\). The node itself is labeled with a
         decision variable.
      \item \textbf{Leaf nodes} are labeled with \(0\) or \(1\).
   \end{itemize}
\end{definition}

Let $A$ be a BDD with root vertex $r$. The value of \(A\) on an assignment
\(\delta\) to decision variables of \(A\) (denoted \(A(\delta)\)) is obtained in
the following way.
\begin{itemize}
   \item If $r$ is a leaf node, then \(A(\delta)\) is the label of $r$.
   \item If $r$ is a decision node labeled with decision variable $x$, we follow
      the edge labeled by $\delta(x)$ to a new node $v$ and recursively evaluate
      the subgraph rooted at $v$.
\end{itemize}

We say that $A$ \textbf{computes} boolean function $f$, if the set of decision
variables in $A$ is the subset of the variables of $f$ and for each
$f$-assignment $\delta$, $f(\delta) = A(\delta)$. Note that if the root \(r\) is
a leaf node, then \(A\) represents a constant function with function value given
by the label of \(r\).

\begin{definition}[Ordered binary decision diagram]
   Let $<$ be a strict linear ordering on the set of decision variables. A decision
   diagram $A$ is said to \textbf{respect} $<$, if \(x_c<x_d\) for every node
   \(c\) and its descendant \(d\) labeled with variables \(x_c\) and \(x_d\)
   respectively.
   If a binary decision diagram respects any ordering of
   decision variables, we call it an \textbf{ordered binary decision diagram}
   (OBDD).
\end{definition}

\begin{figure}
   \includegraphics[width=\linewidth]{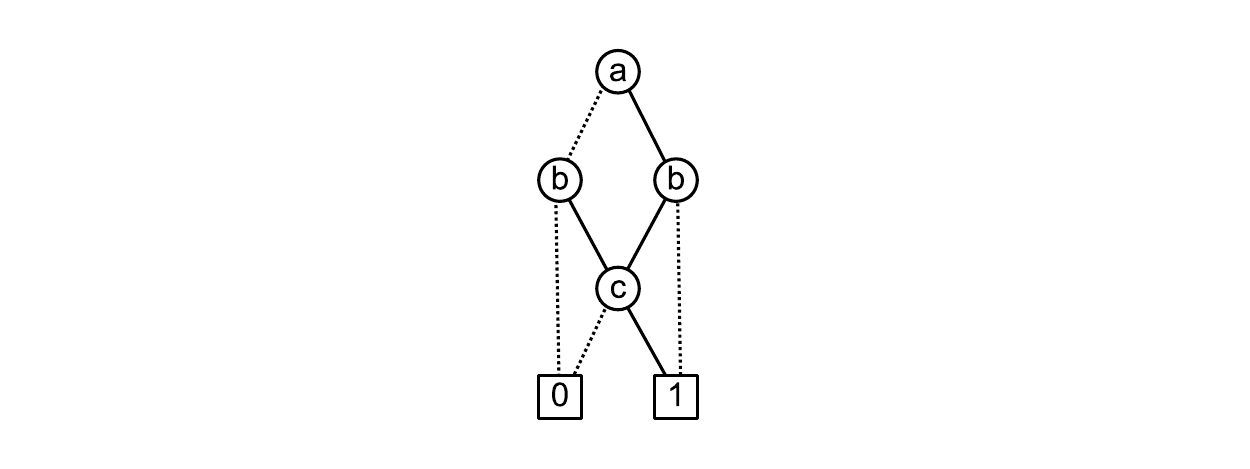}
   \caption{An example of an OBDD that computes function $(a \wedge \neg b) \vee (b \wedge c)$.} \label{fig:obdd} 
\end{figure}

Figure~\ref{fig:obdd} shows an example of an OBDD, which respects variable
ordering $a<b<c$. Edges labeled with \(0\) are represented by dotted lines, while
edges labeled with \(1\) are represented by full lines. Edges are directed downwards.
The OBDD in the figure computes boolean function $(a \wedge \neg b) \vee (b
\wedge c)$.

\begin{rem}
   One of the interesting properties of OBDDs proved by \citet{Bryant86} is
   the canonicity of reduced OBDDs. An OBDD is reduced, if and only if for every
   decision node, its two outgoing edges lead to different nodes and for every
   pair of distinct decision nodes, the functions computed by the sugraphs
   rooted at the decision nodes are distinct. For a fixed variable ordering and
   boolean function $f$, there is a unique reduced OBDD which computes $f$. The
   reduced OBDD has optimal size for given variable ordering, which can be
   useful when proving bounds on sizes. However, we will not need this notion to
   present our results, because we prove all bounds on the size directly, mostly
   relying on similar results for SDDs, which are the primary focus of this
   paper.
\end{rem}

%

\subsubsection{Sentential Decision Diagrams}

Sentential decision diagram (SDD) was introduced by \citet{darwiche2011sdd}.
Definitions below are adapted from their work. While OBDDs respect a linear
order of variables, decisions in SDD follow a tree-like structure called v-tree.
This fundamental difference relates OBBDs to pathwidth and SDDs to treewidth in
our results.

\begin{definition}[v-tree]
   A \textbf{v-tree} for a set of decision variables $\{v_1,\dots,v_k\}$ is a
   rooted tree \(T\) with the following properties:
   \begin{itemize}
      \item Each leaf is labeled with a decision variable and the leaves of \(T\) are
         in one to one correspondence with the decision variables.
      \item All edges are labeled with $L$ or $R$.
      \item All internal nodes have exactly two child nodes --- the \textbf{left
         child} accessed by edge
         labeled with $L$ and the \textbf{right child} accessed by edge labeled with $R$.
   \end{itemize}
   We will use $L(x)$ (resp.\ \(R(x)\)) to denote the left (resp.\ right) child of
   an inner node \(x\).
   The subtree rooted at \(L(x)\) will
   be called the \textbf{left subtree of $x$} and will be denoted as $x^L$.
   Analogically, we define the \textbf{right subtree of $x$} denoted as $x^R$.
\end{definition}

Unlike in OBDDs, the decisions in SDDs are made based on sentences which form a
partition.

\begin{definition}[Boolean partition]
   A set of boolean functions $\{p_1,p_2,\dots,p_k\}$ forms
   a \textbf{boolean partition}, if both of the following properties hold:
   \begin{itemize}
      \item $p_i \wedge p_j \equiv \false$ for $i \ne j$
      \item $\bigvee_{i=1}^k p_i \equiv \true$
   \end{itemize}
\end{definition}

We are now ready to define sentential decision diagrams.

\begin{definition}[Sentential decision diagram]
   A \textbf{sentential decision diagram (SDD) $\Delta$ respecting v-tree $u$} with the
   semantics in form of boolean function $\langle \Delta \rangle$
   is one of the following:
   \begin{itemize}
      \item $\Delta = \bot$ with semantics $\langle \bot \rangle \equiv \false$
      \item $\Delta = \top$ with semantics $\langle \top \rangle \equiv \true$
      \item $\Delta = X$ if $u$ is a leaf labeled by variable $X$. The semantics is $\langle X \rangle \equiv X$
      \item $\Delta = \neg X$ if $u$ is a leaf labeled by variable $X$. The semantics is $\langle \neg X \rangle \equiv \neg X$
      \item $\{(p_1,s_1),\dots,(p_k,s_k)\}$ where $p_1, \dots, p_n$ are SDDs
         respecting any node in subtree $u^L$, $s_1, \dots, s_k$ are SDDs respecting any node
         in subtree $u^R$, and the set $\{\langle p_1 \rangle,\dots,\langle p_k \rangle\}$ is
         a boolean partition. The semantics is:
         \begin{equation*}
            \langle \{(p_1,s_1),\dots,(p_k,s_k)\} \rangle \equiv
            \bigvee_{i=1}^k \langle p_i \rangle \wedge \langle s_i \rangle
         \end{equation*}
   \end{itemize}
   If an SDD has one of the first four forms, we will call it a \textbf{terminal}.
   If an SDD has the last form, we will call it a \textbf{decomposition}. In
   decompositions, we refer to $p_1, \dots, p_k$ as \textbf{primes} and to SDDs
   $s_1, \dots, s_k$ as \textbf{subs}.
\end{definition}

%

%

\begin{figure}
   \includegraphics[width=\linewidth]{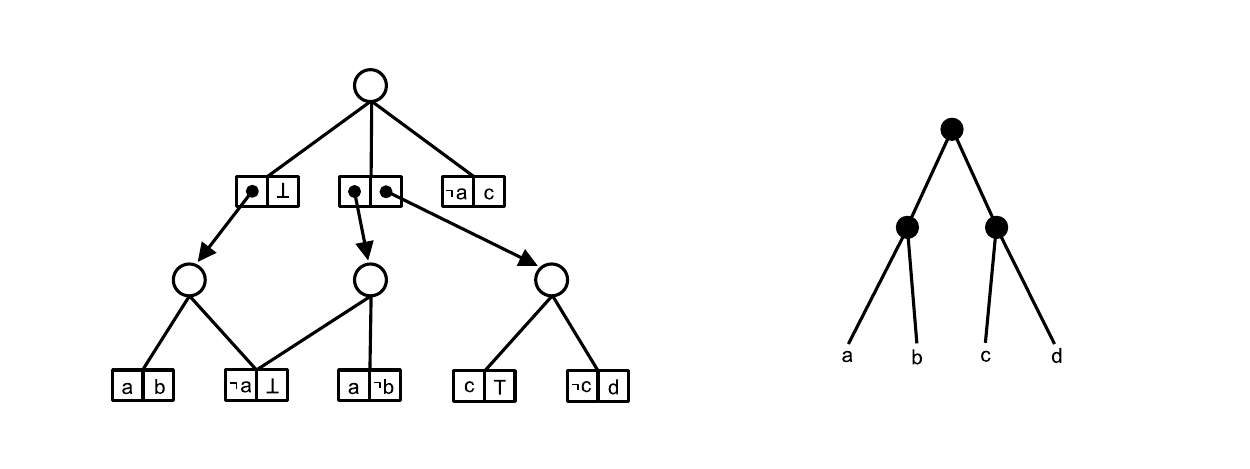}
   \caption{An example of an SDD (on the left) and a v-tree it respects (on the right). }
   \label{fig:sdd} 
\end{figure}

Figure~\ref{fig:sdd} shows an example of an SDD and a v-tree respected by the
SDD\@. A prime-sub pair is represented by two boxes next to each other. The left
box contains the prime, while the right box contains the sub. A prime or a sub
may be specified directly if it is a terminal, or by a pointer to a decomposition.
Decompositions are represented by circles and their prime-sub pairs are
represented by the connected box pairs. The topmost decomposition with three
prime-sub pairs is the root.
The SDD in the figure computes boolean
function $(a \wedge \neg b \wedge (c \vee d)) \vee (\neg a \wedge c)$.

We define the size of an SDD as the sum of the sizes of its decompositions with
terminal nodes having size \(1\). More formally:

\begin{definition}[Size of an SDD]
   Let $\Delta$ be an SDD\@. Let $C$ be the set of all distinct functions computed
   by the SDDs that are part of the recursive definition of $\Delta$, including
   $\Delta$ itself. We define its \textbf{size} $|\Delta|$ to be the sum of the
   following values:
   \begin{itemize}
      \item \(1\) for terminals
      \item $n$ for a decomposition represented by a set of size $n$.
   \end{itemize}
\end{definition}

Let us note that an OBDD can be considered as a special case of an SDD in which
the v-tree has form of a right path (i.e.\ the left child of each inner node is
a leaf) specifying the order of the veriables. Each decomposition then takes
form of a Shannon decomposition representing a decision node. SDDs have similar
properties to OBDDs with respect to their applications.
\citet{darwiche2011sdd}
also describes canonical SDDs which are uniquely defined for a given function
and v-tree. Unlike in the case of OBDDs, requiring canonicity may lead to an
exponential blow-up in size.

%

\subsection{Parameterized Analysis}

In parameterized analysis (see for instance~\cite{FG06}),
a parameter is defined for each instance of a problem
and we study the dependence of the complexity of the problem on the
parameter.

For example, consider a problem that has a graph as an instance.  While the size
of the instance only takes into account the number of vertices and edges in the
graph, the parameter can consider the internal structure of the graph. For
example, the parameter can be the treewidth of the graph, or the size of the
largest clique in the graph.

We will be using the following two notions related to the complexity of
parameterized problems.

\begin{definition}
   We say that a knowledge base $B$ has \textbf{fixed parameter linear size}, if it
   represents an input of size $n$ with parameter $k$ and if its size with respect
   to these variables is at most:
   \begin{equation*}
      |B| \in O(f(k) \cdot n)
   \end{equation*}
   for some computable function $f$.
\end{definition}

\begin{definition}
   We say that a knowledge base $B$ has \textbf{at least slice-wise polynomial
   size}, if for given $n$ and $k$, we can find a problem instance of size at most
   $n$ and parameter at most $k$, such that the minimal size of the knowledge base
   has lower bound of:
   \begin{equation*}
      |B| \in \Omega(n^{f(k)})
   \end{equation*}
   for some computable function $f$.
\end{definition}

An input of the problem we aim to solve in this paper has form of a graph $G$
and an \MSO{} formula $\varphi$. We shall consider parameter \(k=|\varphi|+w\)
where \(w\) is either the treewidth, or the pathwidth of \(G\) depending on the
context. The size of the input is technically $(|V(G)| + |E(G)| + |\varphi|)$,
however, since $|\varphi|$ is part of the parameter, we will work with $n :=
|V(G)| + |E(G)|$. The solution to the problem instance is a knowledge base and
we wish to bound its size. In two of our results, we will show a fixed parameter
linear upper bound on the size of a decision diagram for a given instance. In
the third result, we will show a slice-wise polynomial lower bound by
constructing suitable problem instances.


\section{Our Results}\label{sec:results}

A well-known result of~\citet{COURCELLE199012} shows that satisfiability of an
\MSO{} formula \(\varphi\) on a given graph \(G\) can be checked by an algorithm
whose complexity is fixed parameter linear in parameter \(k=|\varphi|+w\) where
\(w\) denotes the treewidth of \(G\). Our results build on that by showing that
the models of such formula on a given graph can be represented by a sentential
decision diagram whose size is fixed parameter linear. For pathwidth, we get a
similar result for OBDD and, in addition, we also show that the size of OBDD
cannot be parameterized by the treewidth.

We shall now give a formal specification of the problem we aim to solve and the
statements of the theorems with the main results of this paper.


Let $G=(V, E)$ be an undirected graph and $\varphi$ be an \MSO{} formula which is interpreted
over \(G\). Our goal is to
represent all models \(\alpha\) of $\varphi$ in form of a decision diagram that
only uses propositional variables. To this end, each variable \(x\) in \(\varphi\)
will be associated with a vector of propositional variables which represent the
possible assignments to \(x\).

\begin{definition}[Decision variables]\label{def:decision-variables}
Let $G=(V, E)$ be a graph and $\varphi$ be an \MSO{} formula which is
interpreted over \(G\). Consider the following variables: 

\begin{enumerate}
\item $[x = v]$ for each free vertex object variable $x$ in $\varphi$ and $v \in V(G)$
\item $[x = e]$ for each free edge object variable $x$ in $\varphi$ and $e \in E(G)$
\item $[v \in X]$ for each free vertex set variable $X$ in $\varphi$ and $v \in V(G)$
\item $[e \in X]$ for each free edge set variable $X$ in $\varphi$ and $e \in E(G)$
\end{enumerate}

We will denote the set of these variables as $\mathcal{D}_{\varphi, G}$.
\end{definition}

To differentiate the types of assignments related to the \MSO{} formula
\(\varphi\), we shall use \emph{\(\varphi\)-assignment} to refer to an
assignment to \MSO{} variables in \(\varphi\) and \emph{\(\domPhiG\)-assignment}
to refer to an assignment that assigns boolean values to variables in
\(\domPhiG\).

While the decision variables in $\mathcal{D}_{\varphi, G}$ can be used to encode
any $\varphi$-assignment $\alpha$, not all $\mathcal{D}_{\varphi,
G}$-assignments encode a valid assignment $\alpha$. Consider for example a
vertex object variable $x$ and vertices $u$ and $v$. Assigning $1$ to both
$[x=v]$ and $[x=u]$ would lead to an assignment which does not correspond to any
valid $\varphi$-assignment, as only one value can be assigned to any object
variable. We thus need the decision diagram to represent only consistent
assignments to \(\domPhiG\) which have a corresponding \(\varphi\)-assignment.

\begin{definition}[Consistent assignment]\label{def:consistent}
   Let $G=(V, E)$ be a graph and $\varphi$ be an \MSO{} formula which is
   interpreted over \(G\). 
   A binary assignment \(\delta\) of variables in \(\mathcal{D}_{\varphi, G}\)
   is \textbf{consistent}, if and only if \(\delta\) satisfies the following two
   conditions.
   \begin{enumerate}
      \item For every vertex object variable \(x\), \(\delta\) satisfies exactly one variable among
         \([x=u], u\in V\).
      \item For edge object variable \(x\), \(\delta\) satisfies exactly one variable among
         \([x=e], e\in E\).
   \end{enumerate}
\end{definition}

The number of decision variables introduced by
Definition~\ref{def:decision-variables} can be bounded as follows.

\begin{lemma}[Number of decision variables]%
\label{ndvar}
Let $G=(V, E, \vert)$ be a graph and $\varphi$ an \MSO{} formula which is
interpreted over \(G\). Let $n = |V| + |E|$. The number of the decision variables in
$\mathcal{D}_{\varphi, G}$ is at most $|\varphi| \cdot n$.
\end{lemma}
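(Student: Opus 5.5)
The plan is a direct counting argument based on Definition~\ref{def:decision-variables}. We partition \(\domPhiG\) according to the free \MSO{} variable of \(\varphi\) that each decision variable refers to. Every decision variable has exactly one of the forms \([x=v]\), \([x=e]\), \([v\in X]\), or \([e\in X]\), and so it is determined by a pair consisting of a free variable of \(\varphi\) and an element of \(V\) or of \(E\). It therefore suffices to count, for each free variable separately, how many decision variables it contributes, and then to sum over the free variables.

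For a fixed free variable, the count depends on its sort. A vertex object variable \(x\) or vertex set variable \(X\) contributes exactly one decision variable per vertex, that is \(|V|\leq n\) variables. An edge object variable or edge set variable contributes exactly one per edge, that is \(|E|\leq n\) variables. In every case a single free variable contributes at most \(n=|V|+|E|\) decision variables.

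It remains to bound the number of free variables of \(\varphi\) by \(|\varphi|\), using the observation made right after the definition of the size of a formula. To make this precise, we show by induction on the structure of \(\varphi\) that the number of distinct variable occurrences in atomic subformulas is at most \(|\varphi|\). An atomic formula has size \(3\) and mentions at most \(2\) variables. For negation, conjunction and quantification, the size adds up over subformulas (plus one), while the set of free variables is contained in the union of the free variables of the subformulas. Since every free variable occurs in some atomic subformula, the number of free variables is at most \(|\varphi|\).

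Multiplying the two bounds gives \(|\domPhiG|\leq|\varphi|\cdot n\). None of these steps should be a real obstacle. The only point needing a line of justification is the bound on the number of free variables, which the paper already records as an observation, and the inductive argument above makes it rigorous.
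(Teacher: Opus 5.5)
Your proof is correct and follows the paper's argument: each free variable of \(\varphi\) contributes at most \(n\) decision variables, and there are at most \(|\varphi|\) free variables. The only difference is that the paper cites its earlier observation for the bound on free variables, whereas your structural induction supplies the short justification for it.
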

\begin{proof}
   Each free variable in \(\varphi\) contributes at most $n$ decision variables
   by Definition~\ref{def:decision-variables}, so together we have at most
   \(|\varphi|\cdot n\) decision variables.
\end{proof}

Models of \(\varphi\) can now be represented by the following boolean function.

\begin{definition}\label{def:phi-func}
   Let \(G=(V, E)\) be a graph and \(\varphi\) be an \MSO{} formula which is
   interpreted over \(G\). Then we define function \(\mathcal{F}_{\varphi,
   G} : \mathcal{D}_{\varphi, G}\to\{0, 1\}\) as the function whose models are
   exactly the consistent assignments \(\delta\) which represent the models of
   \(\varphi\).
\end{definition}

In this paper, we aim to solve the following problem: Given graph
\(G=(V, E)\) and \MSO{} formula \(\varphi\) which is interpreted over \(G\),
find a succinct representation of \(\mathcal{F}_{\varphi, G}\) in a given
representation language. Depending on the representation language (OBDD, SDD)
and the parameter of \(G\) (treewidth, pathwidth) we consider, we get three
propositions we prove in the rest of the paper.

\begin{theorem}\label{thm:sdd-ub}
   Let \(G=(V, E)\) be a graph with treewidth \(w\). Let \(\varphi\) be
   an \MSO{} formula interpreted over \(G\). Let \(n=|V|+|E|\) and
   \(k=w+|\varphi|\). Then there exists an SDD representing
   \(\mathcal{F}_{\varphi, G}\) of size that is fixed-parameter
   linear in \(n\) for parameter \(k\)
\end{theorem}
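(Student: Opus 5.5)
We build an SDD whose v-tree mirrors a nice tree decomposition \(T=(V_T,E_T,\ell,r)\) of \(G\) of width \(w\) with at most \(5|V|\) nodes (Lemma~\ref{strongnice}). By Lemma~\ref{lemma:oneforget} and Lemma~\ref{lemma:oneforgete}, every vertex and every edge is forgotten at exactly one node. So we attach to each forget node \(s\) the set \(D_s\subseteq\domPhiG\) of decision variables of the forgotten vertex and of the edges forgotten at \(s\). The variables in \(D_s\) are \([x=a]\) and \([a\in X]\) for the forgotten objects \(a\). Since \(|\ell(t)|\le w+1\), each \(D_s\) has at most \(|\varphi|(w+2)\) variables.

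The v-tree is obtained from \(T\) bottom-up. Each forget node \(s\) gets an inner node whose left subtree is a right-linear chain over \(D_s\) and whose right subtree is the v-tree of its child. Each join node gets an inner node over the v-trees of its two children. Introduce and leaf nodes are contracted, and chains of contracted nodes are spliced out. The result is a v-tree whose leaves are exactly \(\domPhiG\).

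The core of the proof is a Courcelle-style dynamic program, which would be Section~\ref{sec:decproc}. For a node \(s\), let \(G_s\) be the subgraph formed by the vertices and edges forgotten in the subtree of \(s\), together with the boundary \(\ell(s)\). Consider a \emph{partial} \(\domPhiG\)-assignment \(\delta_s\) to the variables of the subtree of \(s\). We assign it a \emph{state}: its \(q\)-type for \(q=|\varphi|\). Formally, this is the set of \MSO{} formulas of quantifier rank \(\le q\), over the free variables of \(\varphi\) and constants naming the boundary vertices, that hold in the structure \((G_s,\delta_s)\). Object variables not yet assigned are marked ``unassigned''. For consistency we also record, per object variable, whether it was set \(0\), \(1\) or \(\ge 2\) times; \(\ge2\) is a rejecting sink. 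Up to logical equivalence the number of such types is bounded by a computable \(g(k)\), independent of \(n\).

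The key lemma is \textbf{compositionality} (Feferman--Vaught style gluing along a boundary of size \(\le w+1\)). At a join node, the type of the glued structure is a function of the two child types. At a forget node, the new type is a function of the child type and of the restriction of \(\delta\) to \(D_s\). At the root, whether \(\delta\) is consistent and satisfies \(\varphi\) is a function of the type alone. I expect proving this compositionality carefully, with edges as first-class objects and partially assigned object variables, to be the main obstacle. I would prove it by induction on quantifier rank via Ehrenfeucht--Fraïssé games played separately on the two sides of the boundary.

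Given compositionality, we build the SDD top-down. For each v-tree node \(s\) and each type \(\tau\), let \(\Delta_{s,\tau}\) be the SDD expressing that the subtree assignment has type \(\tau\). The root SDD is the disjunction over accepting \(\tau\) at the root node; since the \(\Delta_{r,\tau}\) are disjoint, this forms a single decomposition.

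At a forget node \(s\), the primes are the terms of the full case split over assignments \(\beta\) to \(D_s\), grouped into classes \(P_\sigma\) of those \(\beta\) that send the child type \(\sigma\) to \(\tau\). More precisely, for each child type \(\sigma\), the \(\beta\) induce a transformation \(\sigma\mapsto\tau\), and we partition the \(\beta\)'s by the whole map \(\sigma\mapsto\tau\). This gives at most \(g(k)^{g(k)}\) primes, each an SDD over the chain of \(D_s\) of size at most \(2^{|D_s|}\). Each prime \(\pi\) is paired with the sub \(\bigvee_{\sigma\mapsto_\pi\tau}\Delta_{t,\sigma}\), a disjoint union of child SDDs. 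Realizing this disjoint union as an SDD node costs \(g(k)\) times the existing pieces, or we simply add these unions as extra shared nodes.

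At a join node we use the v-tree nodes of the two children. The primes are \(\Delta_{t,\sigma}\) for the left child, which partition \(\true\) as \(\sigma\) ranges over all types. Each is paired with the sub \(\bigvee\{\Delta_{u,\rho}:\mathrm{join}(\sigma,\rho)=\tau\}\).

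Every v-tree node therefore carries \(g(k)\) SDD nodes (plus the auxiliary unions), each of size bounded by a function of \(k\). There are \(O(|V|)\) v-tree nodes, plus chains of total length \(|\domPhiG|\le|\varphi|n\) by Lemma~\ref{ndvar}. The total size is \(f(k)\cdot n\), which proves the theorem.
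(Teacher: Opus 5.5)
Your overall architecture matches the paper's: a v-tree that mirrors a nice tree decomposition, one context \(D_s\) per forget node, and one SDD per (node, state) expressing ``the assignment below has this state''. The two ingredients are realized differently, however.

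\emph{States.} The paper does not use rank-\(q\) types or a Feferman--Vaught composition lemma. In Section~\ref{sec:decproc} it builds explicit state spaces by induction on \(\varphi\):
\begin{itemize}
\item two states for \(=\) and \(\in\);
\item extra color states \(c_i\) for \(\Adj\), coming from a good vertex coloring;
\item complemented acceptance for \(\neg\) and products for \(\wedge\);
\item sets of state/bit-vector pairs for \(\exists\).
\end{itemize}
Each table is verified directly, and inconsistent assignments are handled by a product with the bit-vector space \(S_C\). Your type-based route is shorter and more robust, but it rests entirely on the compositionality theorem you defer. That theorem must also cover introduce nodes, which are no longer the identity on types since your \(G_s\) contains \(\ell(s)\). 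The paper's construction is self-contained and yields explicit bounds, such as the power tower governed by quantifier alternations in Section~\ref{sec73}.

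\emph{V-tree orientation.} The paper puts the child on the left: \(\mathit{node}(t_q,\mathit{node}(t_{\mathit{ctx}},x))\) at forget nodes and \(\mathit{node}(t_{q_L},\mathit{node}(t_{q_R},x))\) at join nodes, with a fresh dummy variable \(x\). As a result, child state-SDDs only ever occur as primes. Every disjunction the paper needs is then a decomposition \(\beta_S=\{(g_B(b),\top)\mid b\in S\}\cup\{(g_B(b),\bot)\mid b\notin S\}\) (Section~\ref{sec:con2}).

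\textbf{Gap: unions of child SDDs as subs.} Your orientation puts the child on the right, and this is where a step is missing. Your subs \(\bigvee_{\sigma\in S}\Delta_{t,\sigma}\), and likewise your root disjunction over accepting types, must themselves be SDDs respecting a node inside the child's v-tree. An SDD has no free-standing OR node, so these unions cannot simply be ``added as extra shared nodes''. If each union is charged \(g(k)\) times the pieces below it, as you suggest, that factor compounds over the depth of \(T\), which can be \(\Theta(n)\).

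The missing observation is that at each node, all \(\Delta_{s,\tau}\) share the same primes: the classes \(\pi\) at a forget node, and the \(\Delta_{t,\sigma}\) of the left child at a join node. Write \(D_{s,S}=\bigvee_{\tau\in S}\Delta_{s,\tau}\). Then:
\begin{itemize}
\item at a forget node \(s\) with child \(t\), \(D_{s,S}=\{(\pi,D_{t,\pi^{-1}(S)})\}\);
\item at a join node \(s\) with children \(t,u\), \(D_{s,S}=\{(\Delta_{t,\sigma},D_{u,\{\rho\,:\,\mathrm{join}(\sigma,\rho)\in S\}})\}\);
\item the lowest forget nodes are built directly over their chains.
\end{itemize}
The family \(\{D_{s,S}\}_{S}\) is therefore closed under the recursion. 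It costs at most \(2^{g(k)}\) decompositions of bounded size per node, which still gives \(f(k)\cdot n\) overall, and the root SDD is just \(D_{r,A}\) for the accepting set \(A\).

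Two further points need care. You must drop unrealizable types, whose SDD is \(\bot\), from the join primes. You must also splice out forget nodes with empty \(D_s\), composing their type maps (and those of contracted introduce nodes) into the parent's table. With these additions your argument goes through.
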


\begin{theorem}\label{thm:obdd-ub}
   Let \(G=(V, E)\) be a graph with pathwidth \(w\). Let \(\varphi\) be
   an \MSO{} formula interpreted over \(G\). Let \(n=|V|+|E|\) and
   \(k=w+|\varphi|\). Then there exists an OBDD representing
   \(\mathcal{F}_{\varphi, G}\) of size that is fixed-parameter
   linear in \(n\) for parameter \(k\)
\end{theorem}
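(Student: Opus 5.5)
The plan is to derive Theorem~\ref{thm:obdd-ub} from the SDD construction behind Theorem~\ref{thm:sdd-ub}, specialized to path decompositions. First, take a path decomposition of \(G\) of width \(w\) and convert it into a nice path decomposition: a rooted path with no join nodes, empty root label, and \(O(n)\) nodes, as in Lemma~\ref{strongnice}. By Lemmas~\ref{lemma:oneforget} and~\ref{lemma:oneforgete}, every vertex and every edge is forgotten by exactly one node. Assign the decision variables in \(\domPhiG\) for a vertex \(v\) (namely \([x=v]\) and \([v\in X]\) over all free variables of \(\varphi\)) to the node forgetting \(v\). Do the same for each edge \(e\) and the node forgetting \(e\). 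Ordering the forget nodes from the leaf to the root, and ordering the variables of a single node arbitrarily, gives a linear order \(<\) on \(\domPhiG\). This order corresponds to a v-tree that is a right path, i.e.\ a right-linear v-tree.

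Second, reuse the dynamic programming of Section~\ref{sec:decproc}. For each node \(s\), it computes a state that summarizes, for every subformula, the relevant information about the partial assignment restricted to vertices and edges already forgotten below \(s\). This is a Courcelle-style type, such as a set of tuples of local types relative to the bag \(\ell(s)\). The number of distinct states depends only on \(k\); call this bound \(g(k)\). The transition function at a forget node reads the values of the decision variables assigned to that node and produces the parent state. These values are a bounded number (at most \(|\varphi|\) per forgotten object) of bits, together with the bag-local information.

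Third, build the OBDD layer by layer as a deterministic automaton read along \(<\). The OBDD has one node for each pair of a position in the order and a reachable state of the dynamic programming. Within a forget node's block of at most \(|\varphi|(w+2)\) variables, expand the transition as a small decision tree. Its intermediate nodes record the partially read bits together with the incoming state, so a block has size at most \(g(k)\cdot 2^{O(|\varphi| w)}\). Consistency is tracked in the same way: for every free object variable, keep one extra bit meaning ``already assigned'', and reject on a second assignment or if the variable is still unassigned at the root. At the root, accept exactly the states whose summary certifies \(\varphi\). Summing over the \(O(n)\) forget nodes bounds the size by \(O(f(k)\cdot n)\).

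An alternative route uses the SDD from Theorem~\ref{thm:sdd-ub}: with a right-linear v-tree, each SDD decomposition with a single-variable left child is a Shannon decision node. The main obstacle is showing that the dynamic programming states depend only on the prefix of the order, i.e.\ that they are left-to-right computable without join nodes. I expect the absence of join nodes in a path decomposition to give this directly. The remaining care is in verifying that the per-node transition involves only boundedly many variables, so that no blow-up depending on \(n\) occurs inside a block.
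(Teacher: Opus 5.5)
Your main route is correct and is essentially the paper's construction: you grow the diagram at the leaf level along the nice path decomposition, attach a full decision tree over each forget node's context to every state leaf, merge leaves by state of the consistency-extended space \(S'_\varphi\), and map accepting states to \(1\) at the root, which gives width \(|S'_\varphi|\cdot 2^{|\varphi|(w+1)}\) and height \(O(|\varphi|\, n)\). Your side remark about the SDD route does not go through directly, because the SDD construction in Section~\ref{sec:sdd-ub} puts the previously built v-tree \(t_q\) as the left child at every forget node, so even on a path decomposition that v-tree is not right-linear and the SDD cannot simply be read off as an OBDD; that is why the paper grows the OBDD at the leaf level instead.
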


\begin{theorem}\label{thm:obdd-lb}
   For every \(w\geq 1\), there is an \MSO{} formula \(\varphi\), an infinite
   sequence of graphs \(G_1, G_2, \dots\), and a computable function \(f\)
   satisfying the following properties:
   \begin{itemize}
      \item All graphs \(G_i\) have treewidth at most \(w\).
      \item For every graph \(G_i\), size of any OBDD representing \(\mathcal{F}_{\varphi,
         G_i}\) is at least \(f(w)\cdot n^{w/4}\) where \(n\) is the number of
         vertices of \(G_i\).
   \end{itemize}
\end{theorem}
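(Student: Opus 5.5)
The plan is to derive Theorem~\ref{thm:obdd-lb} directly from the lower bound of~\citet{razgon2014obdds}, using an \MSO{} formula whose function \(\funcPhiG\) coincides with the CNF in Razgon's construction. As I recall it, Razgon proves the following: for every \(w\) there is an infinite family of graphs \(H\) of treewidth at most \(w\) such that every OBDD computing the monotone 2-CNF
\[
   \psi_H=\bigwedge_{\{u,v\}\in E(H)}(x_u\vee x_v)
\]
has size at least \(f(w)\cdot n^{w/4}\), where \(n=|V(H)|\) and \(f\) is computable. The primal graph of \(\psi_H\) is \(H\) itself. I would take \(G_1, G_2,\dots\) to be exactly this family of graphs \(H\). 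If Razgon's exponent or constant differs slightly from \(w/4\), I would absorb the difference into \(f\) and into the choice of which treewidth value to plug in.

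The formula is the vertex cover property with a single free vertex set variable \(X\). Written with only \(\neg\), \(\wedge\) and existential quantifiers it is
\[
   \varphi \equiv \neg(\exists e\in U_E)\,\neg(\exists v\in U_V)\,\bigl(\Adj(v,e)\wedge (v\in X)\bigr).
\]
This says that every edge has an endpoint in \(X\). Its size is a constant, independent of \(w\) and of the graph.

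Next I identify \(\funcPhiG\) with \(\psi_H\). The only free variable of \(\varphi\) is the vertex set variable \(X\), so Definition~\ref{def:decision-variables} gives \(\domPhiG=\{[v\in X]\mid v\in V\}\). Definition~\ref{def:consistent} imposes conditions only on object variables, so every \(\domPhiG\)-assignment \(\delta\) is consistent. Each such \(\delta\) encodes the set \(X_\delta=\{v\mid \delta([v\in X])=1\}\), and \(\varphi\) holds for \(X_\delta\) exactly when every edge \(\{u,v\}\) satisfies \(\delta([u\in X])\vee\delta([v\in X])\). Hence, after renaming \([v\in X]\) to \(x_v\), \(\funcPhiG\) is precisely \(\psi_H\). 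An OBDD representing \(\funcPhiG\) is therefore an OBDD computing \(\psi_H\), and Razgon's bound applies to it verbatim.

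The main obstacle is confirming that the cited lower bound really concerns monotone 2-CNFs whose primal graph has bounded treewidth, with that precise exponent. If instead it concerns general CNFs of bounded primal treewidth and bounded clause width, I would change the encoding. Clause and variable vertices would be separated by gadgets: for instance a pendant path of distinguished length, recognizable in \MSO{}. Negative occurrences would be marked by subdividing the connecting edge. \(\varphi\) would then state that for every clause vertex some adjacent literal is satisfied by \(X\) restricted to variable vertices. For the unused decision variables \([v\in X]\) on gadget vertices, I would either force them false inside \(\varphi\) or argue that restricting an OBDD by fixing those variables does not increase its size. These gadgets raise treewidth by at most a constant and the number of vertices by a constant factor, so \(f\) and the exponent change only by adjustments that can be absorbed as described above.
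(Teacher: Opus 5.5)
\textbf{Gap: the cited lower bound is for a different function.} The lower bound of \citet{razgon2014obdds} used here is not about the monotone 2-CNF $\bigwedge_{\{u,v\}\in E}(x_u\vee x_v)$. It is about $\CNF(G)=\bigwedge_{e=\{u,v\}\in E}(x_u\vee x_e\vee x_v)$ on clique trees, where every clause carries its own edge variable $x_e$; the bound is $2^{rk/2}$ for $\CNF(\KT_{k,r})$. Your vertex-cover formula does give $\mathcal{F}_{\varphi,G}\equiv\bigwedge_{\{u,v\}\in E}(x_u\vee x_v)$, and your consistency argument is fine. But the bound does not transfer in the direction you need. Restricting an OBDD for $\CNF(G)$ by $x_e:=0$ yields an OBDD for your 2-CNF of no larger size. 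So a lower bound for the 2-CNF would imply one for $\CNF(G)$, not conversely. As written, your central step rests on a theorem you do not have. Establishing the 2-CNF bound would need a separate argument, for instance large induced matchings across some cut of every variable order, which the cited result does not supply.

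\textbf{The fallback does not work as stated.} A gadget encoding that raises the treewidth by a constant cannot be absorbed into $f$. To stay within treewidth $w$ you would have to invoke the bound for some smaller $w'$. That lowers the exponent to $w'/4$, and $n^{w'/4}$ is not bounded below by $f(w)\cdot n^{w/4}$ for any function $f$.

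\textbf{The missing idea is that $\MSO{}$ has edge set variables, so no gadgets are needed.} Take a free vertex set variable $X_V$ and a free edge set variable $X_E$. Let $\kappa(X_V,X_E)$ say that for every edge $e$ and distinct vertices $u,v$ with $\Adj(u,e)\wedge\Adj(v,e)$, we have $u\in X_V$, or $v\in X_V$, or $e\in X_E$. Then the decision variables $[v\in X_V]$ and $[e\in X_E]$ play exactly the roles of $x_v$ and $x_e$. Consistency is vacuous because no object variable is free, so $\mathcal{F}_{\kappa,G}\equiv\CNF(G)$ on Razgon's clique trees themselves. Their treewidth is controlled directly by his bound $\tw(\KT_{k,r})\le 2k-1$ with bounded clique size. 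Finally, $\CNF(G)$ has $m=|V|+|E|\ge n$ variables, so the bound $f(w)\,m^{w/4}$ yields $f(w)\,n^{w/4}$. This is the paper's proof.
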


We prove theorems~\ref{thm:sdd-ub} and~\ref{thm:obdd-ub} by describing an
algorithm that constructs the respective decision diagrams. The construction is
based on a decision procedure for checking satisfiability of the given \MSO{}
formula \(\varphi\) on a given graph \(G=(V, E)\) with a given nice tree
decomposition \(T\). The decision procedure is described in
Section~\ref{sec:decproc}. It is very similar to the approach used
by~\citet{COURCELLE199012}. It is based on dynamic programming based on the
structure of the tree decomposition. The procedure manages states associated
with each node of the decomposition. The state space depends on the structure of
formula \(\varphi\) --- we need a different set of states for atomic formulas
such as equality or adjacency, then the state spaces for atomic formulas can be
combined into the state spaces for more complex formulas. The key feature of the
decision procedure is that the number of the states in the state space does not
depend on graph \(G\), but only on the structure of \(\varphi\) and the width
\(w\) of the decomposition \(T\). Most work is done in the forget and join
nodes. For join nodes, the procedure for assigning the state to a node of \(T\)
based on the states of its child nodes depends only on the states and not on any
further values of variables in \(\varphi\). For forget nodes, the procedure that
assigns a state to a decomposition node based on the state of the child node
depends only on the variables from \(\domPhiG\) that are associated with the
vertex and edges of \(G\) that are being forgotten. The set of the involved
decision variables is called the context of the forget node. Context is local to
the forget node, no variables from it are part of a context of another forget
node. This locality then allows us to describe the state transformation as a
decision diagram, specifically as an SDD in Section~\ref{sec:sdd-ub} or as an
OBDD in Section~\ref{sec:obdd-ub}.

The procedure needs to be extended to detect inconsistent assignments which is described in
Section~\ref{ssec:inconsistent-assign}. To this end, state spaces need to be
extended with bit vectors keeping the status of each variable
(assigned/unassigned).

Theorem~\ref{thm:obdd-lb} relies on a lower bound by~\citet{razgon2014obdds},
we show that the problem used for the separation can be specified with a
\MSO{} formula.

The structure of the proofs is as follows. We first describe the decision
procedure in Section~\ref{sec:decproc}, including the consistency checking part
in Section~\ref{ssec:inconsistent-assign}.
The subsequent sections prove the above theorems. In particular,
Theorem~\ref{thm:sdd-ub} is proved in Section~\ref{sec:sdd-ub},
Theorem~\ref{thm:obdd-ub} is proved in Section~\ref{sec:obdd-ub}, and
Theorem~\ref{thm:obdd-lb} is proved in Section~\ref{sec:obdd-lb}.


\section{Deciding \texorpdfstring{\MSO{}}{MSO2} By Dynamic Programming}\label{sec:decproc}

The purpose of this section to describe a dynamic algorithm for deciding if a
given \MSO{} formula is satisfiable for a given graph. The procedure is
motivated by other similar procedures and approaches to the Courcelle's
theorem~\cite{COURCELLE199012, LRRS12, henriksen1995mona}, however, we need to
describe the procedure in a way that is suitable for the subsequent step in
which we describe the construction of a decision diagram representing the models
of the formula.

The procedure is defined recursively based on the structure of the input \MSO{}
formula \(\varphi\). It is based on the construction of the state tables
associated with the nodes of the tree decomposition of the input graph \(G=(V,
E)\). These state tables will then be used as the basis for the construction of
an SDD in Section~\ref{sec:sdd-ub}.

We shall start by describing a specific vertex coloring in
Section~\ref{ssec:vertex-color} that will allow us to succinctly store node
index in a state. Then we will specify the context of the decision procedure in
Section~\ref{ssec:dec-context}. The decision procedure itself is described in
Section~\ref{sectionstate}. Finally, in Section~\ref{ssec:inconsistent-assign},
we will describe a way of dealing with assignments to the decision variables
that are not consistent (according to Definition~\ref{def:consistent}).

\subsection{Good Vertex Coloring}\label{ssec:vertex-color}

When working with adjacency predicate in the main algorithm, we need to be able
to remember a specific vertex of the graph. To keep the complexity of the state
space in the algorithm bounded by a parameterized constant, we will represent
the vertex using a number from \(1\) to $(w+1)$ (we will use \([w+1]\) to denote
the set of these numbers), where $w$ is the width of the tree decomposition used
in the decision procedure. For this purpose, we will define \textbf{good vertex
coloring}.

\begin{definition}[Good vertex coloring]\label{def:color}
   Let $G=(V, E)$ be an undirected graph with nice tree decomposition
   $T=(V_T, E_T, \ell, r)$ of width $w$. We say that a function
   \(c: V\to [w+1]\) is a \textbf{good vertex coloring of $G$ with respect to
   $T$} if for every node \(p\in V_T\) and every pair of vertices \(u,
   v\in \ell(p)\) we have that \(c(u)\neq c(v)\).
\end{definition}

Let us note that a good vertex coloring \(c\) of \(G\) is a graph coloring of
\(G\) in the usual sense, because if \(\{u, v\}\in E\) is
an edge, then by condition~\ref{t2} we have that \(\{u, v\}\subseteq\ell(p)\)
for some \(p\in V_T\) and thus \(c(u)\neq c(v)\).

We will now show that such colorings exist for all graphs and decompositions.

\begin{lemma}[Existence of a good vertex coloring]
Let $G=(V, E)$ be an undirected graph with nice tree decomposition
$T=(V_T, E_T, \ell, r)$ of width $w$. Then there exists a function $c\colon V
\to [w+1]$ which is a good vertex coloring of $G$ with
respect to $T$.
\end{lemma}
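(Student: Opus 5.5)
The coloring will be built greedily by a top-down traversal of the nice tree decomposition \(T\), starting at the root \(r\). The invariant is the following: whenever a node \(p\) has been processed, every vertex of \(\ell(p)\) already carries a color, and the colors on \(\ell(p)\) are pairwise distinct. Since \(|\ell(p)|\le w+1\), there is always enough room in \([w+1]\). At the root the invariant holds vacuously, because \(\ell(r)=\emptyset\) by~\ref{n2}.

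For the inductive step, let \(t\) be a child of an already processed node \(s\), and consider the node types.
\begin{itemize}
   \item If \(s\) is a join node, then \(\ell(t)=\ell(s)\) and nothing changes.
   \item If \(s\) is \enquote{introduce \(v\)}, then \(\ell(t)=\ell(s)\setminus\{v\}\subseteq\ell(s)\). All its vertices are colored and distinct, so again nothing is assigned.
   \item If \(s\) is \enquote{forget \(v\)}, then \(\ell(t)=\ell(s)\cup\{v\}\). The vertices of \(\ell(s)\) use at most \(|\ell(t)|-1\le w\) colors, so some color in \([w+1]\) is free, and we give it to \(v\).
\end{itemize}
In the forget case we must check that \(v\) has not already been colored elsewhere with a conflicting color. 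By Lemma~\ref{lemma:oneforget} there is exactly one \enquote{forget \(v\)} node, so \(v\) receives its color exactly once and \(c\) is well defined. Every vertex is colored because \(T^{(v)}\) is nonempty by~\ref{t3}, and \(v\notin\ell(r)\), so the path from the root down to a node containing \(v\) must pass through the \enquote{forget \(v\)} node.

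The main obstacle is that the invariant is local to labels along the traversal, while the definition of a good coloring concerns every node \(p\) and every pair \(u,v\in\ell(p)\). The top-down construction already settles this. When we reach \(p\), the invariant says that the colors on \(\ell(p)\) are distinct, and these colors are final because each vertex is colored only at its unique forget node. So the invariant at each \(p\) is exactly the required property.

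One subtlety deserves a check: could a vertex \(v\in\ell(t)\setminus\ell(s)\) appear below a node other than its forget node? By connectivity~\ref{t3} and \(v\notin\ell(r)\), the topmost node of \(T^{(v)}\) is the child of the \enquote{forget \(v\)} node. Hence \(v\) first enters a label in the top-down pass precisely at the forget step.
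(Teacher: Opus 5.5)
Your proof is correct and takes essentially the paper's approach. Each vertex is colored exactly once, at its unique \enquote{forget} node, going from the root downward, and a free color always exists because that node's label has at most \(w\) already-colored vertices. The differences are cosmetic: the paper orders assignments by the depth of the forget node and checks goodness with a pairwise argument (if \(d_u<d_v\) then \(u\in\ell(t_v)\)), whereas you keep a per-label invariant along a top-down traversal.
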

\begin{proof}
   For every node \(v\in V\), let \(t_v\) denote the \enquote{forget \(v\)} node
   which which is unique by Lemma~\ref{lemma:oneforget}. In addition, let
   \(d_v\) denote the depth of \(t_v\) in \(T\). We shall define \(c(v)\)
   recursively based on \(d_v\).

   \begin{itemize}
      \item If \(d_v=0\), then \(t_v\) is the root \(r\) of \(T\). We set
         \(c(v)\) to be an arbitrary value from \([w+1]\).
      \item If \(d_v>0\), we assume that all nodes \(u\in V\) with \(d_u<d_v\)
         have \(c(u)\) already assigned. This applies to all nodes
         in \(\ell(t_v)\), because they must be forgotten on the path from
         \(t_v\) to the root. We have that \(|\ell(t_v)|\leq w\),
         because the child \(s\) of \(t_v\) has label \(\ell(s)=\ell(t_v)\cup\{v\}\),
         and \(|\ell(s)|=|\ell(t_v)|+1\leq w+1\). The set of colors
         \(W=[w+1]\setminus\{c(u)\mid u\in\ell(t_v)\}\) is thus nonempty and we
         can set \(c(v)\) to be any value from \(W\).
   \end{itemize}

   Let us now show that \(c\) is a good coloring. Let us consider a node \(p\in
   V_T\) and vertices \(u, v\in\ell(p)\). Both forget nodes \(t_u\) and \(t_v\)
   lie on the path from \(p\) to the root \(r\). A single node from \(V_T\) may
   only forget a single vertex from \(V\), \(t_u\) and \(t_v\) are thus two
   different nodes. Assume without loss of generality that \(d_u<d_v\). It
   follows that \(u\in\ell(t_v)\) and thus by the definition of \(c(v)\) we have
   that \(c(v)\neq c(u)\).
\end{proof}

\subsection{Decision Procedure Context}\label{ssec:dec-context}

Most work of the decision procedure is done when processing forget nodes.
Only decision variables related to the vertex and edges being forgotten are
involved in this process. These decision variables together form the
context associated with the forget node.

\begin{definition}[Forget node context]
   Let \(\varphi\) be an \MSO{} formula and
   let $G=(V, E)$ be an undirected graph with nice tree decomposition
   $T=(V_T, E_T, \ell, r)$ of width $w$. Let $c$ be a good vertex coloring of $G$ with
   respect to $T$. Let $p$ be a \enquote{forget $u$} node in $T$ for some \(u\in
   V\), in addition, let \(E_F\subseteq E\) denote the set of edges that are
   being forgotten in \(p\).
   We define the \textbf{context} of node $p$, denoted
   $\CTX_{\varphi, T}(p)$, to be the following subset of $\mathcal{D}_{\varphi, G}$:
   \begin{align*}
          &\{[x = u] \mid x \text{ is a free vertex object variable in $\varphi$}\}\\
      \cup\; &\{[x = e] \mid x \text{ is a free edge object variable in $\varphi$ and $e\in E_F$}\}\\
      \cup\; &\{[u \in X] \mid X \text{ is a free vertex set variable in $\varphi$}\}\\
      \cup\; &\{[e \in X] \mid X \text{ is a free edge set variable in $\varphi$ and $e\in E_F$}\}\\
   \end{align*}
\end{definition}

Lemmas~\ref{lemma:oneforget} and~\ref{lemma:oneforgete} imply that every decision
variable belongs to exactly one context. Restricting the decision procedure to
access only the data stored in the context makes it possible to use the
procedure as the basis for the construction of a decision diagram.

\begin{lemma}[Size of a context]%
\label{lemma:soc}
For a tree decomposition $T$ of width $w$, any forget node $p$ in $T$, and
formula $\varphi$, the size of $\CTX_{\varphi, T}(p)$ is at most
$|\varphi|\cdot(w+1)$
\end{lemma}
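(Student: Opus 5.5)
The plan is a counting argument over the four blocks in the definition of $\CTX_{\varphi, T}(p)$. Fix a \enquote{forget $u$} node $p$ with child $t$, so that $\ell(t)=\ell(p)\cup\{u\}$ and $|\ell(t)|\leq w+1$. For each free variable of $\varphi$, we count how many context variables it contributes, and then sum over the free variables.

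\textbf{Step 1: bound the forgotten edges.} We first bound $|E_F|$. An edge $e=\{a,b\}$ is forgotten in $p$ only if $\{a,b\}\subseteq\ell(t)$ and $\{a,b\}\not\subseteq\ell(p)$. Since $\ell(p)=\ell(t)\setminus\{u\}$, one endpoint must be $u$ and the other must lie in $\ell(t)\setminus\{u\}=\ell(p)$. Hence $E_F\subseteq\{\{u,a\}\in E\mid a\in\ell(p)\}$, and so $|E_F|\leq|\ell(p)|\leq w$.

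\textbf{Step 2: count per free variable.}
\begin{itemize}
   \item A free vertex object variable $x$ contributes only $[x=u]$, which is one variable.
   \item A free vertex set variable $X$ contributes only $[u\in X]$, which is one variable.
   \item A free edge object variable contributes at most $|E_F|\leq w$ variables.
   \item A free edge set variable contributes at most $|E_F|\leq w$ variables.
\end{itemize}
Every free variable therefore contributes at most $\max(1,w)\leq w+1$ context variables.

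\textbf{Step 3: sum.} By the observation at the end of Section~\ref{ssec:mso}, the number of distinct free variables of $\varphi$ is at most $|\varphi|$. Multiplying gives $|\CTX_{\varphi, T}(p)|\leq|\varphi|\cdot(w+1)$.

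The only step needing care is Step 1. It uses the nice-decomposition structure of a forget node, namely that its child's label is exactly $\ell(p)\cup\{u\}$. The definition of forgetting an edge contains a typo: it writes $\ell(t)$ twice where the second occurrence should be $\ell(p)$. We read it as intended when extracting that the forgotten edges are exactly the edges from $u$ into $\ell(p)$. Everything else is direct counting.
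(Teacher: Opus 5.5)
Your proof is correct and is essentially the paper's argument. Both bound the number of forgotten edges by $w$ using $|\ell(t)|\leq w+1$ and then count pairs consisting of a free variable and a forgotten object; the paper sums over the at most $w+1$ forgotten objects (each giving at most $|\varphi|$ variables), you sum over the at most $|\varphi|$ free variables, and your Step~1 spells out (with the typo correctly repaired) why every forgotten edge joins $u$ to a vertex of $\ell(p)$, which the paper leaves implicit.
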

\begin{proof}
   Assume that $p$ is the \enquote{forget $u$} node and its only child is $q$.
   The node $q$ may contain up to $w+1$ vertices in its label, so the vertex $u$
   may be connected to at most $w$ other nodes from $\ell(q)$. Therefore, we are
   forgetting at most $w$ edges. Since we are also forgetting the vertex $u$,
   there are up to $w+1$ graph objects that we are forgetting in total.

   For each graph object, there are as many other decision variables as there
   are free variables in $\varphi$, we thus have at most $|\varphi|$ decision
   variables per forgotten graph object. In total, we can use $(w+1)|\varphi|$
   as an upper bound for the size of $\CTX_{\varphi, T}(p)$.
\end{proof}

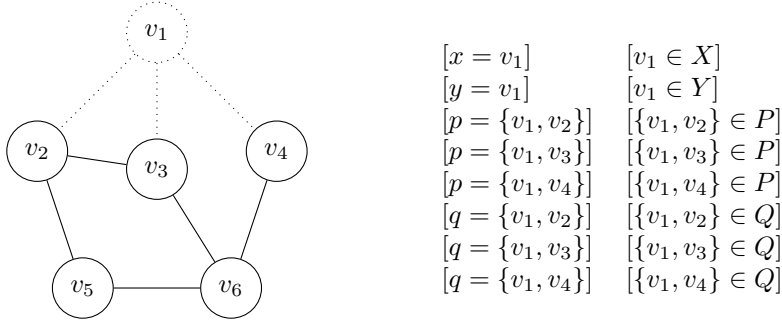
\begin{figure}
\begin{center}

\begin{tikzpicture}[vertex/.style={circle, draw, minimum size=8mm}]
   \node (v1) [vertex, dotted] {\(v_1\)};
   \node (v2) [vertex, below left=of v1] {\(v_2\)};
   \node (v3) [vertex, below=of v1] {\(v_3\)};
   \node (v4) [vertex, below right=of v1] {\(v_4\)};
   \node (v5) [vertex, below left=1cm and 4mm of v3] {\(v_5\)};
   \node (v6) [vertex, below right=1cm and 4mm of v3] {\(v_6\)};

   \draw [dotted] (v1) -- (v2);
   \draw [dotted] (v1) -- (v3);
   \draw [dotted] (v1) -- (v4);
   \draw (v2) -- (v3);
   \draw (v2) -- (v5);
   \draw (v3) -- (v6);
   \draw (v4) -- (v6);
   \draw (v5) -- (v6);

   \node [right=3cm of v1, anchor=north west] {
      \begin{tabular}{l l}
         \([x=v_1]\) & \([v_1\in X]\)\\
         \([y=v_1]\) & \([v_1\in Y]\)\\
         \([p=\{v_1, v_2\}]\) & \([\{v_1, v_2\}\in P]\)\\
         \([p=\{v_1, v_3\}]\) & \([\{v_1, v_3\}\in P]\)\\
         \([p=\{v_1, v_4\}]\) & \([\{v_1, v_4\}\in P]\)\\
         \([q=\{v_1, v_2\}]\) & \([\{v_1, v_2\}\in Q]\)\\
         \([q=\{v_1, v_3\}]\) & \([\{v_1, v_3\}\in Q]\)\\
         \([q=\{v_1, v_4\}]\) & \([\{v_1, v_4\}\in Q]\)\\
      \end{tabular}
   };
\end{tikzpicture}
\end{center}
\caption{An example of a forget node context when forgetting node \(v_1\) and
all of its incident edges.}%
\label{fig:ctx}
\end{figure}

Figure~\ref{fig:ctx} shows an example of a forget node context. The vertex that
is being forgotten has dotted outline. The edges that are being forgotten also
are dotted. The decision variables from the context are listed. The context was
constructed for a formula, with free vertex object variables $x$ and $y$, free
edge object variables $p$ and $q$,  free vertex set variables $X$ and $Y$ and
free edge set variables $P$ and $Q$.

\subsection{Decision Procedure}%
\label{sectionstate}

We are ready to describe the decision procedure which determines the value of a
given \MSO{} formula \(\varphi\) on a given graph \(G=(V, E)\) and a
\(\varphi\)-assignment \(\alpha\). It works with the decision variables
introduced in Definition~\ref{def:decision-variables} rather
than directly working with the object and set variables from \(\varphi\). This
will allow us to then turn the description of the decision procedure into a
construction of a decision diagram representing the models of \(\varphi\).

The decision procedure is based on a dynamic programming which follows the
structure of a nice tree decomposition \(T=(V_T, E_T, \ell, r)\) of \(G\).
During its work, the procedure assigns a state to each node of the tree
decomposition. For each type of node, we will describe a lookup table which
allow us to determine the new state depending on the states of the child nodes.
The state space depends on the structure of the formula. Most of the work is
done in forget nodes where the lookup table has to consider the values of
context variables.

The procedure starts in the leaves with a fixed initial state, the state is then
updated in each forget node. In addition, the state will be updated in a join
node where two states coming from the child nodes need to be combined. We will
define the possible states by induction on the structure of \(\varphi\). When
the procedure gets to the root, we only check if the root state is one of the
accepting states.

Since we only rely on $\varphi$ and the number of different colors bounded by
$w+1$ to design the state spaces and tables, their size will be a function of
the parameter \(k=|\varphi|+w\) which is a constant for fixed $w$ and
$|\varphi|$. We will first assume that assignments to decision variables given
throughout the procedure are consistent (see Definition~\ref{def:consistent}).
In Section~\ref{ssec:inconsistent-assign} we will then describe how checking of
consistency of the assignment can be incorporated into the decision procedure.

We will prove correctness of the decision procedure. The decision procedure is
correct for $\varphi$ and $G$, if it evaluates to accepting state if and only if
$\varphi$ is true for a given consistent $\varphi$-assignment. 

\subsubsection{Overview Of The State Spaces}

When it comes to \MSO{} formulas, the building blocks can be split into \(3\) categories:

\begin{itemize}
\item Atomic formulas
\item Boolean connectives
\item Existential quantification
\end{itemize}

The state space of equality and membership will consist of states $\SI$ and
$\ST$, where only $\ST$ is accepting. For false instantiations of these atomic
formulas, the computation will stay in the state $\SI$. The state space of an adjacency predicate
will consist of states $\SI$, $\ST$, and an additional
state for each color. To see why we need more states, consider an edge $e$ with
endpoints $u$ and $v$. If we would like to determine $\Adj(x, y)$ with
$\alpha(x)=v$ and $\alpha(y)=e$ while forgetting vertex $u$ and edge $e$, we
will need to delay the decision until we are forgetting $v$, since at the time
of forgetting $u$, the decision procedure has no information on whether
$\alpha(y)$ is equal to $v$, because it has access only to the context of the
procedure. To delay the decision until $v$ is being forgotten,
we store the color of $v$ in the state.

For negation, we will swap the role of accepting states and non-accepting
states. For conjunction, we will use product state space, where a product state
is accepting if and only if both of the constituent states were accepting for
both conjuncts.

The state space of existential quantification of the form $(\exists x_1, \dots,
x_k)\psi$ in which several variables that were free in \(\psi\) are now bound,
we will use subsets of states from the state space of $\psi$. These subsets will
represent all reachable states in the decision procedure for $\psi$ for
different instantiations of the bound variables. Additional bits of information
will be added to the states from the state space for $\psi$ to check consistency
of assignments to the bound variables.

Considering multiple bound variables at once will allow us to prove that the power tower
in the estimate on the size of the state space for a formula in prenex form
depends on the number of quantifier alterations, rather than the length of the
formula.

\subsubsection{Definition, Notations, And The Full Procedure}

For the description of the procedure, let us fix an undirected graph \(G=(V,
E)\) together with its nice tree decomposition \(T=(V_T, E_T, \ell, r)\) of
width \(w\). Let us also fix an \MSO{} formula \(\varphi\) which we wish to
evaluate and a good vertex coloring \(c\) of graph \(G\) with respect to the
tree decomposition \(T\). With \(\varphi\), we shall associate its state space
\(S_\varphi\), initial state \(s_\varphi\), and the set of accepting states
\(A_\varphi\).  We shall describe a procedure that evaluates \(\varphi\) on a
given assignment \(\alpha\). The procedure will use two lookup tables for
\(\varphi\). Forget lookup table \(\FORGET_{\varphi, w}(s, \delta)\) is defined
for every forget node \(p\). It returns
the state of \(p\) given the state \(s\) of the child node of \(p\)
and the \(\mathcal{D}_{\varphi, G}\) assignment \(\delta\) representing
\(\varphi\)-assignment \(\alpha\) restricted to the decision variables in the
context \(\CTX_{\varphi, T}(p)\). Join lookup table \(\JOIN_{\varphi, w}(s_L,
s_R)\) is associated with every join state \(p\).
It returns the state of \(p\) given the states \(s_L\) and
\(s_R\) of the left and right child of \(p\).

Before we start describing state spaces and lookup tables for the formula
\(\varphi\) depending on its structure, we will describe the full decision
procedure which assumes the existence of the above building blocks.
The full procedure is given in Algorithm~\ref{alg:full} and it uses function
\(\nodedp\) defined by Algorithm~\ref{alg:node} as a subroutine for individual
decomposition nodes.

\begin{algorithm}
\caption{\(\nodedp(\alpha, p)\)}\label{alg:node}
\begin{algorithmic}
   \Require{} $\alpha$, the assignment to free variables of $\varphi$
   \Require{} $p$, current decomposition node.
   \Ensure{} A state from $S_\varphi$
   \If{$p$ is a leaf node}
      \State\Return{$s_\varphi$}
   \ElsIf{$p$ is an introduce node}
      \State{} $q \gets$ the only child of $p$
      \State\Return{$\nodedp(\alpha, q)$}
   \ElsIf{$p$ is a forget node}
      \State{} $q \gets$ the only child of $p$
      \State{} $s_q \gets \nodedp(\alpha, q)$
      \State{} \(\delta = \mathcal{D}_{\varphi, G}\)-assignment representing \(\alpha\) restricted to $\CTX_{\varphi, T}(p)$
      \State{} \Return{$\FORGET_{\varphi, w}(s_q, \delta)$}
   \ElsIf{$p$ is a join node}
      \State{} $q_R, q_L \gets$ the two children of $p$
      \State{} $s_L \gets Node\_decision\_procedure(\alpha, q_L)$
      \State{} $s_R \gets Node\_decision\_procedure(\alpha, q_R)$
      \State\Return{$\JOIN_{\varphi,w}(s_L, s_R)$}
   \EndIf%
\end{algorithmic}
\end{algorithm}

\begin{algorithm}
   \caption{Full decision procedure}\label{alg:full}
   \begin{algorithmic}
      \Require{} $\alpha$, the assignment to free variables of $\varphi$
      \Ensure{} Evaluation of $\varphi$ on $\alpha$
      \State{} $s \gets Node\_decision\_procedure(\alpha,r)$
      \State\Return{boolean value based on whether $s \in A_\varphi$}
   \end{algorithmic}
\end{algorithm}

The definition of the lookup tables and states for $\varphi$ will be considered
\textbf{correct}, if for any graph $G$, tree decomposition $T$, good coloring $c$ and
consistent $\mathcal{D}_{\varphi, G}$-assignment $\delta$ representing a
$\varphi$-assignment $\alpha$, the decision procedure
correctly evaluates $\varphi$ for assignment $\alpha$.

We will now design the state spaces for any $\varphi$ by
recursion on its structure and also prove the correctness of the designed
procedure. In addition, we also define the lookup tables for each forget and
join node in the decomposition \(T\).

\subsubsection{State Space For Equality}

Let $\varphi \equiv (x = y)$, where $x$ and $y$ are object variables of the same
sort. Regardless of the variable sort, the state space is defined as $S_\varphi := \{\SI,
\ST\}$, with initial state set to $s_\varphi := \SI$ and the set of accepting
states is defined as $A_\varphi := \{\ST\}$.

\paragraph{Forget lookup table}

Let $s$ be the input state and $\delta$ be the context assignment of a forget
node. Let $v_F$ be the forgotten vertex and let $E_F$ be the set of all
forgotten edges in the forget node.

The output state \(s'=\FORGET_{\varphi, w}(s, \delta)\) is defined as \(\ST\) if
\(s=\ST\). If \(s=\SI\), then the state changes to \(\ST\) if \(x\) and \(y\)
are both equal to the same forgotten object according to \(\delta\). More
formally, if \(x\) and \(y\) are vertex object variables, then we set

\begin{equation}\label{eq:forget-equal-vert}
   s'=\begin{cases}
      \ST & \text{if \(s=\ST\)}\\
      \ST & \text{if \(\delta([x = v_F]) \wedge \delta([y = v_F])\)}\\
      \SI & \text{otherwise.}
   \end{cases}
\end{equation}

If both \(x\) and \(y\) are edge object variables, then we set

\begin{equation}\label{eq:forget-equal-edge}
   s'=\begin{cases}
      \ST & \text{if \(s=\ST\)}\\
      \ST & \text{if \(\delta([x = e_F]) \wedge \delta([y = e_F])\) for some \(e_F\in E\)}\\
      \SI & \text{otherwise.}
   \end{cases}
\end{equation}

\paragraph{Join lookup table}

The join lookup table is defined below. Note that two states $\ST$ cannot
actually meet and the result can be set arbitrarily. The result is set to $\ST$ to
simplify the proof of correctness.

\begin{center}
   \begin{tabular}{ c | c c }
      $s_L, s_R$ & $\SI$ & $\ST$ \\ 
      \hline
      $\SI$ & $\SI$ & $\ST$ \\ 
      $\ST$ & $\ST$ & $\ST$ \\
   \end{tabular}
\end{center}

\paragraph{Correctness}
Suppose that $\varphi \equiv (x = y)$ is true for a given \(\varphi\)-assignment
$\alpha$. In other words, $\alpha(x)=\alpha(y)$. Let $z$ be the graph object
assigned to both $\alpha(x)$ and $\alpha(y)$.

By forget node uniqueness (Lemma \ref{lemma:oneforget} or
\ref{lemma:oneforgete} depending on the sort of \(x\) and \(y\)), $z$ is
forgotten in a specific decomposition node $p$. When
Algorithm~\ref{alg:node} is evaluated for $p$, the output state is set to
\(\ST\) by the second line of~\eqref{eq:forget-equal-vert}
or~\eqref{eq:forget-equal-edge} depending on the sort of \(z\). Note that the
input state must be \(\SI\) in this case, because that is the initial state
assigned to the leaves and \(p\) is the only node that forgets \(z\). Then, on
the path from \(p\) to the root \(r\), all states must be \(\ST\), because this
value is propagated up by~\eqref{eq:forget-equal-vert}
or~\eqref{eq:forget-equal-edge}, and the definition of the join lookup table.

Since the state $\ST$ is propagated to the root, it is considered the final
state. Since $\ST$ is an accepting state, the procedure correctly decides, that
\(\varphi\) evaluates to false on $\alpha$.

Suppose now that $\varphi \equiv (x = y)$ is false for a given consistent
assignment $\alpha$. In other words, $\alpha(x)\ne\alpha(y)$.

We will show by induction, that the state assigned to the root is $\SI$.

Leaf nodes are assigned the initial state $\SI$. Introduce
nodes are assigned $\SI$ based on their child node, which is assigned $\SI$ by
induction hypothesis. A join node is assigned $\SI$, because both of its child
nodes have been assigned $\SI$ by induction hypothesis.

Finally consider a forget node. We assume that the input state is $\SI$, so
the first rule of~\eqref{eq:forget-equal-vert} or~\eqref{eq:forget-equal-edge}
does not apply. We also assume that $\alpha(x)\ne\alpha(y)$, so $\alpha(x)$ and
$\alpha(y)$ cannot be equal to the same graph object. This means that the third
rule of~\eqref{eq:forget-equal-vert} or~\eqref{eq:forget-equal-edge} applies and
the forget node is assigned $\SI$.

By induction, the root is assigned $\SI$, which is not an accepting state and
the procedure correctly decides, that \(\varphi\) evaluates to false on
$\alpha$.

\subsubsection{State Space For Membership}

Let $\varphi \equiv (x \in Y)$, where $x$ and $Y$ are object variable and set
variable respectively, both of the same sort.  Regardless of the variable sort,
the state space is defined as $S_\varphi := \{\SI, \ST\}$, with initial state set to
$s_\varphi := \SI$ and the set of accepting states is defined $A_\varphi := \{\ST\}$.

\paragraph{Forget lookup table}

Let $s$ be the input state and $\delta$ be the context assignment of a forget
node. Let $v_F$ be the forgotten vertex and let $E_F$ be the set of all
forgotten edges in the forget node.

The output state \(s'=\FORGET_{\varphi, w}(s, \delta)\) is defined as \(\ST\) if
\(s=\ST\). If \(s=\SI\), then the state changes to \(\ST\) if
the forgotten object is both equal to \(x\) and belongs to \(Y\) according to
\(\delta\).
More formally, if \(x\) is a vertex object variable and \(Y\) is a vertex set
variable, then we set

\begin{equation}\label{eq:forget-in-vert}
   s'=\begin{cases}
      \ST & \text{if \(s=\ST\)}\\
      \ST & \text{if \(\delta([x = v_F]) \wedge \delta([v_F\in Y])\)}\\
      \SI & \text{otherwise.}
   \end{cases}
\end{equation}

If \(x\) is an edge object variable and \(Y\) is an edge set
variable, then we set

\begin{equation}\label{eq:forget-in-edge}
   s'=\begin{cases}
      \ST & \text{if \(s=\ST\)}\\
      \ST & \text{if \(\delta([x = e_F]) \wedge \delta([e_F\in Y])\) for some \(e_F\in E\)}\\
      \SI & \text{otherwise.}
   \end{cases}
\end{equation}

\paragraph{Join lookup table}

The join lookup table is defined below. Note that two states $\ST$ cannot
actually meet and the result can be set arbitrarily. The result is set $\ST$ to
simplify the proof of correctness.

\begin{center}
   \begin{tabular}{ c | c c c }
      $s_L, s_R$ & $\SI$ & $\ST$ \\ 
      \hline
      $\SI$ & $\SI$ & $\ST$ \\ 
      $\ST$ & $\ST$ & $\ST$ \\  
   \end{tabular}
\end{center}

\paragraph{Correctness}
Suppose that $\varphi \equiv (x \in Y)$ is true for a given \(\varphi\)-assignment
$\alpha$. In other words, $\alpha(x) \in \alpha(Y)$.

By forget node uniqueness (Lemma \ref{lemma:oneforget} or \ref{lemma:oneforgete}
depending on the sort of \(x\) and \(Y\)), $\alpha(x)$ is forgotten in a
specific decomposition node $p$. When Algorithm~\ref{alg:node} is evaluated for $p$,
the output state is set to \(\ST\) using the second rule
of~\ref{eq:forget-in-vert} or~\ref{eq:forget-in-edge} depending on the sort of
\(x\) and \(Y\).

The state $\ST$ then propagates to the root
using~\eqref{eq:forget-in-vert},~\eqref{eq:forget-in-edge}, and the definition
of the join lookup table. The decision procedure thus correctly decides, that
\(\varphi\) evaluates to false on $\alpha$.

Suppose that $\varphi \equiv (x \in Y)$ is false for given consistent assignment
$\alpha$. In other words, $\alpha(x)\notin\alpha(Y)$.

We can show by induction that all nodes are assigned $\SI$ similarly as we did
for equality. The only difference is that we use the assumption
$\alpha(x)\notin\alpha(Y)$ to argue that the state cannot change from \(\SI\) to
\(\ST\) by applying the second rule of~\eqref{eq:forget-in-vert}
or~\eqref{eq:forget-in-edge}.

Since all nodes are assigned $\SI$, the root is assigned $\SI$, too, and the
decision procedure correctly decides, that
\(\varphi\) evaluates to false on $\alpha$.

\subsubsection{State Space For Adjacency}\label{sssec:adj}

Deciding adjacency is not as easy as deciding the previous two atomic formulas.
To illustrate, suppose that $v$ and $e$ are adjacent. Consider formula
$\Adj(x,y)$ and assignment $\alpha$ with $\alpha(x)=v$ and $\alpha(y)=e$, on
which the formula evaluates to true. The edge $e$ will be forgotten together
with one of its endpoints. The endpoint could be $v$ and then we can decide the
formula immediately. If we are forgetting the other endpoint (say \(u\)) first,
the decision procedure does not have variable \([x=v]\) in the context, and it
cannot thus check the validity of \(\Adj(x, y)\). It can do so when forgetting
\(v\), but at that moment, the procedure does not have variable \([y=e]\) in the
context. To pass the information from the \enquote{Forget \(u\)} node which also
forgets \(e\) to the \enquote{Forget \(v\)} node, we use color of \(v\) as the
state. Then in \enquote{Forget \(v\)} we check if the color in the state equals
the color of \(v\). This way, the size of the state space remains bounded by the
treewidth.

Let $\varphi \equiv \Adj(x, y)$, where $x$ is a vertex object variable and $y$ is
an edge object variable. The state space is defined as
$S_\varphi := \{\SI, \ST\} \cup
\{c_1, \dots, c_{w+1}\}$, where $w$ is the width of the decomposition \(T\).
The initial state set to
$s_\varphi := \SI$ and the set of accepting states is set to $A_\varphi := \{\ST\}$.

\paragraph{Forget lookup table}

Let $s$ be the input state and $\delta$ be the context assignment of a forget
node \(p\). Let $v_F$ be the forgotten vertex and let $E_F$ be the set of all
forgotten edges in the forget node. Recall that we assume a good coloring
\(c:V\to[w+1]\).

The output state \(s'=\FORGET_{\varphi,
w}(s, \delta)\) is obtained by the first applicable rule from the following
list.

\begin{enumerate}
   \item\label{enum:adj:1} If $s = \ST$, then \(s'=\ST\).
   \item\label{enum:adj:2} If \(s=c_i\) for some \(i\in[w+1]\) and \(c(v_F)=i\), then \(s'=\ST\) if $\delta([x = v_F])=1$ and \(s'=\SI\) otherwise.
   \item\label{enum:adj:3} If \(s=c_i\) for some \(i\in[w+1]\), then \(s'=s\).
   \item\label{enum:adj:4} If \(\delta([y=e])=1\) for some \(e\in E_F\) and \(\delta([x=v_F])=1\),
      then \(s'=\ST\).
   \item\label{enum:adj:5} If \(\delta([y=e])=1\) for some \(e=\{u, v_F\}\in E_F\), then \(s'=c_i\)
      where \(i=c(u)\).
   \item\label{enum:adj:6} Otherwise \(s'=\SI\).
\end{enumerate}

Note that
rule~\eqref{enum:adj:3} is only applicable if rule~\eqref{enum:adj:2} was not
and therefore \(c(v_F)\neq i\) in this case. Similarly, \(s=\SI\) in
rules~\eqref{enum:adj:4}-\eqref{enum:adj:6}.

\paragraph{Join lookup table}

The join lookup table is defined below. The states denoted as \enquote{cannot
occur} represent the situations that cannot happen during the work of
Algorithm~\ref{alg:full} and can thus be set arbitrarily without affecting the
correctness.

\begin{center}
\begin{tabular}{ c | c c c c }
 $s_L, s_R$ & $\SI$ & $\ST$ & $c_i$\\ 
\hline
 $\SI$ & $\SI$ & $\ST$ & $c_i$ \\ 
 $\ST$ & $\ST$ & Cannot occur & Cannot occur \\
$c_j$ & $c_j$ & Cannot occur & Cannot occur \\
\end{tabular}
\end{center}

\paragraph{Correctness}

Recall that \(\alpha\) denotes the assignment of values to the free variables of
\(\varphi\equiv\Adj(x, y)\). Let \(e_y=\alpha(y)\) be the edge assigned to
variable \(y\) and assume that \(e_y=\{u, v\}\) for vertices \(u, v\in V\).
Let
\(p_0\) denote the forget node of \(T\) which forgets \(e_y\) and let \(P=(p_0,
\dots, p_t)\) denote the path from \(p_0\) to the root \(r=p_t\) of \(T\). Let
us start with the following claim.

\begin{claim}\label{claim:adj}
   Assume \(q\in V_T\) is a node that is not on path \(P\), then the
   state assigned to \(q\) is \(\SI\)
\end{claim}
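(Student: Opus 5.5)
We argue by structural induction on \(T\), from the leaves upwards, restricted to nodes that are not on \(P\). First observe that the set \(V_T\setminus P\) is closed under taking descendants except for one kind of node. If \(q\notin P\), then either \(q\) lies in the subtree rooted at \(p_0\) (strictly below \(p_0\)), or \(q\) lies outside the subtree rooted at \(p_0\) and is not an ancestor of \(p_0\). In the second case, no descendant of \(q\) is on \(P\), since a descendant of \(q\) on \(P\) would make \(q\) an ancestor of \(p_0\). In the first case, the descendants of \(q\) are also strictly below \(p_0\), so none of them is on \(P\). Hence every child of a node \(q\notin P\) is also not on \(P\). This is exactly what makes induction over the subtree rooted at \(q\) work.

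The induction proceeds by node type. A leaf receives \(s_\varphi=\SI\). An introduce node copies the state of its child, which is \(\SI\) by the induction hypothesis. A join node \(q\notin P\) has two children not on \(P\), both in state \(\SI\), and \(\JOIN_{\varphi,w}(\SI,\SI)=\SI\) by the table.

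The forget node is the main case. Let \(q\notin P\) be a forget node whose child has state \(s=\SI\). Rules~\eqref{enum:adj:1}--\eqref{enum:adj:3} do not apply because \(s=\SI\). Rules~\eqref{enum:adj:4} and~\eqref{enum:adj:5} both require \(\delta([y=e])=1\) for some edge \(e\) forgotten in \(q\). Since \(\delta\) is consistent, the only edge \(e\) with \([y=e]\) true is \(e_y=\alpha(y)\). By Lemma~\ref{lemma:oneforgete}, \(e_y\) is forgotten only in \(p_0\in P\), so it is not forgotten in \(q\), and neither rule fires. Rule~\eqref{enum:adj:6} therefore applies and gives \(\SI\).

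The only delicate points are the consistency assumption, which ensures that \(y\) has a single true indicator, and the uniqueness of the node that forgets \(e_y\). The point needing the most care is the descendant-closure argument, since it is what lets the induction stay inside \(V_T\setminus P\).
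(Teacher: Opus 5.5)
Your proof is correct and follows the same route as the paper's. Both argue by induction from the leaves upward (the paper inducts on the height of \(q\)), and in the forget case both use consistency of \(\delta\) and uniqueness of the node \(p_0\) forgetting \(e_y\) to rule out rules~\eqref{enum:adj:4} and~\eqref{enum:adj:5}. Your explicit check that every child of a node off \(P\) is also off \(P\) is a step the paper uses tacitly. It holds with no exception: if a child of \(q\) lay on \(P\), then so would \(q\), since \(P\) runs up to the root. The hedge \enquote{except for one kind of node} should therefore be dropped.
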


\begin{proof}
   We shall proceed by induction on the height \(h(q)\) of \(q\), i.e., the maximum length
   of a path from \(q\) to the reachable leaves. If \(h(q)=0\), then \(q\) is a
   leaf and it is thus assigned the initial state \(\SI\). Assume now that
   \(h(q)>0\) and that \(q\) is not on path \(P\). If \(q\) is an introduce
   node, then by Algorithm~\ref{alg:node}, the state of \(q\) is equal to the
   state of its only child node which is \(\SI\) by induction hypothesis. If
   \(q\) is a join node with child nodes \(q^L\) and \(q^R\), then their states
   are \(\SI\) by induction hypothesis. Using the join lookup table we have that
   \(q\) is \(\SI\) as well.

   Consider now that \(q\) is a forget node, then the state \(s'\) of \(q\) is
   determined by the above rules~\eqref{enum:adj:1}-\eqref{enum:adj:6}. The
   input state is \(\SI\) by induction hypothesis and thus
   rules~\eqref{enum:adj:1}-\eqref{enum:adj:3} are not applicable. Since \(q\)
   is not on \(P\), it does not forget edge \(e_y\) and thus \(e_y\not\in E_F\).
   Since \(\alpha(y)=e_y\), we have
   \(\delta([y=e_y])=1\), but because \(e_y\not\in E_F\) and \(\delta\) is
   consistent, we have that \(\delta([y=e])=0\) for each \(e\in E_F\).
   Rules~\eqref{enum:adj:4} and~\eqref{enum:adj:5} are not applicable either.
   The only applicable rules is thus~\eqref{enum:adj:6} which sets \(s'=\SI\).
\end{proof}

Note that by Claim~\ref{claim:adj}, at least one of the input nodes of every
join node must have state \(\SI\) and the four cells in the join lookup table
labeled as \enquote{Cannot occur} are labeled correctly.

Let us now assume that \(\alpha\) satisfies \(\varphi\) and let us show that the
root gets assigned the accepting state \(\ST\).  In particular, we shall assume
that \(\alpha(x)=u\).  We need to consider two cases depending on which vertex of
\(e_y\) is forgotten first.

\begin{itemize}
   \item \emph{Node \(p_0\) is \enquote{forget \(u\)}.} When determining state
      \(s'\) of \(p_0\), the input state is \(s=\SI\),
      rules~\eqref{enum:adj:1}-\eqref{enum:adj:3} are thus not applicable. We have
      that \(e_y\in E_F\) and \(\delta([y=e_y])=1\). In addition, \(\alpha(x)=u\) and
      \(v_F=u\) is the forgotten vertex. It follows that \(\delta([x=v_F])=1\).
      Rule~\eqref{enum:adj:4} thus sets \(s'=\ST\) which is then propagated along path
      \(P\) to the root using rule~\eqref{enum:adj:1} for the forget nodes and the join
      lookup table for the join nodes.
   \item \emph{Node \(p_0\) is \enquote{forget \(v\)}.} Because \(p_0\) forgets
      \(e_y\), \(u\in\ell(p_0)\) and thus it is forgotten somewhere along path \(P\)
      in node \(p_a\) for some \(a\in\{1, \dots, t\}\). When processing \(p_0\), the
      input state is \(\SI\) and the state is set by rule~\eqref{enum:adj:5} to
      \(c_i\) where \(i=c(u)\). This state is propagated up to the node \(p_a\) by
      rule~\eqref{enum:adj:3} in forget nodes and by the join lookup table in the join
      nodes. When processing \(p_a\), the state is changed to \(\ST\) using
      rule~\eqref{enum:adj:2}, because at that time \(v_F=u\), \(c(v_F)=c(u)=i\), and
      \(\delta([x=u])=\delta([x=v_F])=1\) due to having \(\alpha(x)=u\). State \(\ST\)
      is then propagated to the root using rule~\eqref{enum:adj:1} for the forget
      nodes and join lookup table for the join nodes.
\end{itemize}

Let us now assume that \(\varphi\) is not satisfied by \(\alpha\), thus
\(\alpha(x)\not\in\{u, v\}\). We shall
prove that in this case, no node of the tree decomposition \(T\) has state
\(\ST\). By Claim~\ref{claim:adj}, it is enough to show that this holds for the
path \(P\). Assume without loss of generality that \(p_0\) forgets \(v\) and
\(u\) is forgotten in node \(p_a\) for some \(a\in\{1, \dots, t\}\). Assume that
\(i=c(u)\) is the color of \(u\). We claim
that nodes \(p_0, \dots, p_{a-1}\) get assigned state \(c_i\) and nodes \(p_a,
\dots, p_t\) get assigned state \(\SI\).

In both cases, we shall proceed by induction.

\begin{itemize}
   \item When processing \(p_0\), the input state is \(\SI\) by
      Claim~\ref{claim:adj}, rules~\eqref{enum:adj:1}-\eqref{enum:adj:3} are
      thus not applicable. Rule~\eqref{enum:adj:4} is not applicable, because
      \(v_F=v\), \(\alpha(x)\neq v\), and thus \(\delta([x=v_F])=0\).
      Rule~\eqref{enum:adj:5} is therefore applied with \(e=e_y\) and \(v_F=v\).
      State \(s'=c_i\), since \(i=c(u)\).
   \item Consider now a node \(p_b\) for some \(0<b<a\). By induction hypothesis, the
      input state \(s\) is \(c_i\). Denote \(s'\) the state associated with
      \(p_b\). If \(p_b\) is an introduce node, then state \(s'=s=c_i\) by
      Algorithm~\ref{alg:node}. If \(p_b\) is a join node, then the \(p_{b-1}\)
      is one of the childs of \(p_b\) and the other child has state \(\SI\) by
      Claim~\ref{claim:adj}. By join lookup table, \(s'=s=c_i\). If \(p_b\) is a
      forget node, then \(v_F\not\in\{u, v\}\) by assumption on \(b\).
      We have \(c(v_F)\neq
      c(u)\) by the definition of good vertex coloring
      (Definition~\ref{def:color}), because \(u\) has not yet been forgotten and
      must be present in \(\ell(p_j)\). It follows that rule~\eqref{enum:adj:2}
      is not applicable. Since \(s=c_i\), the first applicable rule is
      thus~\eqref{enum:adj:3} which sets \(s'=s=c_i\).
   \item When processing \(p_a\), the input state \(s\) is \(c_i\) by induction
      hypothesis and \(v_F=u\), thus \(c(v_F)=i\). It follows that
      rule~\eqref{enum:adj:2} is applied. Since \(\alpha(x)\neq u\), we have
      that \(\delta([x=v_F])=0\) and \(s'=\SI\).
   \item Assume now node \(p_b\) for \(b>a\). If \(p_b\) is an introduce node,
      then it gets assigned \(\SI\) by induction hypothesis. If \(p_b\) is a
      join node, then both child nodes of \(p_b\) must have state \(\SI\) (one
      because it is not on the path \(P\) and the other by induction
      hypothesis). If \(p_b\) is a forget node, then
      rules~\eqref{enum:adj:1}-\eqref{enum:adj:3} are not applicable,
      because \(s=\SI\) by induction hypothesis. Rules~\eqref{enum:adj:4}
      and~\eqref{enum:adj:5} are not applicable, because \(\alpha(y)=e_y\) and
      \(e_y\not\in E_F\) and thus \(\delta([y=e])=0\) for each \(e\in E_F\) by
      consistency of \(\delta\). Thus the only applicable rule
      is~\eqref{enum:adj:6} which sets \(s'=s=\SI\).
\end{itemize}

\subsubsection{State Space For Negation}%
\label{sectionneg}

Let $\varphi \equiv \neg \psi$, where $\psi$ is an \MSO{} formula. The state space
is $S_\varphi := S_\psi$, the initial state is $s_\varphi := s_\psi$ and the set
of accepting states is $A_\varphi := S_\psi \setminus A_\psi$. The state tables
remain the same as for $\psi$.

\paragraph{Correctness}
Assume that the full decision procedure for $\psi$ is correct. The only thing
that has changed from the procedure for $\psi$ to the procedure for $\varphi$ is
the set of accepting states, which is used only at the very end of
Algorithm~\ref{alg:full}. Having \(A_\varphi=S_\psi\setminus A_\psi\) implies
that the answer of the algorithm is the negation of what would be the answer for
\(\psi\).

\subsubsection{State Space For Conjunction}%
\label{sectionconj}

Let $\varphi \equiv \psi \wedge \rho$, where $\psi$ and $\rho$ are \MSO{} formulas.

The state space is $S_\varphi := S_\psi \times S_\rho$, the initial state is
$s_\varphi := (s_\psi, s_\rho)$ and the set of accepting states is $A_\varphi :=
A_\psi \times A_\rho$. The lookup tables are defined as:
\begin{align*}
   \FORGET_{\varphi,w}((s_1,s_2),\delta) &:= (\FORGET_{\psi,w}(s_1,\delta),\FORGET_{\rho,w}(s_2,\delta))\\
   \JOIN_{\varphi,w}((s_{L1},s_{L2}),(s_{R1},s_{R2})) &:= (\JOIN_{\psi,w}(s_{L1},s_{R1}),\JOIN_{\rho,w}(s_{L2},s_{R2}))
\end{align*}

\paragraph{Correctness}
Assume that the decision procedures for $\psi$ and $\rho$ are correct. Assume
that the final state for a given instance of the problem and formula $\psi$ is
$f_1$, while the final state for $\rho$ is $f_2$. If we project the state space
in the computation for $\varphi$ to just the first item of the pair, we will see
that it is the same as for $\psi$. Likewise, if we project to the second item in
the pair, the computation will be the same as for $\rho$. The final state when
computing for $\varphi$ is therefore $(f_1,f_2)$. By doing the Cartesian product
of the sets, this is accepting state if and only if $f_1$ is accepting for
$\psi$ and $f_2$ is accepting for $\rho$, which correctly models conjunction of
the subformulas.

\subsubsection{State Space For Quantification}%
\label{sectionstateex}

In this subsection, we will consider general existential quantification of the
following form:
\begin{equation}\label{eq:quant-form}
   \begin{aligned}
      \varphi \equiv (&\exists x_1,\dots,x_{\kvo} \in V,\\
                      &\exists y_1,\dots,y_{\keo} \in E,\\
                      &\exists X_1,\dots,X_{\kvs} \subseteq V,\\
                      &\exists Y_1,\dots,Y_{\kes} \subseteq E\\
      )& [\psi]
   \end{aligned}
\end{equation}

This general form includes all of the four elementary constructions in the
language of \MSO{}. We consider this general form instead of a form in which a
single variable would be quantified so that we can show a bound on the size of
the state space based on the number of quantifier alternations.

When evaluating formula \(\varphi\) on an assignment \(\alpha\), we are also
evaluating \(\psi\). Assignment \(\alpha\) assigns values only to the variables
that are free in \(\varphi\). In addition to them, all variables bound
by~\eqref{eq:quant-form} are also free in \(\psi\). Before using the procedure
for evaluating \(\psi\), we must thus extend the assignment by assigning the
values of these newly bound variables and we must consider each such extension
\(\alpha'\). When processing a decomposition node \(t\), we need to keep track
of the assignments performed in the nodes below \(t\). In particular, consider a
vertex object variable \(x_i\). When processing a forget node \(t\) which
forgets variable \(v\), we need to consider two possibilities in which
\(\alpha'(x_i)\) is either set to \(v\), or not. The former is only possible if
\(x_i\) has not yet been assigned any value in the subtree of \(t\). In this
case \(\alpha'(x_i)\) is a node that has already been forgotten. To keep track
of these assignments, we use a bit vector of length \(\kvo\). In fact, we need to
do the same for edge object variables as well and thus the bit vector has length
\(\kvo+\keo\).

An extension \(\alpha'\) to \(\alpha\) is thus associated with a \textbf{state
pair} \((s, b)\) where \(s\in S_\psi\) is a state assigned to the current node
\(p\) when processing \(\psi\) and \(b\) is a bit vector of length \(\kvo+\keo\)
which specifies for each bound object variable \(x\) whether \(\alpha'(x)\) has
been forgotten in the subtree of \(p\) or not. Denote \(\mathcal{S}_\varphi\)
the set of all such state pairs. We then define the state space
\(S_\varphi:=\mathcal{P}(\mathcal{S}_\varphi)\), i.e., each state is a set of
state pairs defined above.

To not confuse states from $S_\varphi$ and $S_\psi$, we will use the term
\textbf{state set} to refer to states from $S_\varphi$, as they are sets of
states from $S_\psi$ with additional bits of information.

The initial state set is defined as $s_\varphi = \{(s_\psi, \mathbf{0})\}$ where
\(s_\psi\) is the initial set for \(\psi\) and \(\mathbf{0}\) denotes the
all-zero vector. This represents the situation in which we
have started to evaluate \(\psi\) and no newly bound variable has been
assigned a value.

The set of accepting state sets is defined as
\begin{equation*}
   A_\varphi = \{ S\in S_\varphi \mid S \cap (A_\psi \times \{\mathbf{1}\}) \ne \emptyset \}
\end{equation*}
where \(A_\psi\) is the set of the accepting states for \(\psi\) and
\(\mathbf{1}\) denotes the all-one vector. In other words a state set
\(S\in S_\varphi\) is accepting, if and only if it contains an
accepting state pair
\((s, b)\) which consists of an accepting state \(s\in
A_\psi\) for \(\psi\) and \(b=\mathbf{1}\) meaning that
all bound variables have been assigned value. Such state pair \((s,
b)\)
corresponds to an extension \(\alpha'\) of \(\alpha\) which assigns
values to all bound variables and satisfies \(\psi\).

\paragraph{Forget lookup table}

Let $S$ be the input state set and $\delta$ be the context assignment for
variables in $\domPhiG$ with respect to a forget node. Note that the
context does not include decision variables related to the bound
variables, since we are evaluating $\varphi$, not $\psi$. Let $v_F$ be
the forgotten vertex and let $E_F$ be the set of all forgotten edges
in the forget node.

The entries of the forget table
\(\FORGET_{\varphi, w}(S, \delta)\) for a state set \(S\) are
determined by Algorithm~\ref{alg:fdpq}.
The algorithm uses a
subroutine called \(\getAllCons\) which determines all possible
consistent extensions \(\delta'\) of \(\delta\) which could serve as a
context assignment for evaluating \(\psi\) in the current forget node.
For the description of the algorithm, we assume a list of bound variables
\(x_1, \dots, x_k\) regardless of their sorts and whether they are object or set
variables.

\begin{algorithm}
   \caption{Procedure determining \(\FORGET_{\varphi, w}(S, \delta)\) for quantification}\label{alg:fdpq}
   \begin{algorithmic}
      \Require{Formula $\varphi \equiv (\exists x_1,\dots,x_k) \psi$}
      \Require{State set $S$}
      \Require{Context assignment $\delta$}
      \Ensure{New state set $S'$}
      \State{$S' = \emptyset$}
      \ForAll{states $(s,b)$ in $S$}
      \State{$X \gets \getAllCons(\psi, (x_1,\dots,x_k), s, \delta, b)$}
      \ForAll{extended assignments $(\delta',b')$ in $X$}
      \State{$s' \gets \FORGET_{\psi,w}(s,\delta')$}
      \State{$S' \gets S' \cup \{(s',b')\}$}
      \EndFor%
      \EndFor%
      \State\Return{$S'$}
   \end{algorithmic}
\end{algorithm}

Subroutine \(\getAllCons\) takes the following input parameters:
\begin{itemize}
\item A formula $\psi$.
\item List of existentially quantified variables $(x_1,\dots,x_k)$.
\item State $s \in S_\psi$ representing the current state of the procedure for $\varphi$.
\item Context assignment $\delta$ for $\varphi$ with respect to a forget
   node.
\item Bit vector $b$ specifying which object variables have already been assigned a value.
\end{itemize}

If \(x_i\) is an object variable, then \(b[x_i]\) denotes the bit of \(b\)
associated with \(x_i\).
The subroutine returns a set $X$ that contains extension pairs of the form $(\delta',
b')$. The first part is a consistent extension \(\delta'\) of the context
assignment $\delta$ that assigns values to the new decision variables related to the bound
variables ($x_1, \dots, x_k$) and forgotten objects. The second part is the
updated bit vector $b'$ specifying which object variables have already been
assigned a value.

The procedure constructs a sequence of sets \(X_0, \dots, X_k\) where a set
\(X_i, i=0, \dots, k\) is the set of extension pairs involving bound variables
\(x_1, \dots, x_i\). The first set in this sequence is set to \(X_0=\{(\delta,
b)\}\), the last set is then the returned set \(X=X_k\). We describe rules for
constructing \(X_i\) from \(X_{i-1}\) for \(i=1, \dots, k\). The rules depend on
the sort of the variable \(x_i\) and also on whether it is an object or set
variable. Based on that, possible assignments related to \(x_i\) and forgotten
objects are used to extend the pairs from \(X_{i-1}\).

Consider extension pair \((\delta', b')\in X_{i-1}\), we use the following rules to
determine which pairs are added to \(X_{i-1}\) based on \((\delta', b')\). We
distinguish four cases depending on the type of the variable \(x_i\).

\begin{itemize}
\item \emph{\(x_i\) is a vertex object variable.}
   \begin{itemize}
      \item We extend \(\delta'\) by setting \(\delta'([x_i=v_F])=0\) and add
         \((\delta', b')\) to \(X_i\).
      \item If \(b'[x_i]=0\), then we extend \(\delta'\) with assignment
         \(\delta'([x_i=v_F])=1\) and set \(b'[x_i]\) to \(1\). Pair 
         \((\delta', b')\) is then added to \(X_i\).
   \end{itemize}
\item \emph{\(x_i\) is an edge object variable.}
   \begin{itemize}
      \item We extend \(\delta'\) by setting \(\delta'([x_i=e])=0\) for all
         \(e\in E_F\) and add \((\delta', b')\) to \(X_i\).
      \item If \(b'[x_i]=0\), then we set \(b'[x_i]\) to \(1\). For every
         edge \(e\in E_F\), we extend \(\delta'\) by setting
         \(\delta'([x_i=e])=1\) and \(\delta'([x_i=e'])=0\) for every \(e'\in
         E_F\setminus\{e\}\), then we add \((\delta', b')\) to \(X_i\).
   \end{itemize}
\item \emph{\(x_i\) is a vertex set variable.}
   \begin{itemize}
      \item We extend \(\delta'\) by setting \(\delta'([v_F\in x_i])=0\) and add
         \((\delta', b')\) to \(X_i\).
      \item We extend \(\delta'\) with assignment \(\delta'([v_F\in x_i])=1\)
         and add \((\delta', b')\) to \(X_i\).
   \end{itemize}
\item \emph{\(x_i\) is an edge set variable.}
   \begin{itemize}
      \item For every subset \(E_F'\subseteq E_F\) we define an extension of
         \(\delta'\) which sets \(\delta'([e\in x_i])=1\) if and only if \(e\in
         E_F'\) and sets \(\delta'([e\in x_i])=0\) otherwise. Pair \((\delta',
         b')\) is then added to \(X_i\).
   \end{itemize}
\end{itemize}

Algorithm~\ref{alg:fdpq} processes all state pairs \((s, b)\) from the state set
\(S\). For each of them, the algorithm first constructs the set of all
consistent assignments by the application of the \(\getAllCons\) procedure
described above. The assignment \(\delta'\) from each extension pair \((\delta',
b')\) from \(X\) is then applied to state \(s\) by the forget procedure for
\(\psi\). The resulting state \(s'\) is then put into \(S'\) in a pair with the
updated bit vector \(b'\).

\paragraph{Join lookup table}

Let $S_L$ and $S_R$ be the state sets of the child nodes of the current join
node. The join procedure applies the join procedure for \(\psi\) on every pair
of states from \(S_L\) and \(S_R\). In addition, corresponding bit vectors
\(b_L\) and \(b_R\) need to be combined as well to reflect the assignment status
of each bound variable. We use bitwise conjunction (\(\land\)) and disjunction
(\(\lor\)) on the bit vectors to perform the appropriate combination. In
particular, if \(b_L\land b_R\) contains a non-zero value, it means that the
extensions of the assignments leading to states \(s_L\) and \(s_R\) both assign
a value to the same bound object variable. These assignments cannot be combined and
this combination is excluded in Algorithm~\ref{alg:jdpq} which determines the
entry of the join table.

\begin{algorithm}
   \caption{Procedure determining \(\JOIN_{\varphi, w}(s_L,
s_R)\) for quantification}\label{alg:jdpq}
   \begin{algorithmic}
      \Require{Formula $\varphi \equiv  (\exists x_1,\dots,x_k) \psi$}
      \Require{State $S_L$}
      \Require{State $S_R$}
      \Ensure{New state $S'$}
      \State{$S' = \emptyset$}
      \ForAll{state pairs $(s_L, b_L)$ in $S_L$}
      \ForAll{state pairs $(s_R, b_R)$ in $S_R$}
      \If{$b_L \wedge b_R = \mathbf{0}$}
      \State{$S' \gets S' \cup \{(\JOIN_{\psi, w}(s_L, s_R), b_L \vee b_R)\}$}
      \EndIf%
      \EndFor%
      \EndFor%
   \end{algorithmic}
\end{algorithm}

\paragraph{Correctness}

Recall that we consider undirected graph \(G=(V, E)\) with a nice tree
decomposition \(T=(V_T, E_T, \ell, r)\) of width \(w\) and a good vertex
coloring \(c\). Let $\varphi \equiv  (\exists x_1,\dots,x_k) \psi$. Assume
without loss of generality that first $k_o$ variables are object variables and
the rest are set variables. Let us consider an assignment $\alpha$ that assigns
values to the variables of \(\varphi\). Let $\Sigma(p, \alpha)$ denote the state
set which is assigned to decomposition node $p\in V_T$ in the decision procedure
for $\varphi$ on assignment $\alpha$ (more precisely, on the boolean assignment
\(\delta\) that represents \(\alpha\) using variables from \(\domPhiG\)). We
will consider extensions \(\beta\) to \(\alpha\) which assign values to all
free variables of \(\psi\), including variables
\(x_1, \dots, x_k\) that are bound in \(\varphi\). We shall use \(\sigma(p,
\beta)\) to denote the state assigned to the decomposition node \(p\) when
evaluating \(\psi\) on assignment \(\beta\) (more precisely, the corresponding
\(\mathcal{D}_{\psi, G}\)-assignment \(\delta'\)).
In addition, let us define bit vector \(B(p, \beta)\)
for each decomposition node \(p\) and extension \(\beta\) of \(\alpha\).
The bit vector has length \(k_o\) and a bit associated with each object variable
\(x_1, \dots, k_o\) defined as follows:

\begin{equation*}
   B(p, \beta)[x_i]=
   \begin{cases}
      1 & \text{object \(\beta(x_i)\) has been forgotten in the subtree rooted at \(p\)}\\
      0 & \text{otherwise}
      \end{cases}
\end{equation*}

In particular, \(B(p, \beta)=\mathbf{0}\) for all leaves \(p\) and \(B(r,
\beta)=\mathbf{1}\) for the root \(r\). Note also that if two extensions
\(\beta_1\) and \(\beta_2\) differ only on objects that have not been forgotten
in the subtree rooted at \(p\), then \(B(p, \beta_1)=B(p, \beta_2)\) and also
\(\sigma(p, \beta_1)=\sigma(p, \beta_2)\).

We now claim that the following equivalence holds:
For each \(p\in V_T\) and each state pair \((s, b)\) we have that \((s, b)\in
\Sigma(p, \alpha)\) if and only if there exists an extension \(\beta\) of
\(\alpha\) such that \(s=\sigma(p, \beta)\) and \(b=B(p, \beta)\). We can prove
this equivalence by induction, it holds for the leaves by definition of
the initial state set \(s_\varphi\) and the induction step follows from the
design of the algorithms~\ref{alg:fdpq} and~\ref{alg:jdpq}.
Applied on the root, it means that the state set \(\Sigma(r, \alpha)\) belongs
to \(A_\varphi\) if and only if there exists an extension \(\beta\) of
\(\alpha\) which satisfies \(\psi\). This implies the correctness of the procedure.

\subsection{Handling Inconsistent Assignments}\label{ssec:inconsistent-assign}

Decision procedure described in Section~\ref{sectionstate} relied on consistency
of the assignment of the decision variables. The purpose of this section is to
augment the procedure with the check of consistency. Assume an undirected graph
\(G=(V, E)\) and its nice tree decomposition \(T=(V_T, E_T, \ell, r)\) of width
\(w\). Consider also a \MSO{} formula \(\varphi\) and an assignment \(\alpha\)
to the variables of \(\varphi\). The fact that we used the representation of
\(\alpha\) in form of an assignment \(\delta\) to the decision variables in
\(\domPhiG\) will be important for our construction of a decision diagram based
on the decision procedure from Section~\ref{sectionstate}, but it was not a
necessity just for the description of this procedure. We could have described
the decision procedure directly using \(\alpha\) and, in fact, at several places
we used the assumption that \(\delta\) represents some \(\phi\)-assignment
\(\alpha\). However, our decision diagram needs to take care of assignments
to the decision variables that are not consistent. If \(\delta\) is an assignment
to the decision variables which gives zero or more than one value to a single
\MSO{} variable, the decision procedure (and later also the decision diagram)
should detect this situation and reject.

We have already needed to handle consistency in the case of existential
quantification and we shall take a similar approach here. In particular, we will
design a state space and suitable state tables for the main decision procedure,
such that the procedure returns \(1\) for a consistent assignment and returns
\(0\) for an inconsistent assignment. The idea is then to take the conjunction
of the consistency checking state space and the state space used to evaluate
$\varphi$. This will lead to a consistency-checked evaluation of $\varphi$.

\subsubsection{State Space For Assignment Consistency Checking}

By the definition of consistency, we will need to verify that exactly one of the
decision variables associated with a single object \MSO{} variable is satisfied.
We will use a special state \(\bot\) to propagate the fact that an inconsistency
was detected up to the root. The procedure will transit to this state once an
inconsistency in the assignment is detected. Other states will store a bit
vector akin to what we used for quantification. This bit vector will represent
which object variables have been assigned a value. If a variable is assigned a
value for the second time, we set the state to $\bot$. Once we reach the root,
we check whether all variables have been assigned exactly one value by checking
that the state is not $\bot$ and that the bit vector is an all-one vector.

\paragraph{State space definition}

We will now define a state space and join and forget tables which can be
combined with the full decision procedure to have a consistency checking
algorithm. The algorithm accepts if an assignment is consistent and rejects
otherwise.

The state space is defined as $S_C := {\{0,1\}}^{\kvo + \keo} \cup
\{\bot\}$, where $\kvo$ denotes the numbers of free vertex object
variables, \(\keo\) denotes the number of free edge object variables and
$\bot$ is the special new state representing inconsistency.

The initial state is defined as $s_C = \mathbf{0}$ and the set of accepting
states is set to $A_C = \{ \mathbf{1} \}$ which represents the fact that all
object variables must have a value and inconsistency should not have been
detected before.

\paragraph{Forget lookup table}

Let $s$ be the input state and $\delta$ be a context assignment of a forget
node. Let $v_F$ be the forgotten vertex and let $E_F$ be the set of all
forgotten edges in the forget node. We define the resulting state \(s'\) as follows.
If $s$ is $\bot$, then $s'=\bot$. Otherwise, \(s\in{\{0, 1\}}^{\kvo+\keo}\) and
we have to check new assignments to the object variables.

We initialize \(s'=s\). Then we loop over all assignments to decision variables
associated with object variables. For every assignment \(\delta([x=v_F])=1\) (of
a vertex object variable \(x\)) or \(\delta([x=e])=1\) for some \(e\in E_F\) (of
an edge object variable \(x\)) we proceed in the same way. If \(s'[x]=1\), then
\(s'=\bot\) and the procedure ends. Otherwise, we set \(s'[x]=1\) and go on
checking the next assignment.

Assume that for some edge object variable and two different
edges \(e_1, e_2\in E_F\) we have \(\delta([x=e_1])=\delta([x=e_2])=1\). If
\(s[x]=0\), then \(s'[x]\) is initially set to \(0\), then it is set to \(1\)
when processing the first assignment \(\delta([x=e_1])=1\) and then \(s'\) is
set to \(\bot\) when processing the second assignment \(\delta([x=e_2])=1\).

\paragraph{Join lookup table}

Let $s_L$ and $s_R$ be the states assigned to the left child and the right child node.
If any of these states is $\bot$, then the result is $\bot$. Otherwise, \(s_L,
s_R\in{\{0, 1\}}^{\kvo+\keo}\).
If \(s_L\land s_R=\mathbf{0}\), the resulting state is \(s_L\lor s_R\),
otherwise the resulting state is \(\bot\) (\(\land\) and \(\lor\) denote
bitwise conjunction and disjunction respectively).

\paragraph{Correctness}

In this case, correctness refers to the fact that the full decision procedure
with state space $S_C$ correctly determines whether an assignment is consistent.

Checking if exactly one value is assigned to each object variable is handled by
the bit vector of length $\kvo + \keo$ and the definition of accepting states.
The bit vector serves as a counter for how many times each object variable was
assigned a value. If any of the counters would reach \(2\), the state is set to
$\bot$ and this is then propagated to the root resulting in the assignment not
being accepted. Counting is handled in the leaf nodes by setting the counters to
\(0\) in the initial state. In the forget nodes the counters increase from \(0\)
to \(1\) for each satisfied decision variable assigning a value to an object
variable, if the counter is \(1\), the state changes to \(\bot\) instead of
increasing the counter value to \(2\). In the join nodes, counting is handled by
summing two bit vectors that represent assignment counts in the left subtree and
right subtree respectively. Using bitwise conjunction, the procedure checks if
the sum would overflow to two in which case the state is set to $\bot$ instead.
Finally, the definition of accepting states ensures that all counters must be set
to one in the root, which means that all variables have been assigned a value.

\subsubsection{Using Product Space To Handle Consistency}

Let $\varphi$ be an \MSO{} formula. The decision procedure with state space
$S_\varphi$ does not have well defined behavior for inconsistent assignments.
Now consider the construction that we did for conjunction in Section~\ref{sectionconj}
applied to $S_\varphi$ and $S_{C}$.

If we use the decision procedure for this space, then inconsistent assignments
will evaluate to a pair $(s_L, s_R)$ for some $s_L \in S_\varphi$ and a
non-accepting $s_R \in S_C$. By the second state being non-accepting for $S_C$,
the state $(s_L, s_R)$ is non-accepting for the product of $S_\varphi$ and
$S_C$. On the other hand, consistent assignments will evaluate to some $(s_L,
s_R)$, where $s_R$ is accepting and so whether $(s_L, s_R)$ is an accepting
state in the product space purely depends on whether $s_L$ is an accepting state
in $S_\varphi$. By the correctness of the decision procedure and by the
assignment being consistent, we will get the correct answer.


\section{Upper Bound For SDDs Parameterized By Treewidth}\label{sec:sdd-ub}

We are now ready to describe the construction of an SDD representing models of a
given \MSO{} formula \(\varphi\) interpreted over a given undirected graph
\(G\). The construction is based on the decision algorithm described in
Section~\ref{sec:decproc} and, as a consequence, we shall get the proof of
Theorem~\ref{thm:sdd-ub}.

The construction starts with precomputing the state tables for the dynamic
programming algorithm evaluating \(\varphi\) as described in
Section~\ref{sec:decproc}. The construction is recursive based on the structure
of \(\varphi\), starting with atomic subformulas and then combining the state
tables for subformulas by applying the negation, conjunction, or existential
quantification. Using these state tables, we
will then design an SDD whose structure is based on the tree decomposition of
\(G\). The v-tree respected by the SDD will be built as part of the
construction. The definition of the v-tree will include additional dummy
variables which will simplify the construction of the SDD\@. A similar approach
was used by \citet{henriksen1995mona} for constructing OBDDs (which, however,
served a different purpose for a different type of monadic second-order logic).

For the rest of this section, let us fix a \MSO{} formula \(\varphi\) which will
be interpreted over an undirected graph $G=(V, E)$ with
tree decomposition \(T=(V_T, E_T, \ell, r)\) of width \(w\), and a good vertex
coloring \(c\).  Let $S'_\varphi$ be the state space that is the product of state
space \(S_\varphi\) for $\varphi$ and the consistency checking state space
\(S_C\) (see Section~\ref{ssec:inconsistent-assign} for more details). We will
call \(S'_\varphi\) the \emph{extended state space} (as it is extended with
respect to $S_\varphi$). Let $s'_\varphi$ be the initial state of the extended
state space. Let $A'_\varphi$ be the set of accepting states in the extended
state space. Let $\FORGET'_{\varphi, w}$ and $\JOIN'_{\varphi, w}$ be the
precomputed tables for the extended state space.

The result of the construction will be an SDD \(\Delta_\varphi\) which computes
\(\funcPhiG\). The structure of \(\Delta_\varphi\) follows the structure of
\(T\) and utilizes the state spaces. Value of \(\Delta_\varphi\)on an
inconsistent \(\domPhiG\)-assignment \(\delta\) is false. If \(\delta\) is a
consistent \(\domPhiG\)-assignment, it represents a \(\varphi\)-assignment
\(\alpha\) and the value of \(\Delta_\varphi\) on \(\delta\) is true if and only
if \(\alpha\) is a model of \(\varphi\).

The construction of \(\Delta_\varphi\) and the respected v-tree follows the full
decision procedure described in Algorithm~\ref{alg:full} executed with the
extended state space on a \(\domPhiG\)-assignment \(\delta\). We will denote the
state assigned to a decomposition node \(p\in V_T\) as \(f_p(\delta)\). In each
node \(p\) of \(V_T\), we will need to store an SDD for each possible state from
the extended state space. This intermediate step of the construction will be
represented as a state-SDD mapping.

\begin{definition}[State-SDD mapping]
   A \textbf{state-SDD mapping with state space $X$} is a mapping from $X$ to
   SDDs. We say that the mapping respects v-tree $u$, if all SDDs in the image
   of the mapping respect v-tree $u$.
\end{definition}

For each decomposition node \(p\in V_T\), we will define a state-SDD mapping
$g_p$ such that
\begin{equation}\label{eq:state-sdd}
\text{$g_p(s)(\delta)$ is true, if and only if $f_p(\delta) = s$.}
\end{equation}
Since $f_p$ evaluates to exactly one state, exactly one of SDDs given by
$g_p$ will evaluate to true for any $\delta$. This enables us to use the SDDs
given by $g_p$ as primes in a node of a larger SDD\@.

\begin{rem}
   In the notation above, $g_p(s)$ is an SDD because we evaluated state-SDD
   mapping on a state. Denote the SDD as $h$. The SDD $h$ is then evaluated for
   decision variable assignment $\delta$ which we denote as $h(\delta)$. The
   notation $g_p(s)(\delta)$ then means that we use the State-SDD mapping to get
   a single SDD and then evaluate it to get either true or false.
\end{rem}

Let us note that the size of the SDD is bounded in terms of the treewidth \(w\)
of \(G\) and the size of the extended state space \(S'_\varphi\). Both these
values depend only on \(w\) and \(|\varphi|\) and are thus parameterized
constant. The size of \(S'_\varphi\) depends on the structure of the formula and
it would be hard to make a reasonable upper bound without any assumption on the
structure of \(\varphi\). We will give an upper bound on \(|S'_\varphi|\) for
formulas in prenex form in Section~\ref{sec73}. The upper bound has form of a
tower of powers of \(2\) whose height depends on the number of quantifier
alternations.

\subsection{The Construction}%
\label{seccon}

The construction proceeds by induction on the structure of the nice tree
decomposition $T$. For each node \(p\in V_T\), we define $g_p$ and we also keep
track of a v-tree \(t_p\) respected by $g_p$. We will also prove
equivalence~\eqref{eq:state-sdd}
by showing that $g_p(s)(\delta)\iff f_p(\delta) = s$ for
every state $s$ and assignment $\delta$.

When specifying v-trees, we will use a name of a decision variable to signify a
single-node v-tree whose single leaf node is labeled by that variable. We will also use
$\node(A, B)$ to denote a v-tree that consists of an internal node, with left
subtree being $A$ and the right subtree being $B$. For example, $\node(d_1,
\node(d_2, d_3))$ denotes a v-tree with three decision variables $d_1$, $d_2$,
and $d_3$.

The description of the construction is split into several steps. We will start
with an auxiliary construction related to the context assignments of forget
nodes in Section~\ref{sec:con1}. We will then describe the construction of SDDs
representing the state tables in Section~\ref{sec:con2}. The recursive construction
of the SDD is then described in sections~\ref{sec:con3}-\ref{sec:con6}. The
construction is finalized by  converting a state-SDD mapping to an SDD described
in Section~\ref{sec:con7}.

\subsubsection{Context Assignments As SDDs}%
\label{sec:con1}

The context, defined in Section~\ref{ssec:dec-context}, represents a collection
of decision variables related to a single decomposition node. As we have shown
in Lemma~\ref{lemma:soc}, the context contains at most $|\varphi|\cdot(w+1)$
decision variables and thus there are at most $2^{|\varphi|\cdot(w+1)}$
different context assignments. We will define a state-SDD mapping, which will
assign an SDD $\Sigma_\delta$ to each context assignment $\delta\colon \CTX_{\varphi,
T}(p) \to\{0, 1\}$. SDD \(\Sigma_\delta\) is defined over the decision
context variables \(\CTX_{\varphi, T}(p)\) and it evaluates to true only on
$\delta$, and no other context assignment.

Consider any ordering of variables in $\CTX_{\varphi, T}(p)$ and denote these
variables in order as $\{d_1, d_2, \dots, d_k\}$. The v-tree will be defined as the
right linear v-tree for this ordering that is
\begin{equation*}
   \node(d_1, \node(d_2, \dots \node(d_{k-2}, \node(d_{k-1}, d_{k}))\dots ))
\end{equation*}

First, we will design an SDD for a fixed context assignment $\delta \colon
\CTX_{\varphi, G}(p) \to\{0, 1\}$. The SDD for context assignment $\delta$,
denoted as $\Sigma_\delta$, is defined as follows (see Figure~\ref{fig:asigssd} for
an example):
\begin{itemize}
   \item The decomposition respecting v-tree $\node(d_{k-1}, d_{k})$, which is the base case, has form:
      \begin{itemize}
         \item $\{(d_{k-1}, d_{k}), (\neg d_{k-1}, \bot)\}$ for $\delta(d_{k-1}) = 1$ and $\delta(d_{k}) = 1$
         \item $\{(d_{k-1}, \neg d_{k}), (\neg d_{k-1}, \bot)\}$ for $\delta(d_{k-1}) = 1$ and $\delta(d_{k}) = 0$
         \item $\{(d_{k-1}, \bot), (\neg d_{k-1}, d_{k})\}$ for $\delta(d_{k-1}) = 0$ and $\delta(d_{k}) = 1$
         \item $\{(d_{k-1}, \bot), (\neg d_{k-1}, \neg d_{k})\}$ for $\delta(d_{k-1}) = 0$ and $\delta(d_{k}) = 0$
      \end{itemize}
   \item Let $i \in [k-2]$. Let $\Sigma_{i+1}$ be the SDD respecting $\node(d_{i+1}, \dots)$. The decomposition respecting v-tree $\node(d_i, \dots)$ has form:
      \begin{itemize}
         \item $\{(d_{i}, \alpha_{i+1}), (\neg d_{i}, \bot)\}$ for $\delta(d_{i}) = 1$
         \item $\{(d_{i}, \bot), (\neg d_{i}, \alpha_{i+1})\}$ for $\delta(d_{i}) = 0$
      \end{itemize}
\end{itemize}

\begin{figure}
\includegraphics[width=\linewidth]{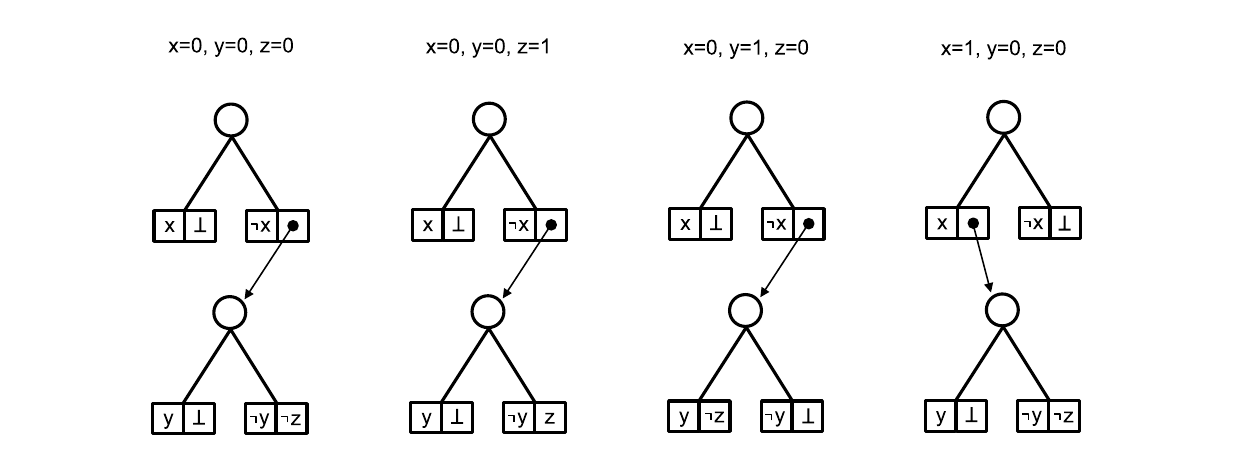}
\caption{Example of four out of eight possible assignment SDDs for variables $x$, $y$, and $z$}%
\label{fig:asigssd}
\end{figure}

If the context consists of a single decision variable \(d_1\),
then the v-tree also consists only of $d_1$ and the SDD \(\Sigma_\delta\) is $d_1$ or
$\neg d_1$ depending on \(\delta(d_1)\).

The full state-SDD mapping is then defined as $g_{C_p}(\delta) := \Sigma_\delta$.

The size of an individual SDD is at most $2\cdot \left|\CTX_{\varphi, T}(p)\right|\leq 2\cdot |\varphi|\cdot(w+1)$, since a
decomposition of size \(2\) is added at every step of the depth of the SDD is
bounded by the number of decision variables in the context (which is
\(\left|\CTX_{\varphi, T}\right|\leq |\varphi|\cdot(w+1)\) by Lemma~\ref{lemma:soc}).

There are $2^{|\varphi|\cdot(w+1)}$ possible context assignments, and thus the
total size of the state-SDD mapping is
\begin{equation}\label{eq:state-ctx-size}
   2\cdot \left|\CTX_{\varphi, p}\right|\cdot
   2^{|\varphi|\cdot(w+1)}\leq 2\cdot |\varphi|\cdot(w+1)\cdot 2^{|\varphi|\cdot(w+1)}.
\end{equation}
This is a constant for a fixed $w$ and $|\varphi|$.

\subsubsection{State Tables As SDDs}%
\label{sec:con2}

The decision procedure uses state tables to calculate states for the forget and
join nodes. In this subsection, we will derive a way to construct state-SDD
mappings which correspond to these state tables. The construction  will be used
both for the join nodes and the forget nodes, and the respective subsections
will only handle specific size calculations.

Let $F$ be a state table, which maps pairs of states from state spaces $A$ and
$B$ to state space $C$. For join nodes, state spaces $A$, $B$, and $C$ are the
state space for the given formula. For forget nodes, $A$ and $C$ are the state
spaces for the given formula and $B$ are all possible context assignments.

Let $g_A$ be the state-SDD mapping for states in $A$ and let $g_B$ be the
state-SDD mapping for states in $B$.
We will design a state-SDD mapping \(g_C\) which will associate each state $c \in C$
with an appropriate SDD\@.

Let $t_A$ and $t_B$ be v-trees for $g_A$ and $g_B$ respectively. Let $x$ be an
auxiliary variable which will not have any effect on the resulting function. The
state-SDD mapping $g_C$ that we will define will respect the following v-tree
\begin{equation}\label{eq:combined-v-tree}
   \node(t_A, \node(t_B, x))
\end{equation}

The role of the dummy variable \(x\) is to simplify the definition of \(\beta_S\) below. 
Equation~\eqref{eq:combined-v-tree}
is a correct definition of a v-tree if \(t_A\) and \(t_B\) are defined on
disjoint sets of variables. This is the case for all applications of this
construction below as we will argue in each respective subsection.

Consider an arbitrary subset $S \subseteq B$. We define an SDD for this subset denoted by $\beta_S$ as given by the following decomposition
\begin{equation*}
\beta_S := \{ (g_B(s), \top) \mid s \in S \} \cup \{ (g_B(s), \bot) \mid s \in (B \setminus S)\}
\end{equation*}

This SDD evaluates to true, if the decision procedure represented by $g_B$ ends
up in one of the states from $S$. The SDD we just defined respects v-tree
$\node(t_B, x)$.
The size of \(\beta_S\) is equal to the size of $g_B$ plus the number of the states in
$B$ for the newly defined decomposition.

Consider now a single state $c \in C$. For any $a \in A$, let $F_{a, c} := \{b
\in B \mid F(a, b) = c \}$. Set $F_{a, c}$ consists of states of $B$ whose
combination with \(a\in A\) leads to state \(c\in C\). We will use these sets to
define $\alpha_c$ which represents all assignments that evaluate to state $c$.
\begin{equation*}
   \alpha_c := \{ (g_A(a), \beta_{F_{a, c}}) \mid a \in A \}
\end{equation*}

SDD \(\alpha_c\) respects the v-tree $\node(t_A, \node(t_B, x))$. For a given
\(a\in A\), \(b\in B\), and \(c\in C\), the value of \(\alpha_c\) is true if and
only if \(F(a, b)=c\).
The size of the decomposition is $|A|$.

Finally, we define the full state-SDD mapping $g_C$ as:
\begin{equation*}
   g_C(c) := \alpha_c
\end{equation*}

\begin{figure}
   \includegraphics[width=\linewidth]{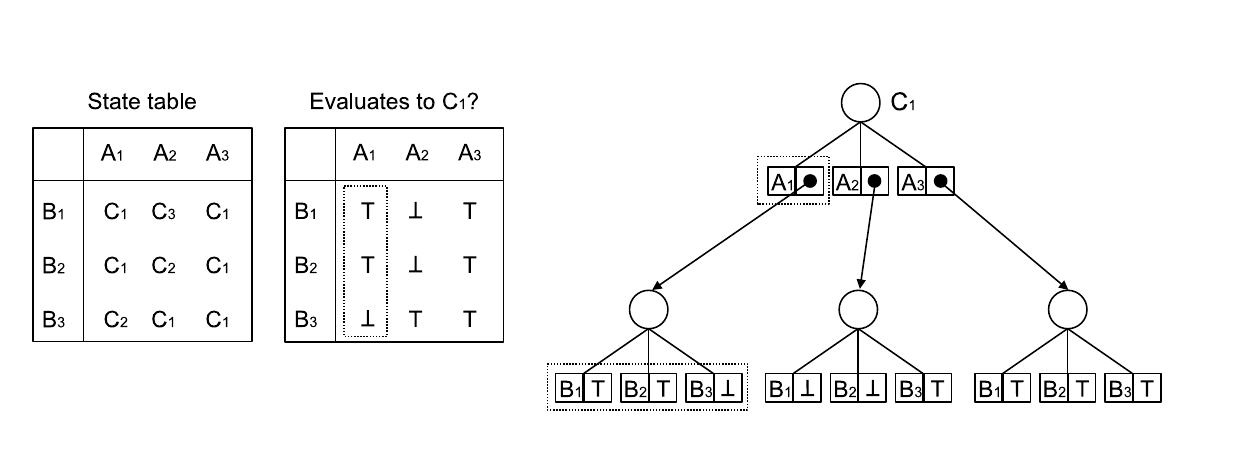}
   \caption{Example of an SSD for a single state from $C$}%
   \label{fig:butable}
\end{figure}

This state-SDD mapping respects $\node(t_A, \node(t_B, x))$.
An example of the construction of $g_C$ for a single state can be seen in Figure
\ref{fig:butable}.

The total space taken by $g_C$ is:
\begin{itemize}
\item $|A| \cdot |C|$, for $|A|$ prime-sub pairs for each decomposition in $\alpha_c$, for all $|C|$ states in $C$.
\item $|g_A|$, for utilizing all SDDs in $g_A$.
\item The size of $\beta_{F_{a, c}}$ for all $a \in A$ and $c \in C$, which is:
\begin{itemize}
\item $|B|$, for the decomposition of size $|B|$.
\item $|g_B|$, for utilizing all SDDs in $g_B$. We can reuse these for all
   $\beta_{F_{a, c}}$, so we do not multiply this term by $|A| \cdot |C|$.
\end{itemize}
\end{itemize}

This results in the total size of
\begin{equation*}
|g_C| \leq |A| \cdot |C| + |g_A| + |A| \cdot |B| \cdot |C| + |g_B|
\end{equation*}

We can be upper bound this size by combining $|A| \cdot |C|$ and $|A| \cdot |B| \cdot |C|$:
\begin{equation}\label{eq:state-cons-size}
|g_C| \leq |g_A| + |g_B| + 2 \cdot |A| \cdot |B| \cdot |C|
\end{equation}

\subsubsection{Leaf Nodes}%
\label{sec:con3}

Let $p$ be a leaf node of the tree decomposition of $G$. We will translate $p$
into the State-SDD mapping defined as follows:
\begin{itemize}
   \item $s'_\varphi$ maps to $\top$
   \item Anything else maps to $\bot$
\end{itemize}

The v-tree for a leaf node $p$ is a single node with a dummy variable $x_p$.
Both \(\top\) and \(\bot\) respect any v-tree, including this one.
Equivalence~\eqref{eq:state-sdd} holds trivially, as $f_p(\delta)=s'_\varphi$ for all assignments \(\delta\).

The size of the state-SDD mapping is $|S'_\varphi|$ (counting \(1\) for each state
\(s\in S'_\varphi\) \(s\in S'_\varphi\)).

\begin{rem}
   Since $\bot$ is not a valid prime, we use a single leaf in the decompositions
   in which the state-SDDs constructed here are used as primes of a
   decomposition.
\end{rem}

\subsubsection{Introduce Nodes}%
\label{sec:con4}

Let $p$ be an introduce node. Let $q$ be the only child of $p$.

We define $g_p$ to be $g_q$.
The v-tree $t_p$ is the same as $t_q$. Equivalence~\eqref{eq:state-sdd} follows by induction
from the fact that it holds for \(f_q\).

\subsubsection{Forget Nodes}%
\label{sec:con5}

Let $p$ be a forget node. Let $q$ be the only child of $p$. To construct
state-SDD mapping for $p$, we use the state table construction from
Section~\ref{sec:con2} with the following setting.
\begin{itemize}
   \item Set $A$ will be $S'_\varphi$ and $g_A$ is $g_q$.
   \item Set $B$ will be the set of all possible context assignments for $p$ and
      $g_B$ is given by the context construction from Section~\ref{sec:con1}.
   \item Set $C$ will be $S'_\varphi$.
\end{itemize}

Assume that \(t_A\) is the v-tree node of \(g_A=g_q\) and \(t_B\) the v-tree node for
\(g_B\). Then v-tree \(t_B\) is defined on the decision variables from
\(\CTX_{\varphi, T}(p)\), while \(t_A\) is defined on the decision variables
from the context of forget nodes below \(p\). Since each decision variable
belongs to exactly one context, we get that \(t_A\) and \(t_B\) are defined on
disjoint sets of variables and using the state table construction from
Section~\ref{sec:con2} leads to a correct v-tree \(t_p\) and by extension also to a
correct state-SDD mapping \(g_p\).

By Lemma~\ref{lemma:soc} we have that a context contains at most 
$|\varphi|\cdot(w+1)$ variables and thus \(|B|\leq 2^{|\varphi|\cdot(w+1)}\).
By estimate~\eqref{eq:state-ctx-size} in Section~\ref{sec:con1} we have that
\(|g_B|\leq 2\cdot |\varphi|\cdot(w+1)\cdot
2^{|\varphi|\cdot(w+1)}\).
Plugging these bounds to~\eqref{eq:state-cons-size} allows us to derive the
following upper bound on the size of \(g_p\).
\begin{align*}
|g_p| &\leq |g_A| + |g_B| + 2 \cdot |A| \cdot |B| \cdot |C| \\
      &\leq |g_q| + 2\cdot \left|\CTX_{\varphi, T}(p)\right| \cdot 2^{|\varphi|\cdot(w+1)} + |S'_\varphi| \cdot 2^{|\varphi|\cdot(w+1)} \cdot |S'_\varphi|\\
      &\leq |g_q| + 2\cdot \left|\CTX_{\varphi, T}(p)\right| \cdot 2^{|\varphi|\cdot(w+1)} + |S'_\varphi|^2 \cdot 2^{|\varphi|\cdot(w+1)}\\
     &\leq |g_q| + (2\cdot \left|\CTX_{\varphi, T}(p)\right| + |S'_\varphi|^2) \cdot 2^{|\varphi|\cdot(w+1)}\\
     &\leq |g_q| + (2\cdot |\varphi|\cdot(w+1) + |S'_\varphi|^2) \cdot 2^{|\varphi|\cdot(w+1)}
\end{align*}


\subsubsection{Join Nodes}%
\label{sec:con6}

Let $p$ be a join node. Let $q_L$ and $q_R$ be the left and right child node of $p$ respectively. To construct
state-SDD mapping for $p$, we use the state table construction from
Section~\ref{sec:con2} with the following setting.
\begin{itemize}
   \item The set $A$ will be $S'_\varphi$ and $g_A$ is $g_{q_L}$.
   \item The set $B$ will be $S'_\varphi$ and $g_B$ is $g_{q_R}$.
   \item The set $C$ will be $S'_\varphi$.
\end{itemize}

Assume that \(t_A\) is the v-tree node of \(g_A=g_{q_L}\) and \(t_B\) is the
v-tree node of \(g_B=g_{q_R}\). Then v-tree \(t_A\) is defined on the decision
variables from contexts of the forget nodes below \(q_L\) and \(t_B\) is defined
on the decision variables from the contexts of the forget nodes below \(q_R\).
Since each decision variable is present in exactly one context, we have that
\(t_A\) and \(t_B\) are defined on disjoint sets of variables.  Using the state
table construction from Section~\ref{sec:con2} thus leads to a correct v-tree
\(t_p\) and by extension also to a correct state-SDD mapping \(g_p\)

We can bound the size of \(g_p\) in the following way.
\begin{align*}
   |g_p| &\leq |g_A| + |g_B| + 2 \cdot |A| \cdot |B| \cdot |C| \\
         &\leq |g_q| + |g_r| + 2 \cdot |S'_\varphi|^3
\end{align*}


\subsubsection{The Root}%
\label{sec:con7}

To conclude the construction, we need to transform the state-SDD mapping $g_r$
constructed for the root $r$, into a single SDD, such that it represents the
accepting states from the state-SDD mapping.

We do this by a similar construction as when we handled the subsets of $B$ in
the state table construction.
\begin{equation*}
   \Delta_\varphi := \{ (g_r(s), \top) \mid s \in A'_\varphi \} \cup \{ (g_r(s), \bot) \mid s \in (S'_\varphi \setminus A'_\varphi)\}
\end{equation*}

This SDD respects v-tree $\node(t_r, x)$, where $t_r$ is the v-tree respected by the root state-SDD mapping \(g_r\)
and $x$ is a new dummy variable.

SDD \(\Delta_\varphi\) combines all SDDs from $g_p$ by a decomposition node of
size \(|S'_\varphi|\). We can thus estimate the size as follows.
\begin{equation*}
   |\Delta_{\varphi}| = |S'_\varphi| + |g_p|
\end{equation*}

\subsection{Complexity Of The SDD}\label{ssec:sdd-complex}

Let $p$ be a decomposition node and let us use \(q\), \(q_L\), and \(q_R\) to
denote the child nodes of \(p\) as in sections~\ref{sec:con5}
and~\ref{sec:con6}. The complexity of state-SDD mappings computed for each node
type and the SDD constructed in the root were given as follows.

\begin{itemize}
\item \textbf{Leaf nodes}: $|g_p| = |S'_\varphi|$
\item \textbf{Introduce nodes}: $|g_p| = |g_q|$
\item \textbf{Forget node \(p\)}: $|g_p| \leq |g_q| + (2\left|\CTX_{\varphi, T}|(p)\right| + |S'_\varphi|^2) \cdot 2^{|\varphi|\cdot(w+1)}$
\item \textbf{Join nodes}: $|g_p| \leq |g_{q_L}| + |g_{q_R}| + 2 \cdot |S'_\varphi|^3$
\item \textbf{The root}:  $|\Delta_{\varphi}| = |S'_\varphi| + |g_r|$
\end{itemize}

The above formulas are recursive, the size estimate for each node type includes
the estimates for the possible child nodes with an added value. If we ignore the
sizes of the state-SDDs for the child nodes, we get the actual size \(\gamma_p\)
added in node \(p\). We use \(\gamma\) to denote the additional size added by
the final step in the root (Section~\ref{sec:con7}).

\begin{itemize}
\item \textbf{Leaf node \(p\)}: $\gamma_p\leq |S'_\varphi|$
\item \textbf{Introduce node \(p\)}: $\gamma_p=0$
\item \textbf{Forget node \(p\)}: $\gamma_p\leq (2\left|\CTX_{\varphi, T}|(p)\right| + |S'_\varphi|^2) \cdot 2^{|\varphi|\cdot(w+1)}$
\item \textbf{Join node \(p\)}: $\gamma_p\leq 2|S'_\varphi|^3$
\item \textbf{The root}:  $\gamma=|S'_\varphi|$
\end{itemize}

%

By Lemma~\ref{strongnice}, there exists a nice tree decomposition of size at
most $5n$. We may use such decomposition to construct the SDD
\(\Delta_\varphi\). Summing up the contributions \(\gamma_p\) of all nodes
\(p\in V_T\) and adding \(\gamma\) gives us the estimate on the size of
\(\Delta_\varphi\). Let
us first consider the contributions of different types of nodes. We use
\(k=w+|\varphi|\) as a parameter.
\begin{itemize}
   \item\textbf{Leaf nodes}: \(\gamma_L\leq 5n|S'_\varphi|\)
   \item\textbf{Introduce nodes:} \(\gamma_I\leq 0\)
   \item\textbf{Forget nodes}: Denote \(V_F\subseteq V_T\) the set of all forget
      nodes. We have \(|V_F|\leq n\), because each vertex in \(V\) is forgotten exactly
      once by Lemma~\ref{lemma:oneforget}.
      Since each decision variable belongs to exactly one context, we
      have that \(\sum_{p\in V_F}\left|CTX_{\varphi, T}\right|\leq
      n|\varphi|\leq nk\).
      We can thus make the following estimate of the total contribution
      of the forget nodes \(\gamma_F\):
      \begin{align*}
         \gamma_F&\leq\sum_{p\in V_F}\gamma_p
         \leq\sum_{p\in V_F} (2\cdot\left|CTX_{\varphi,
      T}\right|+{|S_\varphi'|}^2)\cdot 2^{|\varphi|(w+1)}\\
                 &\leq \left(2\left(\sum_{p\in V_F}\left|\CTX_{\varphi, T}(p)\right|\right)+n{|S_\varphi'|}^2\right)\cdot  2^{|\varphi|(w+1)}\\
                 &\leq n\cdot(2k+{|S_\varphi'|}^2)\cdot 2^{k^2}
      \end{align*}
   \item\textbf{Join nodes:} \(\gamma_J\leq 5n{|S_\varphi'|}^3\)
\end{itemize}

By summing these bounds up we get the following upper bound.
\begin{equation}\label{eq:sdd-bound}
   \begin{aligned}
      |\Delta_\varphi|&\leq \gamma_L+\gamma_I+\gamma_F+\gamma_J+\gamma
   \leq 5n|S'_\varphi|+0+n(2k+{|S_\varphi'|}^2)\cdot
      2^{k^2}+5n{|S_\varphi'|}^3+|S_\varphi'|\\
                      &\leq n\left(5|S_\varphi'|+2k+{|S_\varphi'|}^2+5{|S'_\varphi|}^3+|S_\varphi'|\right)\cdot 2^{k^2}\\
                      &\leq n(12{|S_\varphi'|}^3+2k)\cdot 2^{k^2}
   \end{aligned}
\end{equation}
This bound is fixed parameter linear in $n$ with respect to parameter $k=(w +
|\varphi|)$ as long as $|S'_\varphi|$ is a parameter constant with respect to
\(k\). This is the case, as the construction of $S'_\varphi$ relied solely on
$\varphi$ and $w$ (we will give a more precise upper bound on \(|S'_\varphi|\)
for formulas in prenex form in Section~\ref{sec73}).

This construction proves Theorem~\ref{thm:sdd-ub}.

\begin{rem}
   Let us note that if \(T\) is actually a path decomposition, then there is
   only one leaf and no
   join node in \(T\). The bound~\eqref{eq:sdd-bound} then simplifies to
\(|\Delta_\varphi|\leq n(2k+3{|S_\varphi'|^2})\cdot 2^{k^2}\). It is interesting
   to compare this bound with the size of the OBDD we obtain in
   Section~\ref{sec:obdd-ub}.
\end{rem}

%

\section{Upper Bound For OBDDs Parameterized By Pathwidth}\label{sec:obdd-ub}

We will prove Theorem~\ref{thm:obdd-ub} in this section. We will use a
construction that is very similar to the one used in Section~\ref{sec:sdd-ub} to
prove Theorem~\ref{thm:sdd-ub}. We will follow the decision procedure described
in Section~\ref{sec:decproc} which is based on manipulating the state spaces 
defined in Section~\ref{sectionstate}.

Let $G=(V, E)$ be an elementary graph, let $T=(V_T, E_T, \ell, r)$ be a nice
path decomposition of width $w$, let $c$ be a good coloring of $G$ with
respect to $T$. Let $\varphi$ be an \MSO{} formula that is interpreted over
\(G\). Let $S'_\varphi$ be the state space for $\varphi$ with consistency
checking introduced in Section~\ref{ssec:inconsistent-assign} with the initial
state $s'_\varphi$ and the set of accepting states $A'_\varphi$. Let
$\FORGET'_{\varphi, w}$  be the state table for computing states for forget
nodes. Since nice path decompositions do not have join nodes, we will not need
to consider the join tables.

We will show how to derive an OBDD \(B_\varphi\) which represents the models of
\(\varphi\) in form of the boolean function \(\mathcal{F}_{\varphi, G}\). We
will start in the leaf of the path decomposition and continue upwards through
the forget nodes up to the root. Unlike the construction in
Section~\ref{sec:sdd-ub}, the OBDD will be extended at the leaf level with each
forget node, not the top level. As we follow the path decomposition \(T\), we
will be also defining the variable ordering which is respected by \(B_\varphi\).

\subsection{The Construction}\label{seccobdd}

First, we will define a structure to represent an intermediate result of the
construction. It serves the same purpose as the state-SDD
mapping introduced in Section~\ref{sec:sdd-ub}.

\begin{definition}
   A \textbf{multi-terminal binary decision diagram} (MTBDD) with terminals from
   set $S$ is a fully-labeled directed acyclic graph with nodes of the following
   two types.

   \begin{itemize}
      \item \textbf{Decision nodes}: Each decision node has out-degree \(2\) and
         unbounded in-degree. One of the outgoing edges is labeled with a \(0\)
         and the other one is labeled by a \(1\). The node itself is labeled
         with a decision variable.
      \item \textbf{Leaf nodes}: Each leaf node is labeled by an element from
         $S$.
   \end{itemize}

   Exactly one node (the \textbf{root}) has in-degree \(0\).
\end{definition}

The root node of an MTBDD can be both a decision node or a leaf node. In the latter case,
the MTBDD represents a constant function with the function value given by the
label of the leaf. A binary decision diagram is a special case of an MTBDD with
terminals from set $\{0, 1\}$.

During the construction, we associate an MTBDD \(B_p\) with every decomposition node \(p\in
V_T\). The leaves of \(B_p\) are labeled with terminals from the extended state
space \(S_\varphi'\). The decision nodes of \(B_p\) are labeled with decision
variables from the context of \(p\) and the contexts of nodes below \(p\).
For every assignment to these variables, \(B_p\) returns the state of the
decision procedure after processing node \(p\). MTBDD \(B_p\) is ordered, it
follows a specific order which respects the order of the nodes on the path
decomposition \(T\).

MTBDD \(B_p\) is defined recursively as follows.
\begin{itemize}
   \item If $p$ is a leaf node, we define MTBDD $B_p$ as a single leaf with label $s'_\varphi$.
   \item If $p$ is an introduce node with child $q$, we define $B_p = B_q$.
   \item If $p$ is a forget node with child $q$, then $B_p$ is constructed from
      $B_q$ by replacing each leaf of \(B_p\) labeled with state \(s\in
      S_\varphi'\) with an MTBDD \(B_{p, s}\) which represents the values of 
      \(\FORGET_{\varphi, w}(s, \delta)\) for all context assignments
      \(\delta\). In particular, \(B_{p, s}\) takes form of a full binary tree
      which
      respects a specific fixed order of the variables in \(\CTX_{\varphi,
      T}(p)\). Each branch of the tree corresponds to a specific context
      assignment \(\delta\) and the leaf at the end of this branch is labeled
      with \(\FORGET_{\varphi, w}(s, \delta)\). Now that we replaced all of the
      original leaves with full binary trees, we consider the new leaves and
      merge the ones with identical labels, which results in at most
      $|S'_\varphi|$ different leaves. We extend the variable
      ordering by appending all decision variables from the context of $p$. The
      order among these variable is not important to achieve a good bound. What
      matters is for the global ordering of the contexts to follow the path
      decomposition.
\end{itemize}

At the end of this construction, we are left with an MTBDD \(B_r\) representing
the states of the decision procedure after processing the root node \(r\) of the
path decomposition. To obtain the OBDD \(B_\varphi\), we iterate over the leaves
of \(B_r\). The leaves labeled \(s\) for some \(s\in A_\varphi\) are replaced
with the leaf labeled  \(1\), the rest of the leaves are replaced with the leaf
labeled \(0\).

\subsubsection{Complexity Of The OBDD}

To bound the size of \(B_\varphi\), we will bound its height and the largest number of
nodes at a single layer.

The number of decision variables of \(B_\varphi\) is bounded by \(|\varphi|\cdot
n\) by Lemma~\ref{ndvar}.  We add one to account for the layer with the leaves and
get upper bound of $(1 + |\varphi| \cdot n)$ on the height of \(B_\varphi\).
This bound can be simplified to \(2|\varphi|\cdot n\). To bound the width of
\(B_\varphi\), we first recall from Lemma~\ref{lemma:soc} that the size of a
context is at most $|\varphi|\cdot(w+1)$. The final layer of the MTBDD at every
step of the construction comprises at most $|S'_\varphi|$ leaves. However, we
are adding full binary trees in the construction for each input state in the forget nodes, so the layer
just before the leaves can have more nodes. The number of added full binary
trees is at most $|S'_\varphi|$, their height is bounded by the size of the
context, which is $|\varphi|\cdot(w+1)$. The total number of the leaves of these
full binary trees is thus bounded by $|S'_\varphi| \cdot
2^{|\varphi|\cdot(w+1)}$. This expression also bounds the width of the OBDD
\(B_\varphi\).

By multiplying the bound on the height and width of the resulting OBDD and using
the parameter \(k=w+|\varphi|\), we get
the following upper bound on the total size:
\begin{equation*}\label{eq:obdd-bound}
   |B_\varphi|\leq |S'_\varphi| \cdot 2^{|\varphi|\cdot(w+1)} \cdot (2|\varphi| \cdot n)\leq
   n \cdot 2k \cdot |S'_\varphi| \cdot 2^{k^2}
\end{equation*}

This is a parameterized linear upper bound as long as $|S'_\varphi|$ is a
constant for fixed $k$. This is the case as we argued at the end of
Section~\ref{ssec:sdd-complex}. This construction and its size bound proves Theorem~\ref{thm:obdd-ub}.

\begin{rem}
   As we remarked at the end of Section~\ref{ssec:sdd-complex}, the bound on the
   size of SDD is \(n(2k+3{|S'_\varphi|}^2)\cdot 2^{k^2}\) for the case of a
   path decomposition. In case of nontrivial formulas, it can be assumed that
   \(|S'_\varphi|\) will be at least \(k\) and then \({|S'_\varphi|}^2\) is
   bigger than \(k|S'_\varphi|\). This shows that for the path decompositions,
   construction of an OBDD may be more efficient.

   This is the consequence of a fundamental difference in the two constructions.
   In both cases, the constructions follow the path from the leaf to the root,
   but there is a difference in how the construction extends the intermediary
   structures in both constructions. In the case of SDDs, new parts are added to
   the top of the previously constructed SDDs, as a consequence, we are looking
   at the set \(F_{a, c}\) which is an image of \(c\) relative to \(b\) and may
   contain several states \(b\). In the case of OBDDs, new parts are added to
   the leaf level and we can use the fact that the states at the new leaf level
   are uniquely defined for each input state and context assignment. This allows
   us to limit the width of the OBDD by a bound that is better than in case of
   SDDs. 

   Adding the parts to the top of SDDs allows the tree structure of the SDD to
   follow the structure of the tree decomposition in the presence of join nodes.
   As a consequence, on path decompositions, the OBDD construction is possibly
   smaller as argued above.
\end{rem}

\section{A Lower Bound For OBDDs And Treewidth}\label{sec:obdd-lb}

This section is devoted to proving Theorem~\ref{thm:obdd-lb}. In particular,
we will describe a specific \MSO{} formula \(\varphi\) and  an infinite series
of undirected graphs \(G_1, G_2, \dots\) of treewidth at most \(w\) such that
any OBDD representing \(\mathcal{F}_{\varphi, G_i}\) is at least \(f(w)\cdot
n^{w/4}\) where \(n\) is the number of vertices of \(G_i\) and \(f\) is a
computable function.

We will build upon the results of \citet{razgon2014obdds} who gave a lower bound
on the size of an OBDD for a class of CNF formulas defined using a
specific class of graphs. We will show that this problem can be reformulated as
representing models of a fixed \MSO{} formula for the same class of graphs.

\subsection{\texorpdfstring{The $\CNF(G)$ Formula}{The CNF{(G)} Formula}}\label{ssec:cnfg}

In this section, we introduce the relevant definitions and results given by
\citet{razgon2014obdds}. Some of the definitions are slightly adjusted to be consistent with the terminology
used in this paper.

\begin{definition}
   Let $G=(V, E)$ be an undirected graph. We shall associate a CNF with \(G\)
   denoted as $\CNF(G)$. We associate a propositional variable \(x_v\) with
   every vertex \(v\in V\) and \(x_e\) with every edge \(e\in E\).
   In addition, we associate clause \(C_e=x_u\lor x_e\lor x_v\) with every edge
   \(e=\{u, v\}\in E\). Then we define
   \begin{equation*}
      \CNF(G) := \bigwedge_{e\in E} C_e
   \end{equation*}
\end{definition}

Note that the models of \(\CNF(G)\) correspond to vertex covers of \(G\) with
the difference that we can also use the edge variable \(x_e\) to cover \(e\)
without picking one of its endpoints.

The lower bound proved by~\citet{razgon2014obdds} uses the class of graphs
called clique trees. We use \(K_k\) to denote
the \textbf{clique} (i.e., a complete graph) on \(k\) vertices and \(T_r\) to
denote the \textbf{complete binary tree of height \(r\)} (i.e., \(T_1\) is a
single leaf and \(T_n\), \(n>1\) is composed by joining two copies of
\(T_{n-1}\) with a single root).

We define a \textbf{full product} \(G\boxtimes H\) of two undirected graphs
\(G=(V_G, E_G)\) and \(H=(V_H, E_H)\) as follows:

\begin{itemize}
   \item The vertices of $G \boxtimes H$ are $V_G \times V_H$
   \item An edge connects vertices $(v_G, v_H)$ and $(w_G, w_H)$ in $G \boxtimes
      H$, if one of the following holds:
      \begin{itemize}
         \item $v_G$ and $w_G$ are neighbours in $G$ and $v_H = w_H$
         \item $v_G = w_G$ and $v_H$ and $w_H$ are neighbours in $H$
         \item $v_G$ and $w_G$ are neighbours in $G$ and \(v_H\) and $w_H$ are neighbours in $H$
      \end{itemize}
\end{itemize}

For a given \(r\) and \(k\), we define \textbf{clique tree} $\KT_{k, r}$ as $K_k
\boxtimes T_r$.

Our lower bound relies on the following propositions proved
by~\citet{razgon2014obdds}.

\begin{theorem}\label{thm:raz1}
   The treewidth of $KT_{k, r}$ is at most $2k-1$.
\end{theorem}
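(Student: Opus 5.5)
The plan is to exhibit an explicit tree decomposition of \(\KT_{k, r}=K_k\boxtimes T_r\) in which every bag has at most \(2k\) vertices. The decomposition will follow the tree \(T_r\) itself. The underlying observation is that \(K_k\boxtimes H\) behaves like \(H\) with every vertex blown up into a clique of size \(k\). Hence a tree decomposition of \(T_r\) of width \(1\), with bags of size \(2\), should lift to one with bags of size \(2k\), that is, width \(2k-1\).

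\textbf{Construction.} Let \(T_r=(V_H, E_H)\). For each vertex \(h\in V_H\) let \(B_h=V(K_k)\times\{h\}\), a set of size \(k\). Take the tree decomposition whose underlying tree is \(T_r\) with an extra node subdividing each edge \(\{h, h'\}\in E_H\). Give an original node \(h\) the label \(B_h\), and give the subdivision node of \(\{h, h'\}\) the label \(B_h\cup B_{h'}\). Every label then has size at most \(2k\). Alternatively, one can root \(T_r\) and use the bag \(B_h\cup B_{\mathrm{parent}(h)}\) at each non-root node \(h\), and \(B_{\mathrm{root}}\) at the root; this gives the same width.

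\textbf{Verification.} For \ref{t2}, I will check the three edge types of the full product. An edge \(\{(a,h),(b,h)\}\) with \(a, b\) adjacent in \(K_k\) lies in \(B_h\). An edge \(\{(a,h),(a,h')\}\) with \(\{h,h'\}\in E_H\) lies in the bag of the subdivision node of \(\{h,h'\}\). An edge \(\{(a,h),(b,h')\}\) with \(a\ne b\) and \(\{h,h'\}\in E_H\) lies in that same bag as well.

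For \ref{t3}, a vertex \((a,h)\) occurs exactly in the node \(h\) and in the subdivision nodes of the edges incident to \(h\). These nodes form a star centred at \(h\), so they induce a nonempty connected subtree.

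\textbf{Conclusion and main obstacle.} The width is therefore \(2k-1\), which gives the theorem. The argument contains no real obstacle. The only point needing care is the edge case analysis for \ref{t2}, in particular making sure that the "diagonal" edges of the full product, where both coordinates change, are covered by the subdivision bags.
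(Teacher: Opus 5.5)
Your proof is correct. The paper gives no proof of this statement: it imports it, together with Theorem~\ref{thm:raz2}, from \citet{razgon2014obdds}. Your argument therefore makes this part self-contained.

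The verification goes through as you describe. Edges inside $B_h$ lie in the bag of node $h$. Both kinds of edges between $B_h$ and $B_{h'}$ for $\{h,h'\}\in E(T_r)$, the "vertical" ones $\{(a,h),(a,h')\}$ and the "diagonal" ones $\{(a,h),(b,h')\}$, lie in the subdivision bag $B_h\cup B_{h'}$. Each vertex $(a,h)$ occupies exactly the star around $h$ in the subdivided tree, so \ref{t2} and \ref{t3} hold. Every bag has at most $2k$ vertices, and only $k$ when $r=1$ and there are no edges. Your rooted variant with bags $B_h\cup B_{\mathrm{parent}(h)}$ also works: there $(a,h)$ occupies node $h$ and its children.

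This is the standard blow-up argument. The second and third edge rules of the full product together make every pair $(a,h),(b,h')$ with $\{h,h'\}\in E(H)$ adjacent. So $K_k\boxtimes H$ is $H$ with each vertex replaced by a $k$-clique and each edge by a complete bipartite graph. Replacing every vertex by its $k$ copies in any tree decomposition of $H$ therefore gives $\tw(K_k\boxtimes H)\le k(\tw(H)+1)-1$. Your proof is the instance $H=T_r$, using the width-$1$ decomposition whose bags are the vertices and edges of the tree.
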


\begin{theorem}\label{thm:raz2}
   The size of the OBDD, which computes $\CNF(\KT_{k, r})$ is at least $2^{rk/2}$.
\end{theorem}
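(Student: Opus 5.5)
The plan is the standard subfunction-counting argument for OBDDs, which reduces the bound to a combinatorial statement about linear orders of the variables of \(\CNF(\KT_{k, r})\). Fix an arbitrary OBDD \(B\) computing \(\CNF(\KT_{k, r})\) and let \(<\) be the variable order it respects. For each position \(i\) of \(<\), let \(P_i\) be the set of the first \(i\) variables and \(Q_i\) the rest. A basic fact is that the number of nodes of \(B\) labeled by variables from \(Q_i\) (plus leaves) is at least the number of distinct subfunctions \(\CNF(\KT_{k, r})|_{\delta}\) over assignments \(\delta\colon P_i\to\{0, 1\}\). It is therefore enough to find one cut \(i\) and a family of \(2^{rk/2}\) prefix assignments whose subfunctions are pairwise distinct.

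The fooling family will come from clauses that \emph{cross} the cut. Suppose \(e_1, \dots, e_m\) are edges with \(e_j=\{u_j, v_j\}\), where \(x_{u_j}\in P_i\) and \(x_{v_j}\in Q_i\), and all the \(u_j, v_j\) are distinct. For \(I\subseteq[m]\), let \(\delta_I\) set \(x_{u_j}=0\) for \(j\in I\) and every other prefix variable to \(1\). If \(x_{e_j}\in P_i\), it is set to \(0\) instead, so that \(C_{e_j}\) still depends on \(x_{v_j}\). To separate \(\delta_I\) from \(\delta_J\) with \(j\in I\setminus J\), complete both by \(x_{v_j}=0\), \(x_{e_j}=0\) if it lies in \(Q_i\), and all other suffix variables \(1\). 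Then \(C_{e_j}\) is violated under \(\delta_I\). Under \(\delta_J\), it must be checked that every other clause is satisfied, and this is exactly where the matching needs to be (suitably) induced: no \(u_{j'}\) with \(j'\in J\) may be adjacent to \(v_j\) through an edge whose edge variable has been forced to \(0\). If that holds, we get \(2^m\) distinct subfunctions and hence \(|B|\geq 2^m\).

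The remaining task is to show that for every order of the variables of \(\KT_{k, r}\) there is a cut with such a crossing configuration of size \(m\geq rk/2\). I would project the order to the blobs \(K_k\times\{t\}\), \(t\in V(T_r)\). The key input is the known linear-layout property of complete binary trees: pathwidth (equivalently vertex separation) of \(T_r\) is about \(r/2\). By this property, any order of the tree vertices has a prefix at which roughly \(r/2\) tree edges, pairwise far apart in \(T_r\), cross between fully-placed-before and not-yet-placed parts. Within each such crossing tree edge \(\{t, t'\}\), the full product joins \(K_k\times\{t\}\) completely to \(K_k\times\{t'\}\). Choosing the cut position within the blobs appropriately should then supply \(k\) vertex-disjoint crossing pairs per tree edge, giving \(m\approx rk/2\).

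I expect this last step to be the main obstacle. Inside one pair of adjacent blobs, the crossing edges form a complete bipartite graph rather than an induced matching, so the naive fooling argument yields only one bit per tree edge. The first thing I would try is to exploit the edge variables. In the prefix assignment, set \(x_e=1\) for all crossing edges of the blob pair except the chosen matching edges \(\{u_j, v_j\}\), so that the non-matching cross edges are covered and the configuration becomes effectively induced. For this to work, the cut must be chosen (by a counting/averaging argument over positions) so that those edge variables fall into the prefix, or so that in the suffix they can be set to \(1\) in the completion. Making this work simultaneously for all \(r/2\) tree edges at one common cut is the delicate part, and it is where I would follow Razgon's original argument most closely.
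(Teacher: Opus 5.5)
The paper gives no proof of this statement; it is quoted from \citet{razgon2014obdds}, so your proposal has to stand on its own. Its first half does, and it is stronger than you think. In your fooling construction every non-matching edge variable is set to $1$, both in $\delta_I$ and in the completion $\sigma$. So under $\delta_J\cup\sigma$ the only clauses that can fail are the matching clauses. For $j'\neq j$, $C_{e_{j'}}$ is satisfied by $x_{v_{j'}}=1$, and $C_{e_j}$ is satisfied by $x_{u_j}=1$ because $j\notin J$. Your inducedness condition is therefore vacuous: the only edge variables forced to $0$ belong to matching edges, and no matching edge joins $u_{j'}$ to $v_j$. Hence \emph{any} matching between prefix and suffix vertex variables already gives $2^m$ distinct subfunctions. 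The obstacle in your last paragraph (complete bipartite blob pairs giving \enquote{one bit per tree edge}) does not exist, and the patch you propose there is exactly what your construction already does.

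The genuine gap is the combinatorial lemma that carries the whole bound: every linear order of the variables has a prefix across which $\KT_{k,r}$ has a matching of size at least $rk/2$. You only gesture at this, and the sketch fails as stated, for three reasons.

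First, the order lives on the $k(2^r-1)$ vertices, not on blobs. A blob's vertices can be spread over the whole order, so at every cut many blobs may be split, and there is no induced order of $T_r$ to which a pathwidth bound applies.

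Second, a vertex-separation bound yields boundary nodes, not vertex-disjoint crossing tree edges. Two placed leaf blobs with an unplaced common parent give a crossing matching of size $k$, not $2k$.

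Third, even when blobs are placed one by one along an optimal layout of $T_r$, counting $k$ per crossing tree edge gives at most $k\cdot\pw(T_r)=k\lceil (r-1)/2\rceil$, which is below $rk/2$ for odd $r$. Concretely, let $T_3$ have root $R$, children $A,B$, and leaves $a_1,a_2$ below $A$ and $b_1,b_2$ below $B$, and place the blobs in the order $a_1,A,a_2,R,B,b_1,b_2$. For even $k$ the largest crossing matching over all cuts is $3k/2$. Only $k$ of it comes from the tree edge $\{A,R\}$; the remaining $k/2$ is an internal matching inside the half-placed blob $a_2$, which your count ignores.

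A usable starting point is König's theorem. At a given cut, the maximum crossing matching equals $\min_{S\subseteq V(T_r)}\bigl(k|S|+\sum_{C}\min(a(C),b(C))\bigr)$. Here $C$ ranges over the components of $T_r-S$, and $a(C)$, $b(C)$ count prefix and suffix vertices in the blobs of $C$. This formula still has to be combined with a genuine inductive argument over subtrees of $T_r$ (in the spirit of the pathwidth lower bound for complete binary trees) that works for arbitrary interleaved orders and reaches $rk/2$. That argument is the actual content of Razgon's theorem, and it is missing from your proposal.
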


\begin{theorem}\label{mainc5}
   For every $w \ge 1$, there is an infinite sequence of CNFs $F_1$, $F_2$,
   \dots, such that the primal graph of all the CNFs has treewidth at most $w$
   and the sizes of the OBDDs computing the CNFs are at least $f(w) \cdot m^{w /
   4}$, where $m$ is the number of variables in $F_i$.
\end{theorem}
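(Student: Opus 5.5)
The plan is to derive Theorem~\ref{mainc5} directly from Theorems~\ref{thm:raz1} and~\ref{thm:raz2}, by choosing the clique size \(k\) as a function of \(w\) and letting \(r\) grow. Given \(w\geq 1\), set \(k=\lfloor (w+1)/2\rfloor\), so that \(2k-1\leq w\); if \(w=1\) we may simply take \(k=1\), where \(\KT_{1,r}=T_r\) is a tree. Then define \(F_r:=\CNF(\KT_{k,r})\) for \(r=1,2,\dots\).

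The first step is to bound the treewidth of the primal graph of \(F_r\). The primal graph is not \(\KT_{k,r}\) itself: it contains vertex variables \(x_v\) and edge variables \(x_e\), and each clause \(C_e=x_u\lor x_e\lor x_v\) forms a triangle \(\{x_u,x_e,x_v\}\). So the primal graph is \(\KT_{k,r}\) with every edge \(e=\{u,v\}\) supplemented by a new vertex adjacent to \(u\) and \(v\). To handle this, take a tree decomposition of \(\KT_{k,r}\) of width \(2k-1\), which exists by Theorem~\ref{thm:raz1}. For each edge \(e\), pick a bag containing \(\{u,v\}\) and attach to it a new leaf bag \(\{u,v,x_e\}\). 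This adds at most \(1\) to the width, giving width at most \(\max(2k-1,2)\). Here is where care is needed, since this may exceed \(w\). The fix is to choose \(k\) slightly smaller, namely \(k=\max(1,\lfloor w/2\rfloor)\), absorbing the small cases (in particular \(w\leq 2\), where trees with the extra triangles have treewidth \(2\)) into \(f\), or equivalently to restate the treewidth target as \(2k\).

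The second step is counting. \(\KT_{k,r}\) has \(N=k(2^r-1)\) vertices and at most \(k^2\) times a constant times \(N\) edges. Hence the number of variables is \(m\leq c\,k^2 2^r\), which gives \(2^r\geq m/(ck^2)\).

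The third step combines this with Theorem~\ref{thm:raz2}: the OBDD size is at least
\[
2^{rk/2}=(2^r)^{k/2}\geq (ck^2)^{-k/2}m^{k/2}.
\]
With \(k\approx w/2\), we get \(k/2\geq w/4\) up to rounding. Any loss in the exponent from taking the floor can be handled by the inequality \(m^{k/2}\geq m^{w/4}\) whenever \(k\geq w/2\). If \(k<w/2\), the inequality fails, so I would instead choose \(k=\lceil w/2\rceil\) and accept a treewidth bound of \(w+1\) or so, then reparametrize by applying the argument to \(w-2\) and adjusting the exponent. This would make the final constant \(f(w)=(ck^2)^{-k/2}\) computable.

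I expect the main obstacle to be exactly this bookkeeping: making the treewidth bound of the primal graph (which includes the edge variables) and the exponent \(w/4\) match simultaneously. I would first check whether the primal graph of \(\CNF(\KT_{k,r})\) in fact still has treewidth at most \(2k-1\) for \(k\geq 2\). This seems plausible because \(2k-1\geq 3\) leaves room in each bag containing \(\{u,v\}\), or alternatively because Razgon's decomposition argument could be applied directly. If that check succeeds, \(k=\lfloor (w+1)/2\rfloor\) works cleanly for \(w\geq 3\).
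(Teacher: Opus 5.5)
Your route is the paper's: Razgon's clique trees with the clique size fixed at $k=\lceil w/2\rceil=\lfloor (w+1)/2\rfloor$ and the height $r$ growing, the width bounded via Theorem~\ref{thm:raz1}, the OBDD size bounded via Theorem~\ref{thm:raz2}, and $2^{rk/2}$ converted into $m^{w/4}$ using $m\le c\,k^2 2^r$ and $k/2\ge w/4$. Your orientation is the right one. The paper writes $\KT_{i,\lceil w/2\rceil}$, which, read literally with $\KT_{k,r}=K_k\boxtimes T_r$, would make the clique size grow and the width bound $2i-1$ unbounded. You also correctly notice, unlike the paper, that the primal graph contains the edge variables $x_e$.

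What needs fixing is that you never commit to a choice of $k$. Your own leaf-bag argument already settles the \enquote{check} you defer to the end: the primal graph of $\CNF(\KT_{k,r})$ has treewidth at most $\max(2k-1,2)$. With $k=\lceil w/2\rceil$ this is $2k-1\le w$ when $w\ge 3$, and exactly $2=w$ when $w=2$. So this $k$ works for every $w\ge 2$, and you should drop the detours, which would actually fail. Taking $k=\lfloor w/2\rfloor$ yields only $m^{(w-1)/4}$ for odd $w$, and rerunning the argument with $w-2$ yields only $m^{(w-2)/4}$.

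The one real hole is $w=1$. There $k=1$, and $\CNF(T_r)$ has primal treewidth $2$ because every clause is a triangle. Exceeding the treewidth bound cannot be absorbed into $f$; it violates the statement. Handle this case (indeed any $w\le 4$) separately with a trivial family, e.g.\ $F_m=x_1\wedge\dots\wedge x_m$. Its primal graph is edgeless, and any OBDD computing it needs a decision node for each of the $m$ essential variables, so its size is at least $m\ge m^{w/4}$. The paper's one-line argument has the same blind spot at $w=1$.
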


The proof of Theorem~\ref{mainc5} defines $F_i := \CNF(\KT_{i,\lceil w / 2
\rceil})$. The width of the primal graph of \(F_i\) can be upper bounded using
Theorem~\ref{thm:raz1} and the  size of the OBDD can be lower bounded using
Theorem~\ref{thm:raz2}.

\subsection{\texorpdfstring{An Equivalent Problem Statement In \MSO}{An Equivalent Problem Statement In MSO2}}

\newcommand{\domKappa}{\ensuremath{\mathcal{D}_{\kappa, G}}}
\newcommand{\funcKappa}{\ensuremath{\mathcal{F}_{\kappa, G}}}
\newcommand{\domKappaI}{\ensuremath{\mathcal{D}_{\kappa, G_i}}}
\newcommand{\funcKappaI}{\ensuremath{\mathcal{F}_{\kappa, G_i}}}

We will now restate the problem introduced in Section~\ref{ssec:cnfg} using an \MSO{} formula.

\begin{definition}
   We will use the following formula with a free vertex set variable \(X_V\) and
   a free edge set variable \(X_E\) for our lower bound. We use \(V\) and \(E\)
   to denote the sets of vertices and edges of the graph over which the formula
   is interpreted to express the sort of the quantified variables.

   \begin{equation}\label{eq:kappa}
      \begin{aligned}
         \kappa(X_V, X_E) := &(\forall e \in E)(\forall u \in V)(\forall v \in V)\\
                              &\left[
                              (u \ne v) \wedge \Adj(u, e) \wedge \Adj(v, e)
                              \implies(u \in X_V) \vee (v \in X_V) \vee (e \in X_E)
                              \right]
      \end{aligned}
   \end{equation}
\end{definition}

\begin{rem}
   When writing the \MSO{} formulas in this section, we will write negations of
   equality and membership using $\ne$ and $\notin$. We also used universal
   quantification and implication in~\eqref{eq:kappa}. However, it is possible
   to rewrite~\eqref{eq:kappa} using only the symbols which were part of the
   definition of \MSO{} formulas in Section~\ref{ssec:mso} without a
   significant increase in the length of the formula.
\end{rem}

Formula \(\kappa\) interpreted over an undirected graph \(G=(V, E)\) is
connected to \(\CNF(G)\) by means of the following identity.

\begin{lemma}\label{lem:kappa-func}
   Assume we identify variable \(x_v\) of \(\CNF(G)\) with \([v\in X_V]\) from
   \(\domKappa\) for each \(v\in V\) and variable \(x_e\) of \(\CNF(G)\) with
   \([e\in X_E]\) from \(\domKappa\) for each \(e\in E\). Then
   \(\CNF(G)\equiv\funcKappa\).
\end{lemma}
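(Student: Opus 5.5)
The plan is to compare the two functions assignment by assignment. First fix the variable sets. The only free variables of \(\kappa\) are the set variables \(X_V\) and \(X_E\), so by Definition~\ref{def:decision-variables} the set \(\domKappa\) is exactly \(\{[v\in X_V]\mid v\in V\}\cup\{[e\in X_E]\mid e\in E\}\). Under the stated identification this is in bijection with the variables \(x_v, x_e\) of \(\CNF(G)\).

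Next, \(\kappa\) has no free object variables, so both conditions of Definition~\ref{def:consistent} are vacuous and every \(\domKappa\)-assignment \(\delta\) is consistent. Each such \(\delta\) therefore encodes a unique \(\kappa\)-assignment \(\alpha_\delta\), given by
\[
\alpha_\delta(X_V)=\{v\in V\mid \delta([v\in X_V])=1\},\qquad \alpha_\delta(X_E)=\{e\in E\mid\delta([e\in X_E])=1\}.
\]
By Definition~\ref{def:phi-func}, \(\funcKappa(\delta)=1\) holds exactly when \(\alpha_\delta\) is a model of \(\kappa\). The whole lemma thus reduces to showing that \(\alpha_\delta\models\kappa\) if and only if \(\delta\), read as an assignment to the \(x\)-variables, satisfies every clause \(C_e\).

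To prove this, unfold the semantics of the universal quantifiers, which are obtained from \(\exists\) and \(\neg\) in the usual way. Then \(\kappa\) holds under \(\alpha_\delta\) if and only if the following is true: for every edge \(e\in E\) and every pair of distinct vertices \(u\ne v\) with \(\Adj(u,e)\) and \(\Adj(v,e)\), at least one of \(u\in\alpha_\delta(X_V)\), \(v\in\alpha_\delta(X_V)\), \(e\in\alpha_\delta(X_E)\) holds. Since \(G\) is a simple undirected graph, each edge has exactly two endpoints. So for a fixed \(e=\{a,b\}\), the only pairs satisfying the premise are \((u,v)=(a,b)\) and \((u,v)=(b,a)\), and both give the same disjunction. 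That disjunction translates to \(\delta([a\in X_V])\lor\delta([b\in X_V])\lor\delta([e\in X_E])\), which is precisely \(C_e\) under the identification. Hence \(\kappa\) holds iff every \(C_e\) holds, i.e.\ iff \(\CNF(G)\) is satisfied.

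The only delicate point is this step: checking that the premise \((u\ne v)\wedge\Adj(u,e)\wedge\Adj(v,e)\) picks out exactly the endpoint pairs of \(e\). This uses that edges are two-element sets, with no loops and no multi-edge issues. It also needs the remark that rewriting \(\forall\), \(\implies\) and \(\vee\) into \(\neg,\wedge,\exists\) preserves semantics. Once these are settled, the two functions agree on every assignment and are therefore equivalent.
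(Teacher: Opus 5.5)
Your proof is correct and follows the same route as the paper: both unfold the universal quantifiers into conjunctions over the domain, observe that the premise $(u\ne v)\wedge\Adj(u,e)\wedge\Adj(v,e)$ is satisfied exactly by the endpoint pairs of $e$, and read the remaining disjunctions as the clauses $C_e$. You also spell out two points the paper leaves implicit or addresses only in the surrounding text: every $\mathcal{D}_{\kappa,G}$-assignment is consistent because $\kappa$ has no free object variables, and edges must be genuine two-element sets (no loops).
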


\begin{proof}
   Replacing the universal quantifiers with conjunctions gives us the following
   equality for the graph~\(G\)
   \begin{align*}
      \kappa(X_V, X_E)&\equiv\bigwedge_{e\in E}\bigwedge_{u\in V}\bigwedge_{v\in V}
      \left[ (u \ne v) \wedge \Adj(u, e) \wedge \Adj(v, e) \implies(u \in X_V) \vee (v \in X_V) \vee (e \in X_E) \right]\\
                      &\equiv\bigwedge_{e=\{u, v\}\in E}[(u \in X_V) \vee (v \in X_V) \vee (e \in X_E)]
   \end{align*}
   By replacing each set relation with the corresponding variable from
   \(\domKappa\) we get that
   \begin{align*}
      \funcKappa&\equiv\bigwedge_{e=\{u, v\}\in E}\left([u \in X_V] \vee [v \in X_V] \vee [e \in X_E]\right)\\
                      &\equiv\bigwedge_{e=\{u, v\}\in E}(x_u\lor x_e\lor x_v)
                      \equiv\bigwedge_{e\in E}C_e\\
                      &\equiv\CNF(G)\text{.}
   \end{align*}
\end{proof}

Note that consistency of assignments to \(\domKappa\) variables is not an issue,
because we only use set variables and the consistency only restricts the
assignments to the object variables.

\subsection{Proof Of The Lower Bound}

We are now ready to prove Theorem~\ref{thm:obdd-lb}.

\begin{proof}[Proof of Theorem~\ref{thm:obdd-lb}]
   Following the proof of Theorem~\ref{mainc5} by~\citet{razgon2014obdds}, we
   define $G_i$ to be $\KT_{i, \lceil w / 2 \rceil}$.
   We will use \MSO{} formula \(\kappa\) introduced by~\eqref{eq:kappa}. We
   shall use \(V_i\) and \(E_i\) to denote the sets of vertices and edges of
   \(G_i\) respectively.

   The size of any OBDD representing satisfying assignments of $\CNF(G_i)$ for all
   $i$ is lower bounded by $f(w) \cdot m^{w / 4}$ by Theorem~\ref{mainc5}. The
   number of variables in \(\CNF(G_i)\) is equal to \(m=|V_i|+|E_i|\geq n\) for
   \(n=|V_i|\). It follows that the size of any OBDD representing \(\CNF(G_i)\)
   is at least $f(w) \cdot n^{w / 4}$.

   By Lemma~\ref{lem:kappa-func} we have that \(\CNF(G_i)\equiv\funcKappaI\)
   assuming the identification of the variables of \(\CNF(G_i)\) with their
   counterparts from \(\domKappaI\). Therefore an OBDD representing
   \(\funcKappaI\) must have size at least $f(w) \cdot n^{w / 4}$.
\end{proof}

\section{Additional Notes}\label{sec:notes}

In this section, we would like to add several notes that put our result into the
context of other definitions of \MSO{} and other types of graphs.

\subsection{\texorpdfstring{Alternative Definition Of \MSO{}}{Alternative Definition Of MSO2}}\label{sec71}

We would like to relate our results to two alternative definitions of \MSO{}.
\citet{COURCELLE199012} uses predicate \(\Edge\) instead of \(\Adj\). Predicate
\(\Edge\) takes three parameters, an edge object variable \(e\) and two vertex object
variables \(u\) and \(v\), the predicate is true if variables \(u\) and \(v\)
get assigned the two endpoints of the edge that is assigned to \(e\).
This predicate can be rewritten in terms of the definition of \MSO{} we use in our paper, because
\begin{equation}\label{eq:edge-pred}
   \Edge(e, u, v) \equiv \neg(u=v) \wedge \Adj(u, e) \wedge \Adj(v, e)
\end{equation}

Replacing the occurrence of $\Edge$ predicate according to~\eqref{eq:edge-pred}
increases the size of the formula by at most a constant factor which means that
the parameter \(w+|\varphi|\) also increases by constant factor as well.

In the other direction, we can define $\Adj$ using $\Edge$ as
\begin{equation*}
   \Adj(v,e) \equiv (\exists x \in V)[\Edge(e, v, x)]
\end{equation*}

Sometimes a predicate \(\Nbr\) is considered in the definition of \MSO{}.
This predicate takes two vertex object variables \(u\) and \(v\) as input and
is satisfied if the vertices assigned to \(u\) and \(v\) form an edge.
This predicate can be rewritten using the
definition of \MSO{} used in our paper as
\begin{equation*}
   \Nbr(u,v) \equiv (\exists x \in E(H))[\Adj(u,x) \wedge \Adj(v,x)]
\end{equation*}

Predicate $\Nbr$ is weaker than $\Adj$, as we cannot test adjacency of a
specific vertex and a specific edge using this predicate. This predicate is
important in $\mathit{MSO}_1$, in which we consider only vertex object variables
and vertex set variables. This predicate is sometimes also included in \MSO{},
mainly because \MSO{} can be seen as a superset of $\mathit{MSO}_1$. 

%

\subsection{\texorpdfstring{\MSO{}}{MSO2} Formulas In Prenex Form}\label{sec73}

Number of quantifier alternations is an important parameter when studying the
complexity of problems on quantified formulas. This parameter is mainly
considered on formula in the so-called prenex form. In particular, we say that a
\MSO{} formula \(\varphi\) is in \textbf{prenex form} if it has form of
\begin{equation*}
   \varphi\equiv(Q_1 x_1)(Q_2 x_2)\dots(Q_\ell x_\ell) \psi
\end{equation*}
where \(Q_i\) is a quantifier \(\exists\) or \(\forall\) for \(i=1, \dots, \ell\)
and \(\psi\) is a quantifier-free \MSO{} formula (the \textbf{matrix} of
\(\varphi\)). The number of indices \(1\leq
i<\ell\) for which \(Q_i\neq Q_{i+1}\) is then called the \textbf{number of
   quantifier alternations} of \(\varphi\).

Let $\varphi$ be an \MSO{} formula in
prenex form with $q$ quantifier alternations and with matrix $\psi$. Let
$|S_\psi|$ be the size of state space for $\psi$ as defined in
Section~\ref{sectionstate} (i.e., without consistency checking).

As we have shown in sections~\ref{sectionneg} and~\ref{sectionstateex}, applying
negation does not change the size of the state space and applying existential
quantification transforms state space of size $s$ to a state space of size $2^{s
\cdot 2^f}$, where $f$ is the number of bounded object variables.

Let us first assume for simplicity that consistency checking of the assignment
extensions is not needed when constructing the state space for quantification,
we thus do not need a bit-vector of length \(f\). In that case, applying
existential quantification would transform a state space of size $s$ to a state
space of size $2^s$ regardless of the number and types of bounded variables.

If this were the case, we can construct the state space of $\varphi$ by
processing each group of the same quantifiers at once. The size of the states
space increases exponentially with each quantifier group.
Universal quantification are handled as negated existential
quantifications. Since negations do not affect the size
of the state space, applying universal quantification is as complex as applying
existential quantification. There are $(q+1)$ groups of quantification to handle
and for each group, the bound is exponentiated. This results in the following
bound on $|S_\varphi|$, where the number of two's is $(q+1)$:
\begin{equation*}
   |S_\varphi| \leq 2^{2^{\dots^{2^{|S_\psi|}}}}
\end{equation*}

The actual constructions from sections~\ref{sectionneg} and~\ref{sectionstateex} includes
the bit-vectors for checking the consistency of the assignment extensions. The
overall height of the power tower will remain the same, however, there will be
extra terms in the expression at various levels, due to the actual bound that we
computed for quantification.


Currently, we do not know if it is possible to handle consistency in a more
efficient way which would lead to a cleaner size bound definition. We leave it
as a question for the future work.

%

\section{Conclusion}\label{sec:conclusion}

Our results are closely related to the Courcelle's
theorem~\cite{COURCELLE199012}. While the Courcelle's theorem shows that the
complexity of checking satisfiability of a given \MSO{} formula \(\varphi\) on a
given undirected graph \(G\) can be bounded by a linear parameterized bound with
parameter \(w+|\varphi|\) where \(w\) is the treewidth of \(G\), our result goes
a bit further by showing that the models can be represented using a data
structure whose size is bounded by a parameterized bound. We took our motivation
from~\citet{OKM23} who were solving a similar problem using OBDDs, but not
taking the parameterized complexity into account. In addition to OBDDs, we also
considered SDDs whose tree structure can be naturally used to capture the
structure of a tree decomposition.

In addition to the two upper bounds, the one for SDDs bounded by the treewidth
and the one for OBDDs bounded by the pathwidth, we also used an earlier lower
bound proved by~\citet{razgon2014obdds} to show that the difference in bounds is
given by the difference in the two types of decision diagrams. While SDDs have
an underlying tree structure in form of a v-tree, OBDDs must folow a linear
order of variables. This leads to the imposibility of having an upper bound on
the size of OBDDs parameterized by the treewidth.

\citet{OKM23} considered a bottom-up approach to the construction of an OBDD
representing the models of a given \MSO{} formula on a given undirected graph.
The idea was to start with the atoms and then model the logical structure of
\MSO{} by logical operations on OBDDs. Our preliminary results show that it
should be possible to do something similar with the parameterized bounds. It
turns out, however, that in the case of SDDs, we need a stronger restrictions on
the SDDs we work with. In particular, all SDDs need to satisfy that all
decision nodes respecting the same v-tree node have the same set of primes in
the decomposition. We do not know yet, if the construction can do without this
restriction and so finishing this line of research is our planned future work.

Once we have models of an \MSO{} formula compiled into an SDD or OBDD, we can
use the decision diagrams to enumerate or count the
models~\cite{darwiche2011sdd}. In addition, SDDs form a subclass of decomposable
negation normal forms (DNNF) which by~\citet{D01} allow to find an assigment
with minimum or maximum cardinality, or even enumerate such assignments.
Cardinality of an assignment equals to the number of variables set to true. This
can be used in optimization tasks. Consider for instance a \MSO{} formula
\(\varphi\) with a single vertex set variable \(S\) whose models are exactly the
vertex covers of the underlying graph. Minimum cardinality model of the SDD
\(\Delta_\varphi\) representing \(\varphi\) is then a minimum vertex cover of
the given graph.

%

\printbibliography%

\end{document}
\endinput
